\DeclarePairedDelimiter\ceil{\lceil}{\rceil}
\newcommand{\mycomment}[1]{}
\newcommand{\Exp}[1]{\mathbb{E}\left[{#1}\right]}
\newcommand{\pars}[1]{\left({#1}\right)}
\newcommand{\Prob}[1]{\mathbb{P}\left({#1}\right)}
\newcommand{\Ae}{A_{\epsilon, i}(\theta)}
\newcommand{\Aej}{A_{\epsilon, j}(\theta)}
\newcommand{\Sn}{S_{\epsilon, n}(\theta)}
\newcommand{\Rn}{R_{\epsilon, n}(\theta)}
\newcommand{\model}{\pi_{\theta}(y|x)}
\newcommand{\PP}[1]{\mathbb{P}\left({#1}\right)}
\newcommand{\EE}[1]{\mathbb{E}\left[{#1}\right]}
\newcommand{\cov}[1]{\mathrm{cov}\left({#1}\right)}
\newcommand{\heta}{\hat{\eta}_{\epsilon,n}(\theta)}
\theoremstyle{plain}
\newtheorem{theorem}{Theorem}[section]
\newtheorem{proposition}[theorem]{Proposition}
\newtheorem{lemma}[theorem]{Lemma}
\newtheorem{corollary}[theorem]{Corollary}
\theoremstyle{definition}
\theoremstyle{remark}
\newtheorem{assumption}{Assumption}
\title{Conformal Risk Minimization with Variance Reduction}
\author{
\begin{tabular}{ccc}
Sima Noorani & Orlando Romero & Nicolo Dal Fabbro \\
University of Pennsylvania & University of Pennsylvania & University of Pennsylvania \\
\texttt{nooranis@seas.upenn.edu} & \texttt{oromero@seas.upenn.edu} & \texttt{ndf96@seas.upenn.edu}
\end{tabular}
\\[3em] 
\begin{tabular}{cc}
Hamed Hassani & George Pappas \\
University of Pennsylvania & University of Pennsylvania \\
\texttt{hassani@seas.upenn.edu} & \texttt{pappasg@seas.upenn.edu}
\end{tabular}
}
\date{}
\begin{document}

\maketitle

\vskip 0.3in




\begin{abstract}
Conformal prediction {(CP)} is a distribution-free framework for achieving probabilistic guarantees on black-box models. {CP} is generally applied to a model post-training. Recent research efforts, on the other hand, have focused on optimizing CP efficiency \emph{during training}. We formalize this concept as the problem of \emph{conformal risk minimization} (CRM). In this direction, \emph{conformal training} (\texttt{ConfTr}) by~\citet{stutz2022learning} is a CRM technique that seeks to minimize the expected prediction set size of a model by simulating {CP} in-between training updates. 
In this paper, we provide a novel analysis for the \texttt{ConfTr} gradient estimation method, revealing a strong source of sample inefficiency 
that introduces training instability and limits its practical use. To address this challenge, we propose \emph{variance-reduced conformal training} \texttt{(VR-ConfTr)}, a CRM method that carefully incorporates a novel variance reduction technique in the gradient estimation of the \texttt{ConfTr} objective function. Through extensive experiments \footnote{We release our code at \href{https://github.com/nooranisima/conformal-risk-minimization-w-variance-reduction-code}{https://github.com/nooranisima/conformal-risk-minimization-w-variance-reduction-code}} on various benchmark datasets, we demonstrate that \texttt{VR-ConfTr} consistently achieves faster convergence and smaller prediction sets compared to baselines.
\end{abstract}

\section{Introduction}
Consider a classification task with input (features) $X\in\mathcal{X}$ and corresponding labels $Y\in \mathcal{Y}=\{1,\ldots,K\}$. Let~$\pi_\theta:\mathcal{X} \rightarrow \mathbb{R}^K$ be a parameterized predictor which, for every input $x$ and label $y$, approximates the posterior probability $\pi(y|x)=\PP{Y=y\,|\,X=x}$. Using $\pi_{\theta}$, we can estimate the label corresponding to an input $x$ as $\delta_\theta(x)=\arg\max_{y\in\mathcal{Y}}\pi_{\theta}(y|x)$. Usually, the performance of the predictor $\pi_\theta$ is assessed via the \emph{accuracy}, which is the portion of testing samples whose predicted label matches the true label. While the accuracy is a key performance metric, in safety-critical applications 
it is crucial not only to predict accurately but also to quantify the uncertainty associated with a prediction. To address this, conformal prediction (CP) \cite{vovk2005algorithmic,shafer2008tutorial, angelopoulos2023conformal}, uses the (pre-trained) model $\pi_\theta$
to construct, for an input $X$, a prediction set $C(X) \subseteq \mathcal{Y}$ that contains the true label with high probability, satisfying the desired \textit{coverage} guarantee. For example, the set $C(X)$ satisfies \textit{marginal coverage} with miscoverage rate $\alpha \in (0,1)$ if
$\PP{Y \in C(X)} \geq 1 - \alpha$. 

One way to evaluate the usefulness of prediction sets is the \emph{length efficiency} \citep{fontana2023conformal}, which represents a measure of their size. 
For instance, while it is possible to trivially guarantee any desired coverage by including the entire label space in $C(x)$, a prediction set constructed in this way is non-informative and useless. Thus, an efficient $C(x)$ is as small as possible while maintaining the desired coverage guarantee. Addressing the efficiency challenge by refining the CP set construction technique applied post-training, though effective, is \emph{inherently constrained} by the performance of the pre-trained model $\pi_\theta$. 
On the other hand, by integrating CP into the training process, recent research efforts~\cite{ dheur2024probabilistic, cherian2024large,einbinder2022traininguncertaintyawareclassifiersconformalized,stutz2022learning,bellotti2021optimized} guide the training of a model $\pi_\theta$ via CP-induced metrics, optimizing the model parameters $\theta$ to improve its \emph{inherent CP efficiency}. Here, we formalize this emerging optimization setting as the problem of \emph{conformal risk minimization} (CRM).
Among the existing CRM methods, 
the \emph{conformal training} (\texttt{ConfTr}) approach first introduced by \citet{stutz2022learning} is an intuitive technique - based on simulating CP during training to construct differentiable approximations of prediction sets - which is recently gaining momentum~\cite{cherian2024large,yan2024provably,wang2025enhancingtrustworthinessgraphneural}. 
%
%
However, despite its potential, \texttt{ConfTr} suffers from training instability and struggles to
converge \cite{stutz2022learning, liu2024cadapteradaptingdeepclassifiers, correia2024informationtheoreticperspectiveconformal}.
In this paper, we focus on the following questions:
\begin{center}
\emph{What is the source of training instability in \texttt{ConfTr}? \\How can we address this limitation?}
\end{center}
We provide answers to both questions. First, we theoretically show that 
%
%
\texttt{ConfTr} is intrinsically \emph{sample-inefficient}. 
%
%
Second, to address this limitation, we introduce \emph{variance-reduced conformal training} \texttt{VR-ConfTr}, a provably sample efficient CRM algorithm that radically improves gradient estimation and enhances training stability. 

\subsection{Contributions}
Our contributions can be summarized as follows:

\textbf{Analysis of the \texttt{ConfTr} algorithm.} Focusing on CRM for length efficiency optimization, we provide a novel analysis for the~\texttt{ConfTr}~\citep{stutz2022learning} method, which reveals a strong source of sample inefficiency in its gradient estimation technique. In particular, we show that the~\texttt{ConfTr} gradient variance \textit{is not reduced with the batch size}, and we show that this is related to the need for improved estimators of the quantile gradients.

\textbf{A ``plug-in" algorithm.} We introduce the pipeline of \emph{variance-reduced conformal training} (\texttt{VR-ConfTr}), our proposed algorithm to overcome this challenge, which (i) decouples the estimation of the population quantile and of its gradient, and (ii) leverages a ``plug-in" step to incorporate improved estimates of quantiles' gradients in the training.

\textbf{Novel variance reduction technique.} Building on a fundamental result, which characterizes the gradient of the population quantile as a conditional expectation, we propose a novel estimator for quantiles' gradients, which can be seamlessly integrated into \texttt{VR-ConfTr}. We show that, under reasonable assumptions, this integration makes \texttt{VR-ConfTr} provably \textit{sample-efficient}: unlike~\texttt{ConfTr}, our approach effectively reduces the variance of the resulting estimated gradients with the training batch size. 

\textbf{Empirical validations.} We extensively validate our method on various benchmark and real-world datasets, including MNIST, FMNIST, KMNIST, OrganAMNIST, and CIFAR10. Our results demonstrate that \texttt{VR-ConfTr} consistently and significantly improves the efficiency and stability of CRM for length efficiency optimization.

\textbf{Broad applicability.} Our approach and variance reduction technique can be integrated into any CRM method that requires quantile gradient estimation, at essentially no additional computational cost, extending its utility to a large class of CP frameworks and learning models.

\subsection{Related Works}
Conformal prediction (CP) is a distribution-free, principled framework that provides formal probabilistic guarantees for black-box models~\cite{vovk2005algorithmic,shafer2008tutorial, angelopoulos2023conformal}, with exemplar applications in computer vision \cite{angelopoulos2020uncertainty}, large language models \cite{mohri2024language,kumar2023conformal} and path {planning} \cite{lindemann2023safe}.
To improve CP efficiency, many research efforts have focused on approaches that apply CP post-training to black-box models. In particular, recent algorithmic developments address improving length efficiency through better \textbf{conformity score} design~\cite{romano2020classificationvalidadaptivecoverage, yang2024selectionaggregationconformalprediction, amoukou2023adaptiveconformalpredictionreweighting, deutschmann2024adaptive, luo2024weightedaggregationconformityscores}, or on designing better \textbf{calibration procedures}~\cite{kiyani2024lengthoptimizationconformalprediction,bai2022efficientdifferentiableconformalprediction,yang2021finite, colombo2020trainingconformalpredictors}.
These efforts do not fall under the CRM framework because they focus on learning low-dimensional hyper-parameters for pre-trained models as opposed to fully guiding the training of a $\theta$-parameterized model {$\model$}.

\textbf{Conformal risk minimization.} 
There is a growing body of work~\citep{einbinder2022traininguncertaintyawareclassifiersconformalized,cherian2024large,stutz2022learning,bellotti2021optimized, yan2024provably} integrating ideas from conformal prediction in order to directly train a model for improved CP. Among these, \texttt{ConfTr} proposed by \citet{stutz2022learning} has gained significant attention. This approach addresses length efficiency optimization by defining a loss function obtained by simulating conformal prediction during training. We will extensively describe and provide a novel analysis for this approach in the next section. Earlier work by \citet{bellotti2021optimized} considered an approach analogous to \texttt{ConfTr} in that the authors simulate conformal prediction during training. However, the algorithm provided by~\citet{bellotti2021optimized} treats the quantile-threshold as fixed and not as a function of the model parameters. It has been extensively shown by~\citet{stutz2022learning} that the approach by~\citet{bellotti2021optimized}  provides inferior performance with respect to \texttt{ConfTr}. Moreover, \citet{yan2024provably} use a similar training pipeline to \citet{stutz2022learning} in order to minimize the inefficiency of their proposed conformal predictor.  \citet{cherian2024large} train a score function, rather than a point predictor, subject to conditional coverage constraints \cite{gibbs2021adaptive}. \citet{einbinder2022traininguncertaintyawareclassifiersconformalized} utilizes conformal prediction insights in order to mitigate overconfidence in multi-class classifiers by minimizing a carefully designed loss function. 
Among these approaches, \texttt{ConfTr} is the method that has been applied the most. For example, \citet{ZHAO2025128704} investigate the use of the \texttt{ConfTr} loss function in the context of neural network pruning. Additionally, \citet{wang2025enhancingtrustworthinessgraphneural} have recently attempted to apply \texttt{ConfTr} to train Graph Neural Networks (GNNs).

\section{Problem Formulation}\label{sec:conformal}
Let us consider a (parameterized) model of logits $f_\theta: \mathcal{X} \to \mathbb{R}^K$ and let $\pi_\theta(x) = \mathrm{softmax}(f_\theta(x))$ denote the corresponding predicted probabilities. A central objective in conformal prediction is to use a given black box model $f_\theta$ to construct a \emph{set} predictor $C_\theta:\mathcal{X} \to 2^\mathcal{Y}$ in such a way that $C_\theta$ satisfies some form of probabilistic \emph{coverage} guarantee. In particular, $C_\theta(X)$ satisfies \emph{marginal} coverage if
\begin{equation}
    \PP{Y \in C_\theta(X)} \geq 1 -\alpha,
    \label{eq:marginalcoverage}
\end{equation}
for a user-specified miscoverage rate $\alpha \in (0,1)$. 

One common approach to achieve marginal coverage is via a \emph{thresholding} (\texttt{THR}) set predictor~\cite{vovk2005algorithmic}, $C_\theta(x;\tau) = \{y\in\mathcal{Y}: E_\theta(x, y) \geq \tau\}$ for some well-chosen threshold $\tau \in \mathbb{R}$ and \emph{conformity score} $E_\theta(x, y)$, which can be any heuristic notion of uncertainty regarding label $y$ upon input $x$ for the predictor $f_\theta(\cdot)$. Some choices for the conformity score include \emph{(i)} the predicted probabilities $E_\theta(x,y) = \pi_\theta(y|x) = [\pi_\theta(x)]_y$, \emph{(ii)} the logits $E_\theta(x,y) = [f_\theta(x)]_y$, and \emph{(iii)} the predicted log-probabilities $E_\theta(x,y) = \log\pi_\theta(y|x)$. 

If the distribution of  $Z = (X,Y)$ were known, we could achieve marginal coverage by setting 
$\tau = \tau(\theta)$ as the $\alpha$-quantile of the scalar random variable $E_\theta(Z)$, i.e. $\tau(\theta) = \inf\{t\in\mathbb{R}: \PP{E_\theta(Z) \leq t} \geq \alpha\}$. However since $Z$ is generally unknown in practice, we estimate $\tau(\theta)$ from samples $Z_1,\ldots,Z_n$ and use the empirical quantile $\hat{\tau}_n(\theta)$. If the data are \emph{exchangeable}, i.e., their joint distribution is invariant to permutations, then choosing $\hat{\tau}_n(\theta)$ as the empirical $\alpha$-quantile ensures that $C_\theta(x) := C_\theta(x,\hat{\tau}_n(\theta))$ satisfies the marginal coverage guarantee~\eqref{eq:marginalcoverage}. Specifically, \begin{equation}
\hat{\tau}_n(\theta) = E_{(\ceil{\alpha n})}(\theta),
\label{eq:empirical-quantile}
\end{equation}
where $E_{(1)}(\theta) \leq \ldots \leq E_{(n)}(\theta)$ denote the order statistics for $E_\theta(Z_1),\ldots,E_\theta(Z_n)$.

\subsection{Conformal Risk Minimization}
As we outlined in the introduction, recent research efforts have attempted to
combine training and CP into one, as opposed to using CP only as a post-training method. Here, we formalize this by borrowing terminology from statistical supervised learning and introducing the problem of \emph{conformal risk minimization} (CRM).
CRM can be understood as a framework for training a parameterized predictor that learns according to some CP efficiency metric. CRM can be formulated as follows:
\begin{equation*}
    \min_{\theta\in\Theta} \left\{L(\theta) := \EE{\ell(C_\theta(X), Y)}\right\}
\tag{CRM}
\label{eq:CRM}
\end{equation*}
for some \emph{conformal loss} $\ell$, where $C_\theta(x)$ is a \emph{conformalized} predictor. This problem is closely related to the \emph{conformal risk control} explored by~\citet{angelopoulos2022conformalriskcontrol}. More concretely, we will consider the threshold-based set predictor $C_\theta(x) := C_\theta(x; \tau(\theta)) = \{y\in\mathcal{Y}: E_\theta(x,y) \geq \tau(\theta)\}$.

One difficult aspect in solving~\eqref{eq:CRM} is its non-differentiability. We can remedy this by introducing a smoothed approximation of the problem. To do this, we can adopt measures similar to~\citet{stutz2022learning}. More precisely, we can first rewrite $\ell(C, y)$ as $\tilde{\ell}(\mathbf{C}, y)$, where $\mathbf{C} \in \{0,1\}^K$ is a vector of binary decision variables with $[\mathbf{C}]_k = 1_{k\in C}$. Then, assuming that $\ell(\mathbf{C},y)$ is well defined for every $\mathbf{C} \in [0,1]^K$, we can consider a smooth approximation of~\eqref{eq:CRM} by replacing $\ell(C_\theta(x;\tau(\theta)), y)$ with $\tilde{\ell}(\mathbf{C}_\theta(x;\tau(\theta)), y)$, where $[\mathbf{C}_\theta(x;\tau)]_k = 1_{E_\theta(x,y) - \tau \geq 0}$ is replaced with $[\mathbf{C}_\theta(x;\tau)]_k = \mathrm{sigmoid}\left(\frac{E_\theta(x,y) - \tau}{T}\right)$ for some temperature parameter $T>0$.

With this, we will focus on the following smoothed version of problem~\eqref{eq:CRM}:
\begin{equation}
    \min_{\theta\in\Theta} \left\{L(\theta):= h(\EE{\ell(\theta, \tau(\theta), X, Y)}) + R(\theta)\right\},
\tag{ConfTr-risk}
\label{eq:conftr-risk-2}
\end{equation}
for some monotone function $h(\cdot)$, loss $\ell(\theta,\tau,x,y)$, and regularizer $R(\theta)$. Recall also that $\tau(\theta) = \inf\{t\in\mathbb{R}: \PP{E_\theta(X,Y) \leq t} \geq \alpha\}$ for a given conformity score function $E_\theta(x,y)$.

Unlike the case of risk minimization problems, which can be understood as traditional stochastic optimization problems of the form $\theta\mapsto \EE{\ell(\theta, X, Y)}$, the problem~\eqref{eq:conftr-risk-2} does not lend itself to a trivially unbiased estimator of its gradient with variance decaying as $\mathcal{O}(1/n)$ when given $n$ i.i.d. samples from $(X,Y)$. The reason, aside from the monotone transform $h(\cdot)$, lies in the presence of $\tau(\theta)$. Unlike the traditional stochastic optimization problem, $\frac{\partial}{\partial\theta} \ell(\theta,\tau(\theta),X,Y)$ cannot be evaluated from a single realization of $(X,Y)$ since $\tau(\theta)$ and $\frac{\partial\tau}{\partial\theta}(\theta)$ are unknown due to the underlying distribution of $(X,Y)$ being unknown. However, these quantities can be estimated from data.

The quality of any such estimator directly affects the estimation $\widehat{\frac{\partial L}{\partial\theta}}(\theta)$ of $\frac{\partial L}{\partial\theta}(\theta)$ and, consequently, the performance of any gradient-based optimization algorithm used to approximately solve~\eqref{eq:conftr-risk-2}. Motivated by this, we aim to answer the following question:
\begin{center}
\emph{Can we design a gradient estimator for $L(\theta)$ that achieves arbitrarily small bias and (co)variance with sufficient samples? }
\end{center}

\section{Analysis of \texttt{ConfTr}~\cite{stutz2022learning}}
\label{sec:conftr-analysis}
\citet{stutz2022learning} introduced \textit{conformal training}~(\texttt{ConfTr}), which we categorize as a CRM approach for length efficiency optimization. In particular, \texttt{ConfTr} focuses on reducing \emph{inefficiency} of calibrated classifiers, quantified by the \emph{target size} of predicted sets. This can be understood as the problem in~\eqref{eq:CRM} with $\ell(C, y) = \max(0, |C| - \kappa)$ for some \emph{target size} $\kappa$ (intended to discourage no predictions at all) and with a $\log$ transform $h$ for numerical stability reasons. In this regard, it is worth noting that the earlier work of~\citet{sadinle2019least} was the first to study the closely related problem of \emph{least ambiguous} set-valued classifiers, which corresponds to $l(C,y) = |C|$. 

The underlying assumption, just as in any supervised learning task, is that the marginal distribution of $(X,Y)$ is unknown but that instead we can collect some i.i.d. training data $\mathcal{D} = \{(X_1,Y_1), \ldots, (X_n,Y_n)\}$. With this, an issue presents itself in that, unlike a typical loss function, we cannot evaluate $\frac{\partial}{\partial\theta}[\ell(\theta,\tau(\theta),X_i, Y_i)]$ from realizations of $X_i, Y_i$ alone, because $\tau(\theta) = \mathrm{quantile}_\alpha(E_\theta(X,Y))$ and $\frac{\partial\tau}{\partial\theta}(\theta)$ are functions of the \emph{distribution} of $(X,Y)$ and not a mere transformation. To resolve this issue,~\citet{stutz2022learning} propose their \texttt{ConfTr} algorithm, which randomly splits a given batch $B$ into two parts, which they refer to as \emph{calibration} batch $B_\mathrm{cal}$ and \emph{prediction} batch $B_\mathrm{pred}$. With this, the authors advocate for employing any smooth (differentiable) quantile estimator algorithm for $\tau(\theta)$ using the calibration batch. Then, they propose using this estimator to compute a sampled approximation of~\eqref{eq:conftr-risk-2}, replacing expectations by sample means constructed using the prediction batch. Let $\hat{L}(\theta)$ denote the end-to-end empirical approximation of $L(\theta)$. Once $\hat{L}(\theta)$ is constructed, the authors advocate for a (naive) risk minimization procedure where $\frac{\partial\hat{L}}{\partial\theta}(\theta)$ is computed and passed to an optimizer of choice.

\subsection{Sample-Inefficiency of \texttt{ConfTr}}
\label{sec:varAnaysisConfTr}

In this subsection, we will show that the approach used in~\cite{stutz2022learning} leads to an asymptotically unbiased gradient estimator, but its variance does not vanish. 






We will use $E_\theta(x,y)$ and $E(\theta,x,y)$ interchangeably and often write $(x,y)$ as~$z$.
To distinguish between partial and total differentiation, $\frac{\partial}{\partial\theta} \ell(\theta,\tau(\theta),x,y)$ denotes the Jacobian of $\theta \mapsto \ell(\theta,\tau(\theta),x,y)$ evaluated at $\theta$, whereas $\frac{\partial \ell}{\partial\theta}(\theta,\tau(\theta),x,y)$ denotes the Jacobian of $\ell(\cdot,\tau(\theta),x,y)$ evaluated at $\theta$. In particular, we have $\frac{\partial \ell}{\partial\theta}(\theta,\tau(\theta),x,y) = \frac{\partial}{\partial\theta'} \ell(\theta',\tau(\theta),x,y) \big\vert_{\theta'=\theta}$. Recall that $Z_i = (X_i,Y_i)$ are i.i.d. copies of $Z = (X,Y)$ and $E_{(1)}(\theta) \leq \ldots E_{(n)}(\theta)$ denote the order statistics for $E_\theta(Z_1),\ldots, E_\theta(Z_n)$. We make the following regularity assumptions:



\begin{assumption}
$E(\theta,x,y)$ is continuously differentiable and $M$-Lipschitz in $\theta$.
\label{ass:M-Lipschitz}
\end{assumption}

\begin{assumption}
$E_\theta(Z)$ and $\frac{\partial E}{\partial\theta}(\theta,Z)$ have a continuous joint probability density function.
\label{ass:continuous-joint-pdf}
\end{assumption}

\citet{stutz2022learning} consider \emph{smooth} quantile estimators based on smooth sorting. However, in the next proposition we argue that the empirical quantile does not need to be smoothed.

\begin{proposition}
$E_{(1)}(\theta), \ldots, E_{(n)}(\theta)$ are almost surely (a.s.) everywhere differentiable in $\theta$. In particular, the empirical quantile $\hat{\tau}_n(\theta) = E_{(\ceil{\alpha n})}(\theta)$ is a.s. everywhere differentiable.
\label{prop:a.s.-diff}
\end{proposition}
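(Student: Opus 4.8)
The plan is to argue that each order statistic is, with probability one, locally given by a single fixed-index function $E_\theta(Z_{i})$ on a neighborhood of any generic $\theta$, and that the bad set of parameters where two scores coincide (so the ordering can switch) is a.s. negligible. More precisely, fix a countable dense set $\Theta_0 \subseteq \Theta$ — or just work on a fixed $\theta$ and exploit that we only need differentiability almost everywhere — and consider the ``collision set'' $\mathcal{N}_{ij} = \{\theta : E_\theta(Z_i) = E_\theta(Z_j)\}$ for $i \neq j$. Under Assumption~\ref{ass:continuous-joint-pdf}, for each fixed $\theta$ the random variable $E_\theta(Z_i) - E_\theta(Z_j)$ has a density (being a difference of two coordinates of a jointly continuous pair, after integrating out the derivative coordinate), hence $\PP{E_\theta(Z_i) = E_\theta(Z_j)} = 0$. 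The first key step is to upgrade this pointwise statement to: almost surely, the set $\{\theta : \exists\, i\neq j,\ E_\theta(Z_i) = E_\theta(Z_j)\}$ has Lebesgue measure zero in $\Theta$. This follows from Fubini/Tonelli: $\EE{\mathrm{Leb}(\mathcal{N}_{ij})} = \int_\Theta \PP{E_\theta(Z_i) = E_\theta(Z_j)}\,d\theta = 0$, so $\mathrm{Leb}(\mathcal{N}_{ij}) = 0$ a.s., and a finite union over the $\binom{n}{2}$ pairs is still null.

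Next I would localize. Away from the null collision set, all $n$ values $E_\theta(Z_1), \ldots, E_\theta(Z_n)$ are distinct; by continuity of $\theta \mapsto E_\theta(Z_i)$ (Assumption~\ref{ass:M-Lipschitz} gives continuity, indeed $C^1$), the strict ordering persists on an open neighborhood, so on that neighborhood there is a fixed permutation $\sigma$ with $E_{(k)}(\theta') = E_\theta'(Z_{\sigma(k)})$ for all $k$ and all $\theta'$ nearby. Since $\theta \mapsto E_\theta(Z_{\sigma(k)})$ is continuously differentiable by Assumption~\ref{ass:M-Lipschitz}, each $E_{(k)}$ is differentiable at that $\theta$, with $\frac{\partial}{\partial\theta} E_{(k)}(\theta) = \frac{\partial E}{\partial\theta}(\theta, Z_{\sigma(k)})$. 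Hence every order statistic is differentiable at every $\theta$ outside an a.s.-null set, which is precisely a.s.-everywhere differentiability. Applying this to $k = \lceil \alpha n \rceil$ gives the claim about $\hat{\tau}_n(\theta)$.

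The main obstacle, and the point that deserves care, is justifying that $E_\theta(Z_i) - E_\theta(Z_j)$ genuinely has a (one-dimensional) density for each fixed $\theta$ from Assumption~\ref{ass:continuous-joint-pdf} as stated. The assumption posits a continuous joint density for the \emph{pair} $(E_\theta(Z), \frac{\partial E}{\partial\theta}(\theta, Z))$ for a single copy $Z$; one must deduce that $E_\theta(Z)$ itself is absolutely continuous (marginalize out the derivative coordinate — a continuous density on the product stays integrable, so the marginal density exists) and then that the difference of two independent copies is absolutely continuous (convolution of the density of $E_\theta(Z_i)$ with the reflected density of $E_\theta(Z_j)$). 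Both are standard, but should be spelled out so the $\PP{E_\theta(Z_i)=E_\theta(Z_j)}=0$ step is airtight; everything after that is the soft Fubini-plus-local-constancy argument sketched above. One should also remark that ties at the \emph{boundary} of persistence neighborhoods (the collision set itself) is exactly where order statistics can fail to be differentiable — e.g. $\min$ of two smooth crossing curves has a kink — which is why ``a.s. everywhere'' rather than ``everywhere'' is the right statement, and why the measure-zero bound on the collision set is the crux.
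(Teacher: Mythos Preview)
Your proposal is correct and uses essentially the same mechanism as the paper: away from ties in the scores, the sorting permutation is locally constant by continuity of $\theta\mapsto E_\theta(Z_i)$, so each order statistic locally coincides with a fixed $E_\theta(Z_{\sigma(k)})$ and inherits its differentiability. The paper simply fixes an arbitrary $\bar\theta$ and uses absolute continuity of $E_\theta(Z)$ to conclude that there are a.s. no ties at $\bar\theta$ (then lets $\bar\theta$ range), whereas you add a Fubini/Tonelli step to control the Lebesgue measure of the collision set in $\theta$-space---a minor refinement of the same argument rather than a different route.
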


The proof of this result relies on noting that while the sort function is not differentiable everywhere, it is nonetheless differentiable almost everywhere. More precisely, the sort function is piecewise linear, with the non-differentiable points corresponding to ties in variables that are to be sorted. Due to the assumed continuous distribution of the scores $E_\theta(Z_1),\ldots,E_\theta(Z_n)$, it follows that there are almost surely (a.s.) no ties, and therefore the order statistics are indeed differentiable (a.s.). Subsequently, the empirical quantile $\hat{\tau}_n(\theta)$ is a.s. differentiable (everywhere in $\theta$). Further, the smooth quantile $\hat{\tau}_n(\theta;\varepsilon)$ based on the smooth sorting method from~\citet{blondel2020fast} (which~\citet{stutz2022learning} re-implement in their own codebase) actually satisfies $\hat{\tau}_n(\theta;\varepsilon) = \hat{\tau}_n(\theta) = E_{(\ceil{\alpha n})}(\theta)$ as long as $\varepsilon > 0$ is small enough, to the order of $\min_{i\neq j} |E_\theta(Z_i) - E_\theta(Z_j)|$, as can be seen from Lemma~3 in~\cite{blondel2020fast}. 

We can now analyze the quality of the estimate $\widehat{\frac{\partial L}{\partial\theta}}(\theta)$ of $\frac{\partial L}{\partial\theta}(\theta)$ using~\texttt{ConfTr}, assuming for simplicity that the ``smooth quantile'' exactly coincides with the empirical sample quantile. More precisely,~\citet{stutz2022learning} propose to use a batch of $2n$ i.i.d. samples, which get split in two parts of equal size $n$: one part for ``calibration'' and the other for ``prediction.'' The calibration samples, as the name suggest, are used to estimate $\tau(\theta)$ as $\hat{\tau}_n(\theta)$ and nothing else, whereas the prediction samples are used to finish the estimation of $\frac{\partial L}{\partial\theta}(\theta)$ by merely replacing $\tau(\theta)$ with $\hat{\tau}_n(\theta)$ and replacing the expectations with sample means. This is formalized in Algorithm~\ref{alg:vr_conftr} with $\hat{\tau}(\theta) := E_{(\ceil{\alpha n})}(\theta)$ and $\widehat{\frac{\partial\tau}{\partial\theta}}(\theta) := \frac{\partial\hat{\tau}}{\partial\theta}(\theta) = \frac{\partial}{\partial\theta} E_{(\ceil{\alpha n})}(\theta)$.

To proceed with our analysis, let us first characterize the asymptotic behavior of $\frac{\partial\hat{\tau}_n}{\partial\theta}(\theta)$.


\begin{proposition}
Let $\hat{\tau}_n(\theta) = E_{(\ceil{\alpha n})}(\theta)$. Then,
\begin{equation}
    \frac{\partial \hat{\tau}_n}{\partial\theta}(\theta) \overset{\mathrm{dist}}{\longrightarrow} \frac{\partial E}{\partial\theta}(\theta,Z) \,\big|_{E_\theta(Z)=\tau(\theta)}
\end{equation}
as $n\to\infty$.
\label{thm:conftr-weakconvergence}
\end{proposition}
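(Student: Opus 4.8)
The plan is to identify which order statistic contributes to $\frac{\partial\hat\tau_n}{\partial\theta}(\theta)$ and then show that the score associated with that sample converges in distribution to a draw from $\frac{\partial E}{\partial\theta}(\theta,Z)$ conditioned on $E_\theta(Z) = \tau(\theta)$. First I would observe that, by Proposition~\ref{prop:a.s.-diff}, almost surely there are no ties among $E_\theta(Z_1),\ldots,E_\theta(Z_n)$, so there is a (random) permutation realizing the sort, and locally around $\theta$ the order statistic $E_{(\ceil{\alpha n})}(\theta)$ equals $E_\theta(Z_{\sigma(\ceil{\alpha n})})$ for a fixed index; differentiating gives
\begin{equation}
\frac{\partial\hat\tau_n}{\partial\theta}(\theta) = \frac{\partial E}{\partial\theta}(\theta, Z_{(\ceil{\alpha n})}),
\end{equation}
where $Z_{(k)}$ denotes the sample whose score is the $k$-th smallest. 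So the statement reduces to a claim about the joint behavior of $\big(E_\theta(Z_{(\ceil{\alpha n})}),\, \tfrac{\partial E}{\partial\theta}(\theta,Z_{(\ceil{\alpha n})})\big)$.

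Next I would invoke the classical fact that the $\ceil{\alpha n}$-th order statistic of an i.i.d. sample converges in probability (indeed a.s.) to the population $\alpha$-quantile $\tau(\theta)$, using Assumption~\ref{ass:continuous-joint-pdf} to ensure $E_\theta(Z)$ has a continuous density so that $\tau(\theta)$ is the unique solution of $\PP{E_\theta(Z)\le t}=\alpha$. The key step is then to transfer this to the \emph{pair}: I would argue that the conditional law of $\frac{\partial E}{\partial\theta}(\theta,Z)$ given $E_\theta(Z) = t$ is continuous in $t$ (again by Assumption~\ref{ass:continuous-joint-pdf}, which posits a continuous joint density for $(E_\theta(Z), \frac{\partial E}{\partial\theta}(\theta,Z))$, so the conditional density $p\big(\cdot \mid E_\theta(Z)=t\big)$ is jointly continuous and hence weakly continuous in $t$), and that conditioning on the event $\{E_\theta(Z_{(\ceil{\alpha n})}) \in \cdot\}$ — whose mass concentrates at $\tau(\theta)$ — the conditional distribution of the associated derivative vector converges weakly to the conditional law at $t=\tau(\theta)$. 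Concretely: for a bounded continuous test function $\varphi$, write
\begin{equation}
\EE{\varphi\!\left(\tfrac{\partial E}{\partial\theta}(\theta,Z_{(\ceil{\alpha n})})\right)} = \EE{g\big(E_\theta(Z_{(\ceil{\alpha n})})\big)}, \qquad g(t) := \EE{\varphi\!\left(\tfrac{\partial E}{\partial\theta}(\theta,Z)\right) \,\big|\, E_\theta(Z)=t},
\end{equation}
using the fact that $Z_{(k)}$ is, conditionally on its score value, distributed as $Z$ conditioned on that score value (exchangeability / the tower property over order statistics of i.i.d. data). Since $g$ is bounded and continuous at $\tau(\theta)$ and $E_\theta(Z_{(\ceil{\alpha n})}) \to \tau(\theta)$ in probability, the continuous mapping theorem (or dominated convergence) gives $\EE{g(E_\theta(Z_{(\ceil{\alpha n})}))} \to g(\tau(\theta)) = \EE{\varphi(\tfrac{\partial E}{\partial\theta}(\theta,Z)) \mid E_\theta(Z)=\tau(\theta)}$, which is exactly weak convergence to the claimed limit.

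The main obstacle I anticipate is making the conditional-distribution argument fully rigorous: one must justify that $Z_{(\ceil{\alpha n})}$, \emph{given} its score, has the right regular conditional distribution (this uses that the map $z \mapsto E_\theta(z)$ together with the i.i.d. structure makes the order statistics' "tags" exchangeable in the appropriate sense), and that the conditional expectation $g(t)$ is genuinely continuous at $t=\tau(\theta)$ — which is where Assumption~\ref{ass:continuous-joint-pdf} does the real work, since without a continuous joint density the conditional law could jump. A secondary technical point is the a.s. differentiability/local-constancy of the sorting permutation from Proposition~\ref{prop:a.s.-diff}, needed to validate the identification of $\frac{\partial\hat\tau_n}{\partial\theta}$ with a single sample's score gradient; this is already essentially handled by the cited proposition. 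Everything else — convergence of the order statistic to the quantile, the continuous mapping step — is standard.
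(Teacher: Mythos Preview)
Your proposal is correct and follows essentially the same route as the paper: both identify $\frac{\partial\hat\tau_n}{\partial\theta}(\theta)$ with $\frac{\partial E}{\partial\theta}(\theta,Z_{(\ceil{\alpha n})})$, use the tower property to rewrite $\EE{\varphi(\frac{\partial E}{\partial\theta}(\theta,Z_{(\ceil{\alpha n})}))}$ as $\EE{g(E_{(\ceil{\alpha n})}(\theta))}$ with $g(t)=\EE{\varphi(\tfrac{\partial E}{\partial\theta}(\theta,Z))\mid E_\theta(Z)=t}$, establish continuity of $g$ from Assumption~\ref{ass:continuous-joint-pdf} (the paper isolates this as a separate lemma), and conclude via convergence of the order statistic to $\tau(\theta)$ plus the continuous mapping theorem and Portmanteau. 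The ``main obstacle'' you flag---justifying that $Z_{(\ceil{\alpha n})}$ given its score has the law of $Z$ given that score---is precisely what the paper handles by an explicit exchangeability-plus-independence argument, so your instinct about where the work lies is exactly right.
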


Since, $Z = (X,Y)$ will typically be a high-dimensional random vector (due to $X$), $\frac{\partial E}{\partial\theta}(\theta,Z) \,\big|_{E_\theta(Z)=\tau(\theta)}$ will not be a constant. In particular, $\frac{\partial \hat{\tau}_n}{\partial\theta}(\theta)$ will fail to be a consistent estimator of $\frac{\partial\tau}{\partial\theta}(\theta)$. We can more specifically characterize the behavior of $\frac{\partial \hat{\tau}_n}{\partial\theta}(\theta)$ by first noting the following helpful result.



\begin{proposition}
\label{th:fundLemma}
Suppose that $X$ is absolutely continuous and $E_\theta(x,y)$ is continuously differentiable in $\theta$ and $x$. Then, for every $\theta \in \Theta$,
\begin{equation}\label{eq:fundEq}
    \frac{\partial\tau}{\partial\theta}(\theta) = \mathbb{E}\Bigg[\frac{\partial E}{\partial\theta}(\theta,X,Y) \,\Big\vert\, E_\theta(X,Y) = \tau(\theta) \Bigg].
\end{equation}
\end{proposition}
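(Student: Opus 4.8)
The plan is to differentiate the defining identity of the population quantile, $\PP{E_\theta(X,Y) \leq \tau(\theta)} = \alpha$, with respect to $\theta$ and solve for $\frac{\partial\tau}{\partial\theta}(\theta)$. Write $F(\theta, t) := \PP{E_\theta(X,Y) \leq t}$. Under Assumptions~\ref{ass:M-Lipschitz} and~\ref{ass:continuous-joint-pdf}, $E_\theta(Z)$ has a continuous density, so $F(\theta, \tau(\theta)) = \alpha$ holds with equality and $\tau(\theta)$ is the (unique) $\alpha$-quantile. Assuming $F$ is $C^1$ and $\partial F/\partial t > 0$ at $t = \tau(\theta)$ (this is where the continuous joint density enters), the implicit function theorem gives
\begin{equation*}
\frac{\partial\tau}{\partial\theta}(\theta) = -\frac{\partial F/\partial\theta (\theta, \tau(\theta))}{\partial F/\partial t (\theta, \tau(\theta))}.
\end{equation*}
The denominator is just $f_{E_\theta(Z)}(\tau(\theta))$, the density of the score evaluated at the quantile. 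The content of the proposition is therefore the identity
\begin{equation*}
-\frac{\partial}{\partial\theta}\PP{E_\theta(Z) \leq \tau(\theta)}\Big|_{\tau \text{ fixed}} = f_{E_\theta(Z)}(\tau(\theta)) \cdot \EE{\tfrac{\partial E}{\partial\theta}(\theta,Z) \,\big|\, E_\theta(Z) = \tau(\theta)}.
\end{equation*}

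To establish this, I would compute $\frac{\partial}{\partial\theta} F(\theta, t)$ for fixed $t$. Write $F(\theta,t) = \EE{1_{E_\theta(Z) \leq t}} = \EE{\EE{1_{E_\theta(X,Y)\leq t} \mid Y}}$ and condition on $Y$ to reduce to the absolutely continuous variable $X$; alternatively work directly with the joint law. The key manipulation is the coarea-type / divergence-theorem identity: differentiating an indicator of a sublevel set in a parameter produces a surface integral over the level set $\{E_\theta(x,y) = t\}$ weighted by the normal velocity. Concretely, for a family of sets $\{x : g_\theta(x) \leq t\}$ with $g$ smooth, $\frac{\partial}{\partial\theta}\int_{g_\theta(x)\leq t} p(x)\,dx = -\int_{g_\theta(x)=t} \frac{\partial g_\theta/\partial\theta}{\|\nabla_x g_\theta\|} p(x)\, d\sigma(x)$, and the surface integral, after normalizing by the total mass on the level set (which is precisely the density $f_{E_\theta(Z)}(t)$ by the coarea formula), is exactly the conditional expectation $\EE{\frac{\partial E}{\partial\theta}(\theta, Z) \mid E_\theta(Z) = t}$. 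Dividing through by $f_{E_\theta(Z)}(t)$ and specializing to $t = \tau(\theta)$ yields~\eqref{eq:fundEq}. A cleaner route that avoids explicit surface integrals: use the disintegration $f_{E_\theta(Z)}(t)\,\EE{\phi(Z)\mid E_\theta(Z)=t} = \frac{\partial}{\partial t}\EE{\phi(Z)\, 1_{E_\theta(Z)\leq t}}$ for nice $\phi$, apply it with $\phi = 1$ to get the density, and combine with the $\theta$-derivative computed by dominated convergence after smoothing the indicator — the temperature-$T$ sigmoid smoothing already present in the paper provides a natural regularization, and one takes $T \to 0$.

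The main obstacle is making the interchange of differentiation and integration (equivalently, the differentiation of the sublevel-set integral) rigorous: one must justify that $\frac{\partial}{\partial\theta}$ passes through the expectation of a discontinuous integrand, which genuinely requires the absolute continuity of $X$ and the continuous joint density of $(E_\theta(Z), \frac{\partial E}{\partial\theta}(\theta,Z))$ from Assumption~\ref{ass:continuous-joint-pdf} — without it the level set could carry positive mass or the conditional expectation could be ill-defined. The $M$-Lipschitz bound in Assumption~\ref{ass:M-Lipschitz} supplies the domination needed to apply dominated convergence uniformly in a neighborhood of $\theta$. I expect the bulk of the work to be in setting up a mollified indicator $1_{E_\theta(Z)\leq t} \approx \sigma((t - E_\theta(Z))/T)$, differentiating under the integral for $T > 0$ (routine, since the integrand is now $C^1$ with integrable derivative), recognizing the resulting expression as $\frac{1}{T}\EE{\sigma'(\cdot)\frac{\partial E}{\partial\theta}}$, and identifying its $T\to 0$ limit with the conditional-expectation-times-density via the assumed continuity of the joint density; the quantile-specialization and division by the density at the end are then immediate.
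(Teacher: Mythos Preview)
Your proposal is correct and follows essentially the same route as the paper: both start from the quantile identity $\PP{E_\theta(Z)\leq\tau(\theta)}=\alpha$, mollify the indicator (the paper uses the Gaussian CDF $\Phi(s/\sigma_n)$ with $\sigma_n\to 0$ rather than the sigmoid, a cosmetic choice), differentiate under the expectation via dominated convergence, and pass to the limit to recover the conditional expectation on the level set $\{E_\theta(Z)=\tau(\theta)\}$. The only organizational difference is that the paper differentiates the full identity with $\tau(\theta)$ treated as a function of $\theta$---so the density-like factor $H_n'(\cdot)$ appears on both sides and cancels directly---whereas you first isolate $\partial\tau/\partial\theta$ via the implicit function theorem and then compute $\partial F/\partial\theta$ and $\partial F/\partial t$ separately; the substance is the same.
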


In Appendix \ref{appendix:proofs}, we provide a rigorous proof for the above proposition, which was carried out independently from that of the equivalent result by~\citet{hong2009estimating} (i.e., Theorem 2). Using this proposition and leveraging Assumption~\ref{ass:M-Lipschitz}, we can establish that $\frac{\partial\hat{\tau}_n}{\partial\theta}(\theta)$ is asymptotically unbiased but its covariance matrix does not vanish as $n\to\infty$. The formal proof of this can be found in Corollary~\ref{app:corVar} in the appendix.





{


From Proposition~\ref{prop:a.s.-diff}, we have that, almost surely, $\theta \mapsto \ell(\theta,\hat{\tau}_n(\theta),Z)$ is differentiable. Therefore, it follows from the chain rule that
\begin{align}\label{eq:eq8}
    \frac{\partial}{\partial \theta} [\ell(\theta,\hat{\tau}_n(\theta), Z)] &= 
    \frac{\partial \ell}{\partial \theta}(\theta, \hat{\tau}_n(\theta), x, y) \\
    &\quad + \frac{\partial \ell}{\partial\tau}(\theta, \hat{\tau}_n(\theta), Z) 
    \frac{\partial\hat{\tau}_n}{\partial \theta}(\theta) \nonumber
\end{align}
holds almost surely. By inspection, we can see that the covariance of $\frac{\partial\hat{\tau}_n}{\partial \theta}(\theta)$ will bottleneck the covariance of~\eqref{eq:eq8}, and thus also the covariance of $\widehat{\frac{\partial L}{\partial\theta}}(\theta)$. We can formalize this in the following theorem.



\begin{theorem}
Suppose that $\ell(\cdot,\cdot,,z)$ is continuously differentiable. For the~\texttt{ConfTr} method, the estimator $\widehat{\frac{\partial L}{\partial\theta}}(\theta)$ converges weakly to a random vector that is not constant. If, in addition, $\ell(\cdot,\cdot,z)$ is $M$-Lipschitz and bounded by~$M$, then $\widehat{\frac{\partial L}{\partial\theta}}(\theta)$ is asymptotically unbiased but its covariance matrix does not vanish as $n\to\infty$.

\label{thm:bias-var-conftr}
\end{theorem}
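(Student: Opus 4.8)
The plan is to build up the conclusion from the already-established behavior of the quantile-gradient estimator $\frac{\partial\hat\tau_n}{\partial\theta}(\theta)$ and then transport that behavior through the chain rule identity~\eqref{eq:eq8} to $\widehat{\frac{\partial L}{\partial\theta}}(\theta)$. First I would recall, via Proposition~\ref{thm:conftr-weakconvergence}, that $\frac{\partial\hat\tau_n}{\partial\theta}(\theta) \overset{\mathrm{dist}}{\to} W := \frac{\partial E}{\partial\theta}(\theta,Z)\big|_{E_\theta(Z)=\tau(\theta)}$, and that by Proposition~\ref{th:fundLemma} we have $\EE{W} = \frac{\partial\tau}{\partial\theta}(\theta)$, while $W$ is genuinely non-degenerate because $Z$ (through $X$) is high-dimensional and Assumption~\ref{ass:continuous-joint-pdf} rules out $W$ being a.s. constant on the level set. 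Then I would write out $\widehat{\frac{\partial L}{\partial\theta}}(\theta)$ explicitly as the \texttt{ConfTr} estimator: it is $h'(\hat m_n)\cdot \frac{1}{n}\sum_{i\in B_{\mathrm{pred}}} \big[\frac{\partial\ell}{\partial\theta}(\theta,\hat\tau_n(\theta),Z_i) + \frac{\partial\ell}{\partial\tau}(\theta,\hat\tau_n(\theta),Z_i)\,\frac{\partial\hat\tau_n}{\partial\theta}(\theta)\big] + \frac{\partial R}{\partial\theta}(\theta)$, where $\hat m_n$ is the sample mean of $\ell(\theta,\hat\tau_n(\theta),Z_i)$ over the prediction batch and $\hat\tau_n$ is computed on the independent calibration batch.

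For the weak-convergence claim, I would argue as follows. On the prediction batch the terms $\frac{\partial\ell}{\partial\theta}(\theta,\hat\tau_n,Z_i)$ and $\frac{\partial\ell}{\partial\tau}(\theta,\hat\tau_n,Z_i)$, being sample averages of i.i.d. quantities whose common mean is continuous in $\tau$ and evaluated at $\hat\tau_n(\theta) \to \tau(\theta)$ a.s. (a standard consequence of consistency of the empirical $\alpha$-quantile under the continuity of Assumption~\ref{ass:continuous-joint-pdf}), converge in probability to the deterministic limits $\EE{\frac{\partial\ell}{\partial\theta}(\theta,\tau(\theta),Z)}$ and $c(\theta) := \EE{\frac{\partial\ell}{\partial\tau}(\theta,\tau(\theta),Z)}$ respectively; similarly $\hat m_n \to \EE{\ell(\theta,\tau(\theta),Z)}$ so $h'(\hat m_n) \to h'(\EE{\ell(\theta,\tau(\theta),Z)})$ by continuity of $h'$. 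The only term that does not collapse to a constant is $\frac{\partial\hat\tau_n}{\partial\theta}(\theta)$, which converges in distribution to $W$. Since the calibration and prediction batches are independent, $\frac{\partial\hat\tau_n}{\partial\theta}(\theta)$ is independent of the prediction-batch averages; combining Slutsky's theorem with this independence, $\widehat{\frac{\partial L}{\partial\theta}}(\theta)$ converges weakly to $h'(\EE{\ell(\theta,\tau(\theta),Z)})\big(\EE{\frac{\partial\ell}{\partial\theta}(\theta,\tau(\theta),Z)} + c(\theta)\, W\big) + \frac{\partial R}{\partial\theta}(\theta)$, which is an affine image of $W$ and hence non-constant provided $c(\theta)\neq 0$ (this is the generic case; in the degenerate case $c(\theta)=0$ the $\tau$-dependence drops and there is nothing to estimate, which I would note as a trivial exception, or alternatively absorb it into the hypotheses).

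For the asymptotic unbiasedness and non-vanishing covariance, I would add the boundedness/Lipschitz hypotheses. $M$-Lipschitzness of $\ell$ in $(\theta,\tau)$ gives a uniform bound $M$ on $\frac{\partial\ell}{\partial\theta}$ and $\frac{\partial\ell}{\partial\tau}$, and Assumption~\ref{ass:M-Lipschitz} gives $\big|\frac{\partial\hat\tau_n}{\partial\theta}(\theta)\big| = \big|\frac{\partial E}{\partial\theta}(\theta,Z_{(\ceil{\alpha n})})\big| \le M$ pointwise; with $\ell$ and $h'$ bounded as well, the whole estimator $\widehat{\frac{\partial L}{\partial\theta}}(\theta)$ is uniformly bounded. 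Boundedness upgrades the weak convergence to convergence of all moments (uniform integrability), so $\EE{\widehat{\frac{\partial L}{\partial\theta}}(\theta)} \to \EE{\text{limit}}$, and I would check that this limiting mean equals $\frac{\partial L}{\partial\theta}(\theta)$: using $\EE{W} = \frac{\partial\tau}{\partial\theta}(\theta)$ from Proposition~\ref{th:fundLemma} and the fact that $\frac{\partial L}{\partial\theta}(\theta) = h'(\EE{\ell})\big(\EE{\frac{\partial\ell}{\partial\theta}} + c(\theta)\frac{\partial\tau}{\partial\theta}(\theta)\big) + \frac{\partial R}{\partial\theta}(\theta)$ by the chain rule applied to~\eqref{eq:conftr-risk-2}. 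Likewise uniform integrability of squares gives $\cov{\widehat{\frac{\partial L}{\partial\theta}}(\theta)} \to \cov{\text{limit}} = c(\theta)^2\, h'(\EE{\ell})^2\, \cov{W}$, which is nonzero because $\cov{W} \neq 0$ (the non-degeneracy from Proposition~\ref{thm:conftr-weakconvergence} and the discussion following it) and $c(\theta)\neq 0$.

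\textbf{Main obstacle.} The delicate point is justifying that the prediction-batch sample averages $\frac1n\sum \frac{\partial\ell}{\partial\tau}(\theta,\hat\tau_n(\theta),Z_i)$ — evaluated at the \emph{random, data-dependent} threshold $\hat\tau_n(\theta)$ — converge in probability to $c(\theta)$, i.e. interchanging ``plug in $\hat\tau_n$'' with ``take the limit,'' and similarly for $\hat m_n$. This needs a uniform (in $\tau$, over a neighborhood of $\tau(\theta)$) law of large numbers for the class $\{z\mapsto \frac{\partial\ell}{\partial\tau}(\theta,\tau,z)\}$, which follows from continuity of $\ell$ together with the boundedness (or a dominating envelope) plus $\hat\tau_n(\theta)\to\tau(\theta)$; I would invoke a standard Glivenko--Cantelli / stochastic-equicontinuity argument here, since $\ell(\cdot,\cdot,z)$ being $C^1$ with bounded derivatives makes the family Lipschitz in $\tau$ uniformly in $z$, which is more than enough. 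The rest is Slutsky, independence of the two batches, and uniform integrability from the boundedness hypothesis — all routine once this uniform LLN is in place.
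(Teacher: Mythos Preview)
Your proposal is correct and follows essentially the same approach as the paper: the paper's proof is a two-sentence sketch invoking the continuous mapping theorem together with $\hat\tau_n(\theta)\overset{\mathrm{a.s.}}{\to}\tau(\theta)$ for the weak-convergence part, and uniform integrability of all terms in the plug-in expression for the moment-convergence part, which is exactly what you spell out in detail via Slutsky plus bounded-envelope arguments. Your explicit identification of the uniform-LLN issue for the plug-in averages at the random threshold, and of the implicit non-degeneracy assumption $c(\theta)\neq 0$, are points the paper's sketch glosses over; both are handled by the hypotheses as you indicate.
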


The first part of the theorem follows from the continuous mapping theorem and by noting that $\hat{\tau}_n(\theta) \overset{\mathrm{a.s.}}{\longrightarrow} \tau(\theta)$. The second part follows by noting that all the terms in line 7 of Algorithm~\ref{alg:vr_conftr} are uniformly integrable.

The takeaways of the analysis in this section are that $\frac{\partial\hat{\tau}_n}{\partial\theta}(\theta)$ is a poor estimator of $\frac{\partial\tau}{\partial\theta}(\theta)$, \textit{even though $\hat{\tau}(\theta)$ is an effective estimator of $\tau(\theta)$}. This in turn causes \textit{the overall gradient's variance to be} $\Omega(1)$, because, as we show in Theorem~\ref{thm:bias-var-conftr}, the bias and covariance of $\frac{\partial\hat{\tau}_n}{\partial\theta}(\theta)$ get inherited by $\widehat{\frac{\partial L}{\partial\theta}}(\theta)$.
This motivates our proposed solution addressing the gradient estimation issue of \texttt{ConfTr}, which we present next.

}

\begin{figure*}[ht]
\centering
\setlength{\textfloatsep}{0pt}     
\includegraphics[width=1\linewidth]{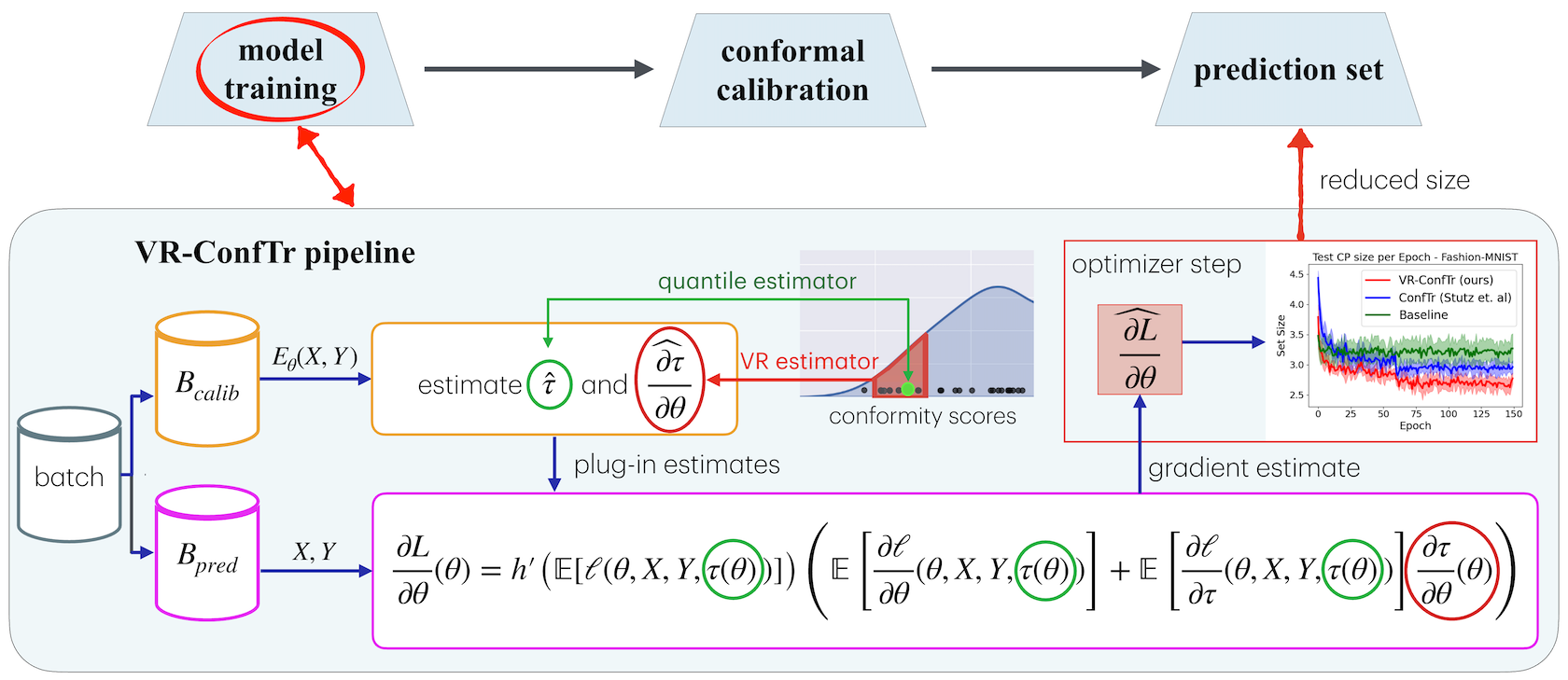} 
\caption{In this figure, we illustrate the \texttt{VR-ConfTr} pipeline and position it with respect to a typical CP procedure.}
\label{fig:conftr-pipeline}
\end{figure*}

\section{Variance-Reduced Conformal Training}
\label{sec:vr-conftr}

In order to surpass the shortcoming of \texttt{ConfTr} described in the previous section, let us first note that the gradient of the conformal risk~\eqref{eq:conftr-risk-2} can be written as
\begin{equation}\label{eq:gradConfTr}
    \frac{\partial L}{\partial\theta}(\theta) = h'\left(\mathbb{E}\big[\ell(\theta,\tau(\theta),Z)\big]\right)
    \times \Bigg(
    \mathbb{E}\left[\frac{\partial\ell}{\partial\theta}(\theta,\tau(\theta),Z)\right] + \mathbb{E}\left[\frac{\partial\ell}{\partial\tau}(\theta,\tau(\theta),Z)\right]
    \frac{\partial\tau}{\partial\theta}(\theta)\Bigg),
\end{equation}
where $h'$ denotes the derivative of $h$, $Z = (X,Y)$. Note that we dropped the regularizer for simplicity. 

\textbf{Discussion:} Inspecting equation~\eqref{eq:gradConfTr}, we note that, in order to estimate the gradient $\frac{\partial L}{\partial\theta}(\theta)$, the estimator $\widehat{\frac{\partial\tau}{\partial\theta}}(\theta)$ of the population quantile gradient $\frac{\partial\tau}{\partial\theta}(\theta)$ \textit{does not necessarily need to be the gradient of the sample quantile} $\hat{\tau}(\theta)$, which was the strategy adopted by the \texttt{ConfTr} method. As we show in section~\ref{sec:varAnaysisConfTr}, this naive choice, which sets $\widehat{\frac{\partial\tau}{\partial\theta}}(\theta) = \frac{\partial\hat{\tau}}{\partial\theta}(\theta)$, results in a \textit{highly sample-inefficient gradient estimator}. With this in mind, we will now leverage the structure of the population quantile gradient established in~\eqref{eq:fundEq} to design a novel estimator for $\frac{\partial\tau}{\partial\theta}(\theta)$. 
Then, in section~\ref{sec:algorithm}, we illustrate the pipeline of our proposed \texttt{VR-ConfTr} algorithm, showing how we can integrate the new estimator in computing the estimate $\widehat{\frac{\partial L}{\partial\theta}}$ of $\frac{\partial L}{\partial\theta}(\theta)$. 

\subsection{Quantile Gradient Estimation}\label{sec:estimation}
We will now use the relationship established in Proposition~\ref{th:fundLemma} to design a novel estimator of the quantile gradient. The idea is as follows: let us denote
\begin{equation}
\begin{aligned}
\eta_\varepsilon(\theta) &:= \EE{\frac{\partial E}{\partial \theta}(\theta, X,Y)\,\big\vert\, A_\varepsilon(\theta)}, \\ \Sigma_\varepsilon(\theta) &:= \cov{\frac{\partial E}{\partial \theta}(\theta, X,Y)\,\big\vert\, A_\varepsilon(\theta)},
\end{aligned}\label{eq:eta_Sigma_eps}
\end{equation}
for $\varepsilon>0$, 
where $A_\varepsilon(\theta) := \{|E_\theta(X,Y) - \tau(\theta)| \leq \varepsilon\}$. Note that the term $\eta_\varepsilon(\theta)$ in~\eqref{eq:eta_Sigma_eps} is approximately equal to the population quantile gradient $\frac{\partial\tau}{\partial\theta}(\theta)$ if $\varepsilon \approx 0$. 
Based on this, assuming that we have a good estimate $\hat\tau(\theta)$ of the population quantile $\tau(\theta)$ available, we can estimate $\eta(\theta) = \frac{\partial\tau}{\partial\theta}(\theta)$, 
using $\eta_\varepsilon(\theta)$ via the following $\varepsilon$-threshold strategy:
\begin{equation}
    \hat{\eta}(\theta) := \frac{1}{\sum_{i=1}^n 1_{\hat{A}_{\varepsilon, i}(\theta)}}\sum_{i=1}^n 1_{\hat{A}_{\varepsilon, i}(\theta)}\frac{\partial E}{\partial\theta}(\theta,X_i, Y_i),
    \label{eq:eta_hat_thresh}
\end{equation}
using i.i.d. samples $(X_1,Y_1),\ldots,(X_n,Y_n)$, where $\hat{A}_{\varepsilon,i}(\theta) = \{|E_\theta(X_i,Y_i) - \hat{\tau}(\theta)| \leq \varepsilon\}$. In other words, given a batch of $n$ samples, the estimator $\hat{\eta}(\theta)$ in~\eqref{eq:eta_hat_thresh} is computed as the average of $m = \sum_{i=1}^n 1_{\hat{A}_{\varepsilon,i}(\theta)}$ conformity scores' gradients, corresponding to the samples whose conformity scores are $\varepsilon$-close to the estimated population quantile $\hat\tau(\theta)$.

\textbf{Tuning $\varepsilon$ with $m$-ranking.} In practice, a good value of $\varepsilon$ may depend on batch and parameter $\theta$, and it could significantly change for each iteration of the optimization process. Furthermore, fine-tuning a time-varying value of a scalar quantity $\varepsilon>0$ can be challenging, and we therefore need some other heuristic way to tune $\varepsilon$ adaptively
. 
One intuitive way to do this, that we adopt in our experiments, is to fix an integer $m$, sort the samples based on the distances $\{\big|E_\theta(X_i,Y_i) - \hat{\tau}(\theta)\big|\}_{i=1,\ldots,n}$, and then average the ``top'' $m$ samples (smallest distances) to estimate the quantile gradient. This strategy is equivalent to selecting $\varepsilon = \inf\left\{\varepsilon' > 0: \sum_{i=1}^n 1_{\hat{A}_{\varepsilon',i}(\theta)} \geq m\right\}$, and it is very practical since it only requires to fine-tune an integer $m$.
\subsection{Proposed Algorithm: \texttt{VR-confTr}}\label{sec:algorithm}

Suppose that the variance-reduced estimator for $\frac{\partial\tau}{\partial\theta}(\theta)$ has been already designed. Then, the new estimate for $\tau(\theta)$ and $\frac{\partial\tau}{\partial\theta}(\theta)$ \textit{can be plugged into expression}~\eqref{eq:gradConfTr} for the gradient of the conformal training risk function, before approximating the expectation by sample means, leading to the \emph{plug-in} estimator for $\frac{\partial L}{\partial\theta}(\theta)$. Naturally, the plug-in gradient estimator is then passed through an optimizer in order to approximately solve~\eqref{eq:CRM}. Our proposed pipeline, which we call \emph{variance-reduced conformal training} (\texttt{VR-ConfTr}) algorithm, paired with our novel estimator in~\eqref{eq:eta_hat_thresh}, constitutes our main contribution and proposed solution to improve the sample inefficiency of~\texttt{ConfTr}. The critical step of constructing the plug-in estimator is summarized in Algorithm~\ref{alg:vr_conftr} and the entire pipeline is illustrated in Figure~\ref{fig:conftr-pipeline}. Similar to Theorem~\ref{thm:bias-var-conftr}, we can establish that the bounds on the bias and covariance of $\widehat{\frac{\partial\tau}{\partial\theta}}(\theta)$ get inherited by $\widehat{\frac{\partial L}{\partial\theta}}(\theta)$, assuming that $\hat{\tau}(\theta) \overset{\mathrm{a.s.}}{\longrightarrow} \tau(\theta)$.

\begin{algorithm}[t]
\caption{Gradient estimator of \texttt{VR-ConfTr}}\label{alg:vr_conftr}
\begin{algorithmic}[1]
\REQUIRE \,\\
batch $B = \{(X_1,Y_1),\ldots, (X_{2n}, Y_{2n})\}$ of i.i.d. samples from $(X,Y)$, \\
score function $E(\theta,x,y):\Theta\times\mathcal{X}\times\mathcal{Y}\to\mathbb{R}$,\\
conformal loss $\ell(\theta,x,y,\tau):\Theta\times\mathcal{X}\times\mathcal{Y}\times\mathbb{R} \to\mathbb{R}$,\\
monotone transformation $\mathcal{F}:\mathbb{R}\to\mathbb{R}$,\\
estimator $\hat{\tau}(\cdot)$ for $\tau(\theta) = Q_\alpha(E_\theta(X,Y))$,\\
estimator $\widehat{\frac{\partial{\tau}}{\partial \theta}}(\cdot)$ for $\frac{\partial \tau}{\partial \theta}(\theta)$.\\
\ENSURE output an estimate $\widehat{\frac{\partial L}{\partial\theta}}$ of the gradient $\frac{\partial L}{\partial \theta}(\theta)$ of the conformal training risk~\eqref{eq:conftr-risk-2}   
\STATE partition $B$ into $\{B_{\mathrm{cal}}, B_\mathrm{pred}\}$, with $|B_{\mathrm{cal}}| = |B_{\mathrm{pred}}| = n$.
\STATE $\hat{\tau} \gets \hat{\tau}(B_\mathrm{cal})$ $\hfill$// estimate $\tau(\theta)$ using { $B_{\mathrm{cal}}$}
\STATE $\widehat{\frac{\partial\tau}{\partial\theta}} \gets\widehat{\frac{\partial\tau}{\partial\theta}}(B_\mathrm{cal})$ $\hfill$// estimate $\frac{\partial\tau}{\partial\theta}(\theta)$ using {$B_{\mathrm{cal}}$}\vspace{0.1cm}
\STATE $\hat{\ell} \gets \frac{1}{|B_\mathrm{pred}|}\sum_{(x,y)\in B_\mathrm{pred}}\ell(\theta, x, y, \hat{\tau})$
\STATE $\widehat{\frac{\partial\ell}{\partial\theta}} \gets \frac{1}{|B_\mathrm{pred}|}\sum_{(x,y)\in B_\mathrm{pred}}\frac{\partial\ell}{\partial\theta}(\theta, x, y, \hat{\tau})$
\STATE $\widehat{\frac{\partial\ell}{\partial\tau}} \gets \frac{1}{|B_\mathrm{pred}|}\sum_{(x,y)\in B_\mathrm{pred}}\frac{\partial\ell}{\partial\tau}(\theta, x, y, \hat{\tau})$
\STATE $\widehat{\frac{\partial L}{\partial\theta}} \gets {{h}'(\hat{\ell})}\left(\widehat{\frac{\partial\ell}{\partial\theta}} + \widehat{\frac{\partial\ell}{\partial\tau}}\widehat{\frac{\partial\tau}{\partial\theta}} \right) + \frac{\partial R}{\partial\theta}(\theta)$ $\hfill$// ``\textbf{plug-in}'' 
\end{algorithmic}
\end{algorithm}

\begin{table}[htp]
\centering
\begin{tabular}{@{}lllccc@{}}
\toprule
\textbf{Dataset} & \textbf{Model} & \textbf{Algorithm} & \textbf{Accuracy (Avg ± Std)} & \textbf{Size (Avg ± Std) (\%)} \\ 
\midrule
\multirow{3}{*}{MNIST} & \multirow{3}{*}{Linear} & Baseline & $0.887 \pm 0.004$ & $4.122 \pm 0.127$ (+12\%) \\ 
& & ConfTr~\cite{stutz2022learning}  & $0.842 \pm 0.141$ & $3.990 \pm 0.730$ (+8\%) \\ 
& & VR-ConfTr \textbf{(ours)} &  $0.886 \pm 0.071$ & $\mathbf{3.688 \pm 0.350}$ \\ 
\cmidrule{1-5}
\multirow{3}{*}{Fashion-MNIST} & \multirow{3}{*}{MLP} & Baseline & $0.845 \pm 0.002$ & $3.218 \pm 0.048$ (+15\%) \\ 
& & ConfTr~\cite{stutz2022learning}   & $0.799 \pm 0.065$ & $3.048 \pm 0.201$ (+9\%) \\ 
& & VR-ConfTr \textbf{(ours)}  & $0.839 \pm 0.043$ & $\mathbf{2.795 \pm 0.154}$ \\ 
\cmidrule{1-5}
\multirow{3}{*}{Kuzushiji-MNIST} & \multirow{3}{*}{MLP} & Baseline & $0.872 \pm 0.046$ & $4.982 \pm 0.530$ (+6\%) \\ 
& & ConfTr~\cite{stutz2022learning}   & $0.783 \pm 0.125$ & $4.762 \pm 0.226$ (+2\%) \\ 
& & VR-ConfTr \textbf{(ours)}  & $0.835 \pm 0.098$ & $\mathbf{4.657 \pm 0.680}$ \\ 
\cmidrule{1-5}
\multirow{3}{*}{OrganA-MNIST} &  \multirow{3}{*}{ResNet-18} & Baseline & $0.552 \pm 0.017$ & $4.823 \pm 0.748$ (+2\%) \\ 
& & ConfTr~\cite{stutz2022learning}   & $0.526 \pm 0.047$ & $6.362 \pm 0.857$ (+33\%) \\ 
& & VR-ConfTr \textbf{(ours)}  &  $0.547 \pm 0.021$ & $\mathbf{4.776 \pm 1.178}$ \\ 
\cmidrule{1-5}
\multirow{3}{*}{CIFAR-10} & \multirow{3}{*}{ResNet-20} & Baseline & $0.743 \pm 0.003$ & $2.881 \pm 0.038$ (+12\%) \\ 
& & ConfTr~\cite{stutz2022learning}   & $0.733 \pm 0.051$ & $2.806 \pm 0.389$ (+10\%) \\ 
& & VR-ConfTr \textbf{(ours)} & $0.742 \pm 0.023$ & $\mathbf{2.508 \pm 0.039}$ \\ 
\bottomrule
\end{tabular}
\caption{Summary of evaluation results. For \texttt{VR-ConfTr}, we show in percentage the average set size (\textbf{Size (Avg ± Std) (\%)}) improvement against \texttt{ConfTr} by~\cite{stutz2022learning}. The third column presents the average accuracy and its standard deviation (\textbf{Accuracy (Avg ± Std)}). The confidence level $\alpha$ used for conformal prediction is fixed at 0.01 for all datasets except CIFAR-10, where it is set to 0.1.}
\label{tab:results_summary}
\end{table}

\subsection{Sample-Efficiency of \texttt{VR-ConfTr}}

We now provide an analysis for the $\varepsilon$-estimator $\hat{\eta}(\theta)$~\eqref{eq:eta_hat_thresh} of the population quantile gradient, establishing its bias-variance trade-off in the following theorem.

\begin{theorem}[]\label{th:VR_theorem} Fix $\theta$ and $\varepsilon>0$. Let $\hat{\eta}(\theta)$ be the gradient estimator defined in~\eqref{eq:eta_hat_thresh}. Then, the bias and variance of the estimator can be characterized as follows:
\begin{alignat*}{2}
(i)&\hspace{0.7cm}\EE{\hat{\eta}(\theta)}&&= (1-\left[q_{\varepsilon}(\theta)\right]^n)\eta_{\varepsilon}(\theta)\\
(ii)&\hspace{0.5cm}\cov{\hat{\eta}(\theta)}&&\preceq \frac{2\Sigma_{\varepsilon}(\theta)}{p_{\varepsilon}(\theta) n} + \left[q_{\varepsilon}(\theta)\right]^n\eta_{\varepsilon}(\theta)\eta_{\varepsilon}^\mathsf{T}(\theta)
,
\end{alignat*}    
where $p_{\varepsilon}(\theta) = \mathbb{P}(A_{\varepsilon, i}(\theta))$ and $q_{\varepsilon}(\theta) = 1-p_{\varepsilon}(\theta)$.
\end{theorem}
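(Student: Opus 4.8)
The plan is to condition on the random count $m = \sum_{i=1}^n 1_{\Ae}$ of samples falling in the band $\Ae = \{|E_\theta(X_i,Y_i) - \tau(\theta)| \le \varepsilon\}$ and exploit the fact that, conditioned on $\{m = k\}$ with $k \ge 1$, the estimator $\hat\eta(\theta)$ is an average of $k$ i.i.d.\ draws of $\frac{\partial E}{\partial\theta}(\theta,X,Y)$ from the conditional law given $\Ae$; on the event $\{m=0\}$ the estimator is (by convention) zero. Since $m \sim \mathrm{Binomial}(n, p_\varepsilon(\theta))$, we have $\PP{m=0} = q_\varepsilon(\theta)^n$. For part (i), by the tower rule, $\EE{\hat\eta(\theta)} = \EE{\EE{\hat\eta(\theta)\mid m}} = \sum_{k=1}^n \PP{m=k}\,\eta_\varepsilon(\theta) = (1 - q_\varepsilon(\theta)^n)\,\eta_\varepsilon(\theta)$, since the conditional mean of each average is $\eta_\varepsilon(\theta)$ regardless of $k \ge 1$. (This is the place where one must be careful that conditioning on $\{m=k\}$ does not distort the conditional distribution of the selected $\frac{\partial E}{\partial\theta}$ values — by exchangeability and independence it does not.)

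For part (ii), I would use the law of total covariance, $\cov{\hat\eta(\theta)} = \EE{\cov{\hat\eta(\theta)\mid m}} + \cov{\EE{\hat\eta(\theta)\mid m}}$. For the first term, on $\{m = k \ge 1\}$ the conditional covariance is $\Sigma_\varepsilon(\theta)/k$, and on $\{m=0\}$ it is zero, so $\EE{\cov{\hat\eta(\theta)\mid m}} = \Sigma_\varepsilon(\theta)\,\EE{\tfrac{1}{m}1_{m\ge 1}}$. For the second term, $\EE{\hat\eta(\theta)\mid m} = 1_{m \ge 1}\,\eta_\varepsilon(\theta)$, a Bernoulli-type random vector, so $\cov{\EE{\hat\eta(\theta)\mid m}} = \mathrm{Var}(1_{m\ge 1})\,\eta_\varepsilon(\theta)\eta_\varepsilon^\mathsf{T}(\theta) = q_\varepsilon(\theta)^n(1 - q_\varepsilon(\theta)^n)\,\eta_\varepsilon(\theta)\eta_\varepsilon^\mathsf{T}(\theta) \preceq q_\varepsilon(\theta)^n\,\eta_\varepsilon(\theta)\eta_\varepsilon^\mathsf{T}(\theta)$, which supplies the second term in the claimed bound. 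It remains to show $\EE{\tfrac{1}{m}1_{m\ge 1}} \le \tfrac{2}{p_\varepsilon(\theta)\,n}$ for $m \sim \mathrm{Binomial}(n,p_\varepsilon(\theta))$.

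The main obstacle is precisely this bound on the negative-moment $\EE{\tfrac{1}{m}1_{m\ge 1}}$ of a binomial. The standard route is the identity/inequality $\EE{\tfrac{1}{m}1_{m\ge 1}} \le \tfrac{2}{(n+1)p_\varepsilon(\theta)}$ (see, e.g., the classical bound of \citet{}, or derive it directly): one writes $\EE{\tfrac{1}{m+1}} = \tfrac{1}{(n+1)p}\big(1 - (1-p)^{n+1}\big) \le \tfrac{1}{(n+1)p}$ exactly, and then controls $\EE{\tfrac{1}{m}1_{m\ge1}}$ by $\EE{\tfrac{2}{m+1}1_{m\ge1}} \le 2\,\EE{\tfrac{1}{m+1}}$, using $\tfrac{1}{m} \le \tfrac{2}{m+1}$ for all integers $m \ge 1$. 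Combining, $\EE{\tfrac{1}{m}1_{m\ge1}} \le \tfrac{2}{(n+1)p_\varepsilon(\theta)} \le \tfrac{2}{n\,p_\varepsilon(\theta)}$, and plugging into the total-covariance decomposition yields $\cov{\hat\eta(\theta)} \preceq \tfrac{2\Sigma_\varepsilon(\theta)}{p_\varepsilon(\theta)n} + q_\varepsilon(\theta)^n\,\eta_\varepsilon(\theta)\eta_\varepsilon^\mathsf{T}(\theta)$, as desired. One subtlety to flag: the estimator as written in~\eqref{eq:eta_hat_thresh} uses $\hat\tau(\theta)$ rather than $\tau(\theta)$ in defining $\hat A_{\varepsilon,i}$, but the theorem statement fixes $\theta$ and works with $A_{\varepsilon,i}(\theta)$ (true quantile), so I would state the result for the idealized band $A_{\varepsilon,i}(\theta)$ and note that the plug-in version matches it when $\hat\tau(\theta) = \tau(\theta)$, deferring the effect of quantile estimation error to the surrounding discussion.
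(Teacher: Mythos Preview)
Your proposal is correct and follows the same overall conditioning strategy as the paper (both condition on which samples fall in the $\varepsilon$-band and use that the selected gradients are then conditionally i.i.d.\ from the law given $A_\varepsilon$), but your execution differs in two places worth noting. First, you invoke the law of total covariance directly, whereas the paper computes $\EE{\hat\eta\hat\eta^\top}$ by summing over all subsets $S \subseteq [n]$ and then subtracts the squared mean; both routes land on the same decomposition $\Sigma_\varepsilon \cdot \EE{\tfrac{1}{m}1_{m\ge 1}} + q^n(1-q^n)\,\eta_\varepsilon\eta_\varepsilon^\top$. Second, and more substantively, the paper bounds the binomial negative moment by setting $a_n = n\,\EE{\tfrac{1}{m}1_{m\ge1}}$, deriving the recursion $a_{n+1} \le q\,a_n + (1+q)$ from the identity $\EE{\tfrac{1}{m+1}} = \tfrac{1-q^{n+1}}{(n+1)p}$, and solving it to obtain $\EE{\tfrac{1}{m}1_{m\ge1}} \le \tfrac{2-p}{pn}$. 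Your one-line route via $\tfrac{1}{m} \le \tfrac{2}{m+1}$ for $m\ge1$ combined with that same closed-form identity gives $\tfrac{2}{(n+1)p} \le \tfrac{2}{pn}$ and is more elementary, avoiding the recursion entirely. Your handling of the $\hat\tau$ versus $\tau$ subtlety also matches the paper exactly: the proof there likewise assumes $\hat\tau(\theta) = \tau(\theta)$, justified by almost-sure convergence of the empirical quantile.
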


The main takeaway of result $(i)$ is that $\hat{\eta}(\theta)$ is an \emph{asymptotically unbiased} estimator of $\eta_{\varepsilon}(\theta)$, but not $\eta(\theta)$. However, by definition we also have $\eta_\varepsilon(\theta) \approx \eta(\theta)$ for $\varepsilon \approx 0$. The second result $(ii)$, instead, shows that \emph{variance reduction} is obtained by the proposed estimator, when compared to the naive estimator $\frac{\partial\hat{\tau}}{\partial\theta}(\theta)$. For large $n$, the variance reduction is proportional to $p_\varepsilon(\theta)n$, which is equal to the (expected) proportion of samples that are used in the estimator. More precisely, the variance of the estimator is $\mathcal{O}\left(\frac{1}{p_\varepsilon(\theta)n}\right)$ as $\varepsilon\to 0$ or $n\to\infty$. A key takeaway of $(i)$ and $(ii)$ is the explicit characterization of the \emph{bias-variance trade-off} as a function of the threshold $\varepsilon>0$ and of the batch size $n$: for a given batch size $n$, a larger $\varepsilon$ increases the expected amount of samples used by the estimator, reducing its variance. However, larger $\varepsilon$ also increases the bias of the estimator
towards the unconditional expectation $\EE{\frac{\partial E}{\partial\theta}(\theta,X,Y)}$. 


\section{Experiments}
We evaluate \texttt{VR-ConfTr} against (i) a baseline model trained with cross-entropy loss (\texttt{Baseline}), and (ii) \texttt{ConfTr}. We perform experiments across benchmark datasets - MNIST~\cite{deng2012mnist}, Fashion-MNIST~\cite{FMNIST}, Kuzushiji-MNIST~\cite{clanuwat2018deep}, CIFAR10 ~\cite{krizhevsky2009learning}, and a healthcare dataset comprising abdominal computed tomography scans, OrganAMNIST~\cite{medmnistv2}. One of the main performance metrics that we consider is the \emph{length-efficiency} of the conformal prediction sets produced by applying a standard CP procedure to the trained model. Other relevant metrics are the convergence speed of the algorithm, as well as the final accuracy of the model. To tune the \texttt{VR-ConfTr} $\varepsilon$-estimator for the quantile gradient 
in~\eqref{eq:eta_hat_thresh}, we employ the $m$-ranking method ~\ref{sec:estimation}, and investigate multiple choices for $m$, detailed in Appendix~\ref{appendix:m_tuning_experiments}. We provide extensive details about the training settings, the adopted model architectures and hyper-parameters in Appendix~\ref{appendix:experiments}.
Next, we summarize results from evaluating the trained model, reporting average \textit{accuracy} and CP \textit{length efficiency} over multiple runs. In Section~\ref{sec:speed} we compare \texttt{Vr-ConfTr} and \texttt{ConfTr} illustrating curves of relevant evaluation metrics, highlighting the improved training performance and variance reduction.

\subsection{Summary of Evaluation Results}
Table \ref{tab:results_summary} presents the CP set size resulting from CP procedure applied post-training, and the accuracy of the trained model for each dataset.

The metrics in Table~\ref{tab:results_summary} are averaged over 5-10 training trials,
with details on trial variations in Appendix~\ref{appendix:experiments}.
%
For a fair comparison, we used the same model architecture for all methods (\texttt{ConfTr}, \texttt{VR-ConfTr}, \texttt{Baseline}) and identical hyper-parameters for \texttt{ConfTr} and \texttt{VR-ConfTr}. 
For post-training CP, we use the standard \texttt{THR} method with the corresponding $\alpha$. Average set sizes are reported over 10 calibration-test splits. The main takeaway from Table~\ref{tab:results_summary} is that \texttt{VR-ConfTr} improves over all considered metrics compared to \texttt{ConfTr}.
\texttt{VR-ConfTr} consistently achieves smaller prediction set sizes than \texttt{ConfTr} and \texttt{Baseline}. Important to note that our focus is not to optimize \texttt{ConfTr} but to demonstrate that \texttt{VR-ConfTr} improves performance and stability regardless of \texttt{ConfTr}'s performance or hyperparameters. As noted by~\citet{stutz2022learning}, \texttt{Baseline} sometimes achieves slightly higher accuracy than \texttt{ConfTr} and \texttt{VR-ConfTr}, though \texttt{VR-ConfTr} consistently outperforms \texttt{ConfTr}. However, the focus of conformal training is to improve CP efficiency by reducing prediction set sizes while maintaining comparable accuracy to non-CRM methods, not surpassing \texttt{Baseline} in accuracy.
\subsection{On the Training Performance of \texttt{VR-ConfTr}}\label{sec:speed}
Here, we focus on the training performance of \texttt{VR-ConfTr}, with special attention to the speed in minimizing the conformal training loss described in section~\ref{sec:conformal}, and in minimizing the CP set sizes on test data. The results illustrate the evolution of the different metrics across epochs, validate the beneficial effect of the variance reduction technique and the superior performance of \texttt{VR-ConfTr} when compared to the competing \texttt{ConfTr} by~\citet{stutz2022learning}. 
\begin{figure}[htbp]
\centering
    \includegraphics[width=0.24\textwidth]{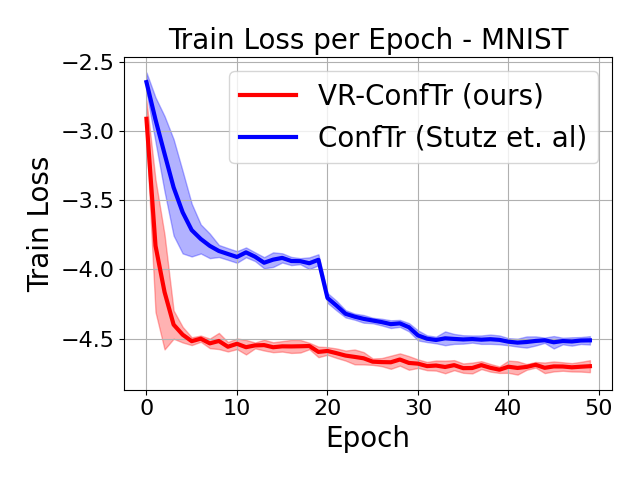}
    \includegraphics[width=0.24\textwidth]{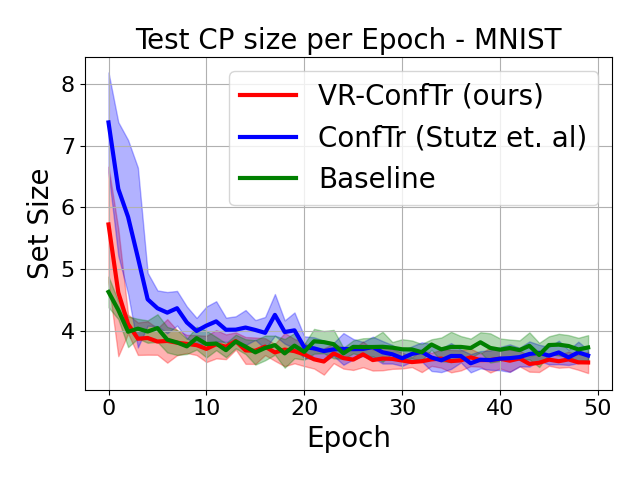}
    \includegraphics[width=0.24\textwidth]{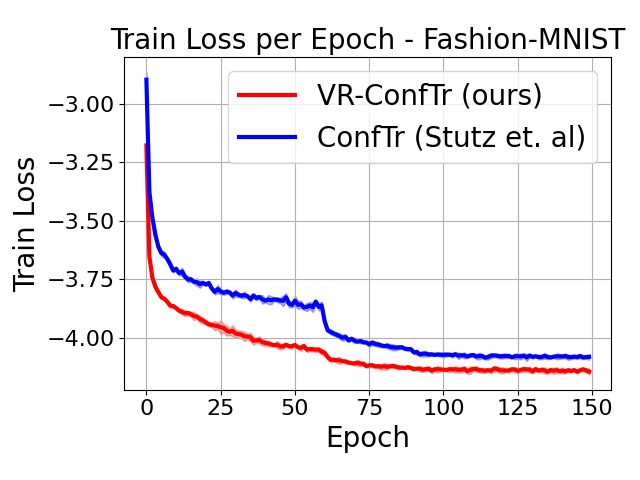}
    \includegraphics[width=0.24\textwidth]{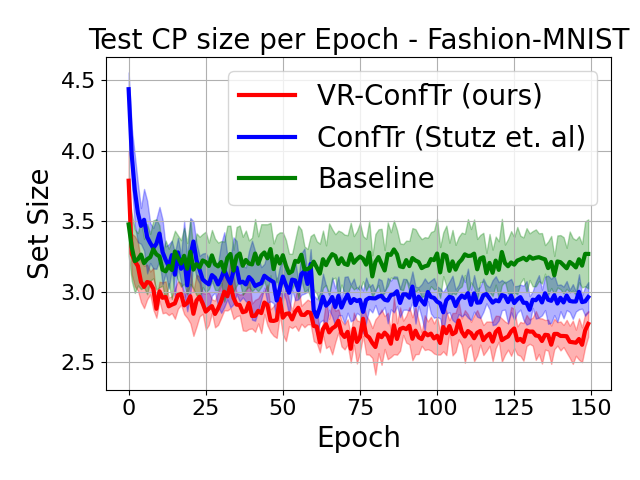}
    \includegraphics[width=0.24\textwidth]{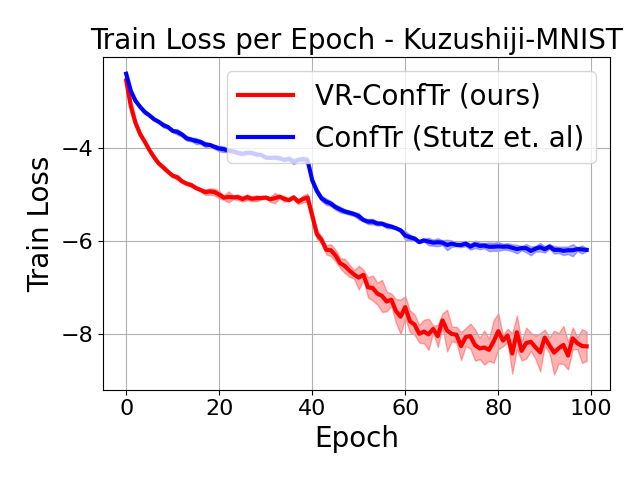}
    \includegraphics[width=0.24\textwidth]{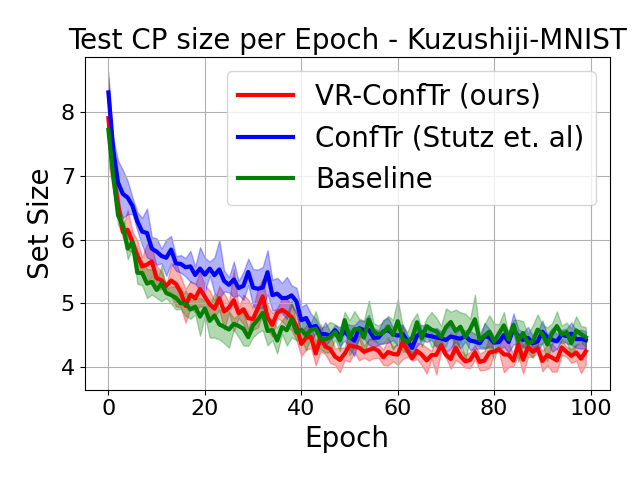}
    \includegraphics[width=0.24\textwidth]{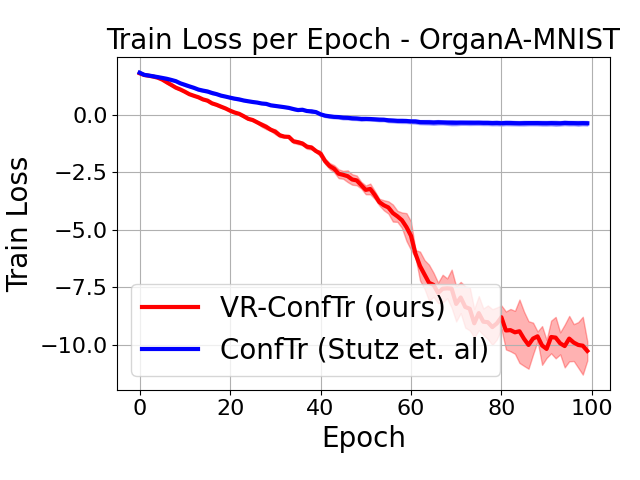}
    \includegraphics[width=0.24\textwidth]{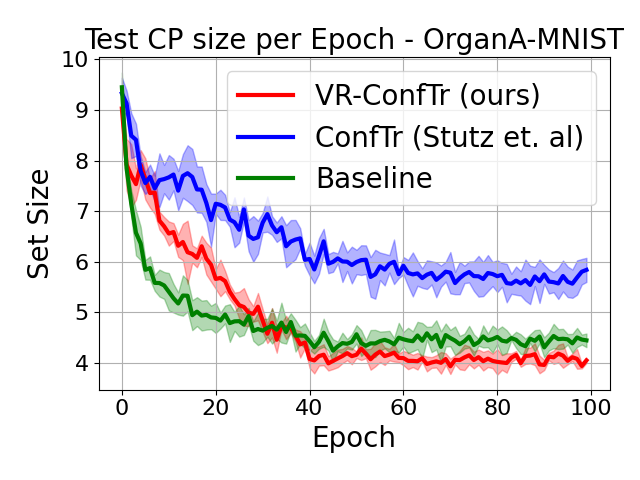}
\caption{Learning curves for MNIST, Fashion-MNIST, Kuzushiji-MNIST, and OrganAMNIST. Each row shows training loss (left) and test CP set sizes (right) for the corresponding dataset, evaluated using the \texttt{THR} conformal predictor.}
\label{fig:train_trajectories}
\end{figure}

In Figure~\ref{fig:train_trajectories}, we show the training performance for four datasets (MNIST, FMNIST, KMNIST and OrganAMNIST) illustrating two key metrics: (i) the evolution of the conformal training loss defined in section~\ref{sec:conformal} and (ii) the test CP size across epochs. 
In all the plots, we see that \texttt{VR-ConfTr}  reaches smaller values of the loss and in significantly fewer epochs as compared to~\texttt{ConfTr}. In the case of MNIST, for example, \texttt{VR-ConfTr} reaches a lower value of the loss in 10 times fewer epochs as compared to \texttt{ConfTr}. Similarly, for FMNIST \texttt{VR-ConfTr} achieves a smaller size in one third of epochs compared to \texttt{ConfTr}. For both Kuzushiji-MNIST and OrganA-MNIST, we notice that not only \texttt{VR-ConfTr} is faster, but it also gets to significantly smaller values of the loss. For the more challenging OrganA-MNIST dataset, this difference appears even more accentuated, not only in the training loss but also in the test CP set sizes. Notice that for all the three methods (\texttt{VR-ConfTr}, \texttt{ConfTr} and \texttt{Baseline}) we performed hyper-parameter tuning. Notably, in the case of the OrganA-MNIST dataset, we were not able to obtain an improvement with \texttt{ConfTr} in the final set size with respect to \texttt{Baseline}, which stresses the need for a method with improved gradient estimation, as the one we propose in this paper.

\textbf{Fine-tuning Cifar-10:} Figure~\ref{fig:cifar10-finetuning-results} presents the training performance for CIFAR-10, where we fine-tune the last linear layer of a pretrained ResNet20 trained with cross-entropy loss. Unlike the other datasets, where we train all model parameters using the \texttt{ConfTr} or \texttt{VR-ConfTr} loss, we train CIFAR-10 fine-tuning the final layer only. Fine-tuning with \texttt{ConfTr} has gained traction due to its reduced computational overhead~\cite{yan2024provably}. Despite modifying only the final layer, \texttt{Vr-ConfTr} consistently yields smaller CP sets compared to \texttt{ConfTr}, while also achieving higher accuracy within the first epoch.
\begin{figure}[h]
    \centering
    \includegraphics[width=0.45\linewidth]{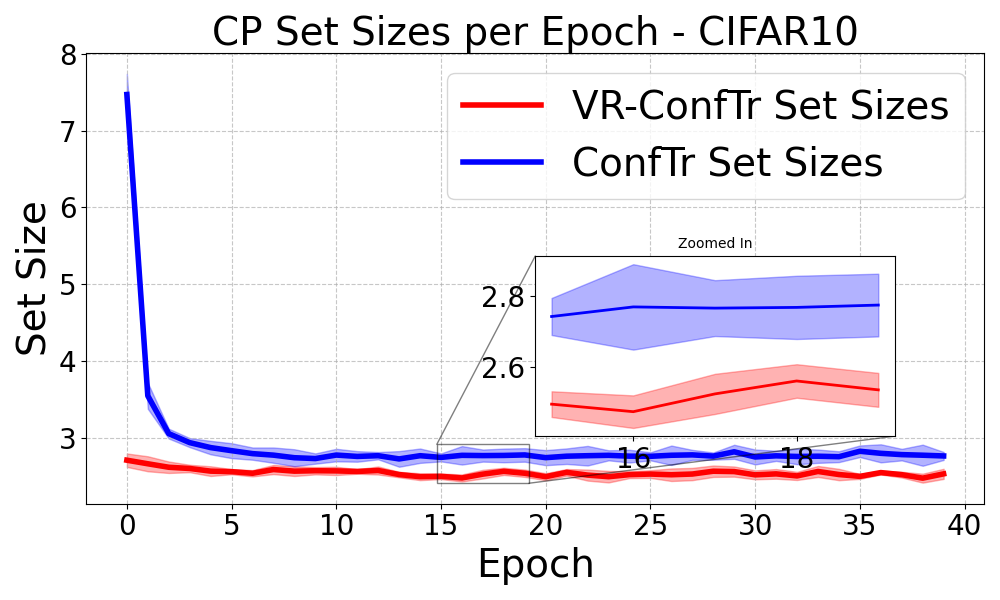}
    \includegraphics[width=0.45\linewidth]{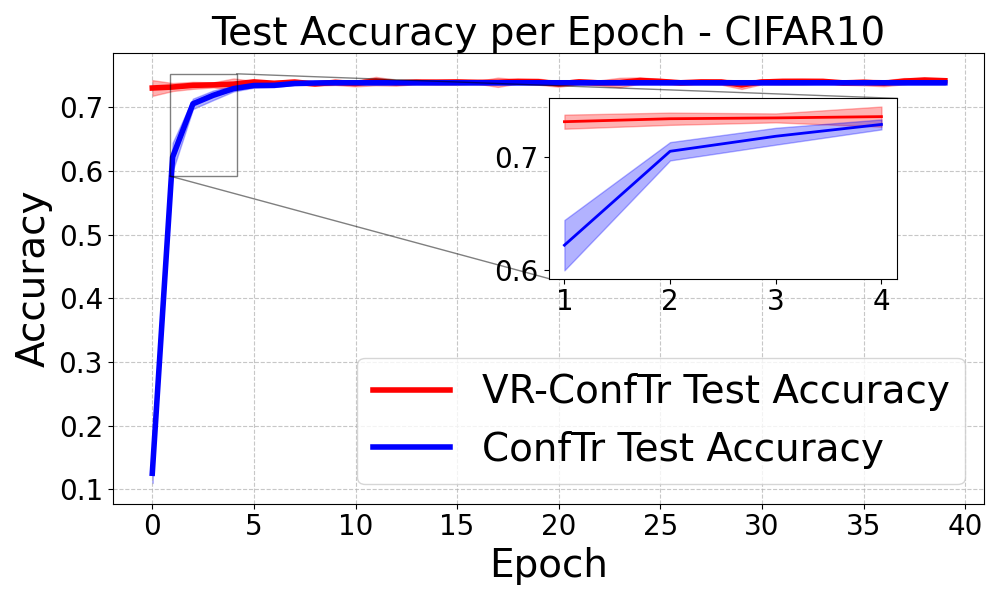}
    \caption{Learning curves for CIFAR-10 illustrating the fine-tuning process of a linear layer on a pretrained ResNet20 model using \texttt{ConfTr} and \texttt{VR-ConfTr}. Test CP set sizes are evaluated using the THR conformal predictor, consistent with the other datasets.}
    \label{fig:cifar10-finetuning-results}
\end{figure}

\section{Concluding Remarks and Future Directions}\label{sec:conclusion}

%
We theoretically justified the sample inefficiency in the \texttt{ConfTr} method proposed by~\citet{stutz2022learning}, which is a \emph{conformal risk minimization} (CRM) method for length efficiency optimization.
We have shown that the source of sample inefficiency lies in the estimation of the gradient of the population quantile. To address this issue, we introduced a novel
technique that improves the gradient estimation of the population quantile of the conformity scores by provably reducing its variance. We show that, by incorporating this estimation technique in our proposed \texttt{VR-ConfTr} algorithm, the training becomes more stable and the post-training conformal predictor is often more efficient as well. Our work also opens up possibilities for future research in the area of~CRM. Indeed, further methods for quantile gradient estimation could be developed and readily integrated with our ``plug-in'' algorithm, for which we can expect improved training performance.
\newpage
\section*{Acknowledgments}
The authors would like to thank Matthew Cleaveland for the insightful conversations and for pointing us to relevant literature.
\bibliographystyle{plainnat} 
\bibliography{references} 

\appendix
\onecolumn



\section{Proofs for Section~\ref{sec:conftr-analysis} and \ref{sec:vr-conftr}}\label{appendix:proofs}
In this appendix, we provide the proofs of all the theoretical results presented in section~\ref{sec:conftr-analysis}.
\begin{proposition}(\textbf{Proposition \ref{prop:a.s.-diff}})
$E_{(1)}(\theta), \ldots, E_{(n)}(\theta)$ are almost surely (a.s.) everywhere differentiable in $\theta$. In particular, the empirical quantile $\hat{\tau}_n(\theta) = E_{(\ceil{\alpha n})}(\theta)$ is a.s. everywhere differentiable.
\begin{proof}
We will formally show that the ordered statistics $E_{(j)}(\theta)$, with $j = 1, ..., n$, are differentiable for any $\theta$ with probability 1. We first recall some notation. Let $E_{(1)}(\theta) \leq \ldots \leq E_{(n)}(\theta)$ denote the order statistics corresponding to the scalar random variables $E(\theta, X_1,Y_1),\ldots,E(\theta, X_n,Y_n)$.
\newline
Let us also denote by $\omega(\theta): [n] \to [n]$ the permutation of indices $[n] := \{1,\ldots,n\}$ that correspond to the order statistics, i.e., $\omega(\theta) = (\omega_1(\theta), \ldots, \omega_n(\theta))$, and 
\[
(E_{(1)}(\theta), \ldots, E_{(n)}(\theta)) = (E(\theta, X_{\omega_1(\theta)}, Y_{\omega_1(\theta)}), 
\ldots, E(\theta, X_{\omega_n(\theta)}, Y_{\omega_n(\theta)})).
\]
Now define the set $A_n$ as follows:
\begin{equation}
A_n = \{(E_1, ..., E_n): E_i = E_j \text{ for some } i\neq j\}.
\end{equation}
Now note that, by definition, the conformity score function $E(\theta, X,Y)$ is continuous and differentiable in $\theta$. Now fix some $\bar{\theta}$. Consider the event in which the ordered statistics are such that $E_{(1)}(\bar{\theta})<\ldots<E_{(n)}(\bar{\theta})$, hence 
\begin{equation}
(E(\bar{\theta}, X_{\omega_1(\bar{\theta})},Y_{\omega_1(\bar{\theta})}), \ldots, E(\bar{\theta}, X_{\omega_n(\bar{\theta})},Y_{\omega_n(\bar{\theta})})) = (E_{(1)}(\bar{\theta}), \ldots, E_{(n)}(\bar{\theta})) \notin A_n,
\end{equation} 
which means that $\omega(\bar{\theta})$ is the unique ordered statistics permutation for $\{E(\theta, X_i,Y_i)\}_{i = 1}^n$ and note that this happens \textit{almost surely} (with probability 1), because $E(\theta, X, Y)$ is an absolutely continuous random variable. The key step is now to note that by continuity of $E(\theta, X_i, Y_i)$ in $\theta$, there exists $\delta > 0$ such that, for $\theta \in\{\theta' : \|\theta'-\bar{\theta}\|\leq \delta\}$, we have $\omega(\theta) = \omega(\bar{\theta})$, which means that, if $\|\theta-\bar{\theta}\|\leq \delta$,
\begin{equation}\label{eq:constant}
\begin{aligned}
(E_{(1)}({\theta}), \ldots, E_{(n)}({\theta})) &= (E({\theta}, X_{\omega_1({\theta})},Y_{\omega_1({\theta})}), \ldots, E({\theta}, X_{\omega_n({\theta})},Y_{\omega_n({\theta})}))
\\&
= (E({\theta}, X_{\omega_1({\bar{\theta}})},Y_{\omega_1({\bar{\theta}})}), \ldots, E({\theta}, X_{\omega_n({\bar{\theta}})},Y_{\omega_n({\bar{\theta}})}))
\end{aligned}
\end{equation}
At this point, let $j \in \{1, ..., n\}$, and let us denote $E_{(j)}({\theta}) = E({\theta}, X_{\omega_j({{\theta}})},Y_{\omega_j({{\theta}})}) = E({\theta},\omega_j({{\theta}}))$, and, for any $\theta\in\{\theta' : \|\theta'-\bar{\theta}\|\leq \delta\}$ the derivative of $E_{(j)}({\theta})$ is 
\begin{equation}
\frac{\partial}{\partial \theta}E_{(j)}({\theta}) = \frac{\partial}{\partial \theta}E({\theta},\omega_j({{\theta}})) = \frac{\partial}{\partial \theta}E({\theta},\omega_j({{\bar{\theta}}})) = \frac{\partial E}{\partial \theta}({\theta},\omega_j({{\bar{\theta}}})),
\end{equation}
which is true because, as we show in~\eqref{eq:constant} above, for $\theta\in\{\theta' : \|\theta'-\bar{\theta}\|\leq \delta\}$, the function $\omega_j({\theta})$ is a constant equal to $\omega_j({\bar{\theta}})$. Note that, as we do in the main paper, we here denote by $\frac{\partial E}{\partial \theta}({\theta},\omega_j({{\bar{\theta}}}))$ the partial derivative with respect to $\theta$. Note that, given that the choice of $\bar{\theta}$ is arbitrary, we have shown that the function $\theta\mapsto E(\theta, X_{\omega_{j}({\theta})}, Y_{\omega_{j}({\theta})})$ is indeed differentiable with probability 1 for all $j=1, \ldots, n$.  \newline
To be absolutely convinced that~\eqref{eq:constant} is true, note that we can show it by continuity of $\theta\mapsto E(\theta, X, Y)$, as follows: let's fix $\bar{\theta}$ and let us denote again $E_{(j)}({\theta}) = E({\theta}, X_{\omega_j({{\theta}})},Y_{\omega_j({{\theta}})}) = E({\theta},\omega_j({{\theta}}))$. We want to show that there exists $\delta >0$ such that $\omega(\bar{\theta}) = \omega(\theta)$ for any $\theta\in \{\theta' : \|\theta'-\bar{\theta}\|\leq \delta\}$. To do so, it is sufficient to show that, for any $\theta\in \{\theta' : \|\theta'-\bar{\theta}\|\leq \delta\}$, 
\begin{equation}
E(\theta,\omega_{i+1}(\bar{\theta})) > E(\theta,\omega_{i}(\bar{\theta})), \text{ for } i = 1, ..., n-1.     
\end{equation}
Let us define
\begin{equation}\label{eq:minEps}
\varepsilon = \min_{i = 1, ..., n-1}\{E({\bar{\theta}},\omega_{i+1}({{\bar{\theta}}})) - E({\bar{\theta}},\omega_{i}({{\bar{\theta}}}))\}.
\end{equation}
From continuity of $\theta\mapsto E(\theta, X, Y)$, there exists $\delta>0$ such that if $\theta\in\{\theta' : \|\theta'-\bar{\theta}\|\leq \delta\}$, we have 
\begin{equation}\label{eq:contEps2}
|E(\theta,\omega_{i}(\bar{\theta})) - E({\bar{\theta}},\omega_{i}(\bar{\theta}))|< \frac{\epsilon}{2}.
\end{equation}
Note that, from~\eqref{eq:minEps} and~\eqref{eq:contEps2}, we have for all $i = 1, \ldots, n$,
\begin{equation}\label{eq:epsilon3}
    E({\bar{\theta}},\omega_{i+1}({{\bar{\theta}}})) \geq E({\bar{\theta}},\omega_{i}({{\bar{\theta}}})) + \epsilon,
\end{equation}
and
\begin{equation}
    E({{\theta}},\omega_{i}(\bar{\theta}))>E(\bar{\theta},\omega_{i}(\bar{\theta}))-\frac{\epsilon}{2}.
\end{equation}
Hence, note that, starting from this last inequality, and then using~\eqref{eq:epsilon3}
\begin{equation}
\begin{aligned}
    E({{\theta}},\omega_{i+1}(\bar{\theta}))&> E(\bar{\theta},\omega_{i+1}(\bar{\theta})) -\frac{\epsilon}{2}
    \\&\geq E(\bar{\theta},\omega_{i}(\bar{\theta})) + \frac{\epsilon}{2}.
\end{aligned}
\end{equation}
Now, we can use again~\eqref{eq:contEps2} (continuity) to show that
\begin{equation}
    E({\bar{\theta}},\omega_{i}(\bar{\theta}))>E({\theta},\omega_{i}(\bar{\theta}))-\frac{\epsilon}{2},
\end{equation}
and thus observe that, for all $i = 1, \ldots, n$,
\begin{equation}
    E({{\theta}},\omega_{i+1}(\bar{\theta}))>E(\bar{\theta},\omega_{i}(\bar{\theta})) + \frac{\epsilon}{2}>E({\theta},\omega_{i}(\bar{\theta})),
\end{equation}
from which we can confirm that, for $\theta\in\{\theta' : \|\theta'-\bar{\theta}\|\leq \delta\}$, $\omega(\theta) = \omega(\bar{\theta})$, and we can conclude.
\end{proof}
\end{proposition}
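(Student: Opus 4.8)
The plan is to realize each order statistic $E_{(j)}(\theta)$ as a composition $\theta \mapsto \mathrm{sort}(E_\theta(Z_1),\dots,E_\theta(Z_n))$ of the per-sample score maps with the increasing-sort function, and to exploit that $\mathrm{sort}$ is piecewise linear with non-differentiable points lying exactly on the ``tie set'' $A_n = \{(e_1,\dots,e_n) : e_i = e_j \text{ for some } i \neq j\}$. Concretely, I would fix an arbitrary $\bar\theta$, let $\omega(\bar\theta) = (\omega_1(\bar\theta),\dots,\omega_n(\bar\theta))$ denote the permutation realizing the sort at $\bar\theta$, so that $E_{(j)}(\bar\theta) = E_{\bar\theta}(Z_{\omega_j(\bar\theta)})$, and reduce everything to showing that, with probability one, this permutation is \emph{locally constant} near $\bar\theta$.

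The first step is to observe that, since $E_{\bar\theta}(Z)$ has a continuous distribution (a consequence of Assumption~\ref{ass:continuous-joint-pdf}), the i.i.d. scores $E_{\bar\theta}(Z_1),\dots,E_{\bar\theta}(Z_n)$ are pairwise distinct almost surely; equivalently $(E_{\bar\theta}(Z_1),\dots,E_{\bar\theta}(Z_n)) \notin A_n$ a.s. On this probability-one event $\omega(\bar\theta)$ is unique and the consecutive gaps are strictly positive, so I can set $\varepsilon := \min_{i}\bigl(E_{\bar\theta}(Z_{\omega_{i+1}(\bar\theta)}) - E_{\bar\theta}(Z_{\omega_{i}(\bar\theta)})\bigr) > 0$. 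The second step is a purely deterministic $\varepsilon/\delta$ argument: using continuity of $\theta \mapsto E_\theta(Z_i)$ (Assumption~\ref{ass:M-Lipschitz}), pick $\delta > 0$ so that $|E_\theta(Z_i) - E_{\bar\theta}(Z_i)| < \varepsilon/2$ for every $i$ whenever $\|\theta - \bar\theta\| \le \delta$; a short triangle-inequality chain then gives $E_\theta(Z_{\omega_{i+1}(\bar\theta)}) > E_\theta(Z_{\omega_i(\bar\theta)})$ for all $i$ on that ball, so the sort permutation equals $\omega(\bar\theta)$ throughout. Consequently, on the $\delta$-ball, $E_{(j)}(\theta) = E_\theta(Z_{\omega_j(\bar\theta)})$, which is continuously differentiable by Assumption~\ref{ass:M-Lipschitz}; hence $E_{(j)}$ is differentiable at $\bar\theta$ with $\tfrac{\partial}{\partial\theta}E_{(j)}(\bar\theta) = \tfrac{\partial E}{\partial\theta}(\bar\theta, Z_{\omega_j(\bar\theta)})$. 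Finally, since $\hat\tau_n(\theta) = E_{(\ceil{\alpha n})}(\theta)$ is one specific order statistic, it inherits a.s. differentiability.

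I expect the main obstacle to be handling the quantifier order implicit in ``almost surely everywhere differentiable.'' The argument above produces, for \emph{each fixed} $\bar\theta$, a probability-one event on which $E_{(j)}$ is differentiable at $\bar\theta$, but the exceptional (tie) event depends on $\bar\theta$, and the uncountable union over $\theta \in \Theta$ need not be null --- two score curves $\theta \mapsto E_\theta(Z_i)$ and $\theta \mapsto E_\theta(Z_j)$ can genuinely cross, creating a $\min$/$\max$-type kink. So the statement should be read as ``for each $\theta$, $E_{(j)}$ is a.s. differentiable at $\theta$'' (equivalently, the random non-differentiability set is a.s. Lebesgue-negligible), which is precisely what is used downstream, where gradients are evaluated at a given parameter. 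The write-up should therefore make the local-constancy step fully rigorous rather than overclaim a global absence of ties.
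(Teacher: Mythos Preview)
Your proposal is correct and follows essentially the same approach as the paper: fix $\bar\theta$, use absolute continuity of the scores to get distinctness a.s., define $\varepsilon$ as the minimum consecutive gap, use continuity in $\theta$ to find a $\delta$-ball on which each score moves by less than $\varepsilon/2$, and conclude via a triangle-inequality chain that the sorting permutation is locally constant, whence $E_{(j)}$ inherits differentiability from $E$. Your explicit discussion of the quantifier-order subtlety (per-$\theta$ null sets versus a single null set for all $\theta$) is a welcome clarification that the paper's proof leaves implicit.
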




\begin{lemma}
Given any continuous, bounded function $h$, the function $g(t,\theta) = \EE{h(\frac{\partial E}{\partial\theta}(\theta,Z) \,\vert\,E_\theta(Z) = t}$ is continuous in $t$.
\label{lemma:conditional-expectation-continuous}
\end{lemma}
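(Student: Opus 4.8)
The plan is to establish continuity of $g(t,\theta)$ in $t$ by writing the conditional expectation explicitly as a ratio of integrals against the joint density supplied by Assumption~\ref{ass:continuous-joint-pdf}, and then invoking dominated convergence. First I would fix $\theta$ and let $f_{E,G}(s,u)$ denote the continuous joint density of the pair $\bigl(E_\theta(Z), \frac{\partial E}{\partial\theta}(\theta,Z)\bigr)$, where I write $G = \frac{\partial E}{\partial\theta}(\theta,Z)$ for brevity (note $G$ is vector-valued, so $u$ ranges over $\mathbb{R}^p$). With $f_E(s) = \int f_{E,G}(s,u)\,\mathrm{d}u$ the marginal density of $E_\theta(Z)$, the conditional density of $G$ given $E_\theta(Z)=t$ is $f_{E,G}(t,u)/f_E(t)$ wherever $f_E(t) > 0$, and hence
\begin{equation*}
    g(t,\theta) = \frac{\int h(u)\, f_{E,G}(t,u)\,\mathrm{d}u}{\int f_{E,G}(t,u)\,\mathrm{d}u}.
\end{equation*}
It then suffices to show that both the numerator $N(t) := \int h(u) f_{E,G}(t,u)\,\mathrm{d}u$ and the denominator $f_E(t) = \int f_{E,G}(t,u)\,\mathrm{d}u$ are continuous in $t$, and that $f_E(t)$ stays bounded away from zero near the point of interest — in our application $t$ near $\tau(\theta)$, where the $\alpha$-quantile has positive density.

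For the continuity of $N(t)$ and $f_E(t)$, I would take any sequence $t_k \to t$ and apply the dominated convergence theorem to the integrands $h(u) f_{E,G}(t_k,u)$ and $f_{E,G}(t_k,u)$. Pointwise convergence in $u$ follows from joint continuity of $f_{E,G}$. The domination is the step requiring care: I would use the boundedness of $h$ (say $\|h\|_\infty \le C$) to reduce the numerator bound to the denominator bound, so the only real task is to dominate $f_{E,G}(t_k,u)$ uniformly in $k$ by an integrable function of $u$. The cleanest route is to restrict attention to $t$ in a compact neighborhood $[\tau(\theta)-r, \tau(\theta)+r]$ and use that $\sup_{|s-t|\le r} f_{E,G}(s,u)$ is measurable and that $\int \sup_{s\in[\tau(\theta)-r,\tau(\theta)+r]} f_{E,G}(s,u)\,\mathrm{d}u < \infty$; alternatively one can invoke a standard regularity condition (local integrable envelope for the joint density) which is implicit in the "continuous joint pdf" hypothesis together with the existence of the conditional expectation. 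Either way, dominated convergence yields $N(t_k)\to N(t)$ and $f_E(t_k)\to f_E(t)$, and the ratio is continuous wherever $f_E(t)>0$.

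The main obstacle is the domination/integrability step: joint continuity of a density alone does not guarantee a uniform integrable envelope, so I expect to need to either (a) explicitly assume such an envelope as part of the regularity package (reasonable, and arguably already intended by Assumption~\ref{ass:continuous-joint-pdf} combined with the finiteness of the conditional moments we are using), or (b) localize and use uniform continuity of $f_{E,G}$ on compacts together with a tail bound. A secondary, minor point is handling the denominator: I should note that $f_E$ is continuous and, since $\tau(\theta)$ is the $\alpha$-quantile with $\alpha\in(0,1)$ and the density is continuous, $f_E(\tau(\theta))>0$, so there is a neighborhood of $\tau(\theta)$ on which $f_E$ is bounded below — which is all that is needed for the downstream use of this lemma (e.g., in establishing the weak-convergence and bias results). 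I would state the lemma's proof for $t$ in such a neighborhood, which is the only regime in which it is invoked.
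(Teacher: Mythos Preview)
Your approach is essentially identical to the paper's: express $g(t,\theta)$ via the joint density of $(E_\theta(Z),\frac{\partial E}{\partial\theta}(\theta,Z))$ guaranteed by Assumption~\ref{ass:continuous-joint-pdf}, note that both the numerator and the marginal $f_E(t)$ in the denominator inherit continuity in $t$ from the joint density, and conclude. If anything, you are more careful than the paper---it simply asserts that continuity of the ratio of densities implies continuity of the integral and writes ``the result follows,'' whereas you correctly flag the dominated-convergence/integrable-envelope step and the need for $f_E(t)>0$ near $\tau(\theta)$ as the points requiring justification.
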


\begin{proof}
Given an absolutely continuous random vector~$V$, we will denote its probability density function (PDF) as $f_V(v)$. Let $\mathcal{E}_\theta$ and $\mathcal{V}_\theta$ denote the support of $E_\theta(Z)$ and $\frac{\partial E}{\partial\theta}(\theta, Z)$ respectively, i.e. the set of points where the PDF is strictly positive. Suppose that $t\in\mathcal{E}_\theta$. Then,
\begin{align*}
    g(t,\theta) &= \EE{h\left(\frac{\partial E}{\partial\theta}(\theta,Z)\right) \,\vert\,E_\theta(Z) = t}\\
    &= \int_{\mathcal{V}_\theta} f_{\frac{\partial E}{\partial\theta}(\theta, Z)|E_\theta(Z)=t}(v)\,h\left(\frac{\partial E}{\partial\theta}(\theta,v)\right)\,\mathrm{d}v\\
    &= \int_{\mathcal{V}_\theta} \frac{f_{\frac{\partial E}{\partial\theta}(\theta, Z),E_\theta(Z)}(v, t)}{f_{E_\theta(Z)}(t)}h\left(\frac{\partial E}{\partial\theta}(\theta,v)\right)\,\mathrm{d}v.
\end{align*}
Since $f_{\frac{\partial E}{\partial\theta}(\theta, Z),E_\theta(Z)}(v, t)$ is continuous in $t$ (from Assumption~\ref{ass:continuous-joint-pdf}), then so is $f_{E_\theta(Z)}(t)$. Subsequently, their ratio is continuous, and the result follows.

\end{proof}


\begin{proposition}(\textbf{Proposition \ref{thm:conftr-weakconvergence}})
Let $\hat{\tau}_n(\theta) = E_{(\ceil{\alpha n})}(\theta)$. Then,
\begin{equation}
    \frac{\partial \hat{\tau}_n}{\partial\theta}(\theta) \overset{\mathrm{dist}}{\longrightarrow} \frac{\partial E}{\partial\theta}(\theta,Z) \,\big|_{E_\theta(Z)=\tau(\theta)}
\end{equation}
as $n\to\infty$.
\end{proposition}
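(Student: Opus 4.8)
The plan is to recognize $\frac{\partial\hat\tau_n}{\partial\theta}(\theta)$ as a \emph{concomitant} of an order statistic and to invoke the classical asymptotics for concomitants (induced order statistics). Fix $\theta$ and write $U_i := E_\theta(Z_i)$ and $W_i := \frac{\partial E}{\partial\theta}(\theta,Z_i)$, so that $(U_i,W_i)$ are i.i.d.\ copies of $(U,W) := \bigl(E_\theta(Z),\tfrac{\partial E}{\partial\theta}(\theta,Z)\bigr)$, which by Assumption~\ref{ass:continuous-joint-pdf} admits a continuous joint density $f_{U,W}$. Let $k_n := \ceil{\alpha n}$ and let $U_{(1)}\le\cdots\le U_{(n)}$ be the order statistics of $U_1,\ldots,U_n$. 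The proof of Proposition~\ref{prop:a.s.-diff} shows that, almost surely, the sorting permutation is locally constant around $\theta$; differentiating, we get that a.s.\ $\frac{\partial\hat\tau_n}{\partial\theta}(\theta) = W_{[k_n]}$, where $W_{[k_n]}$ denotes the concomitant of $U_{(k_n)}$, i.e.\ the $W$-value attached to the sample with the $k_n$-th smallest score. Since convergence in distribution is equivalent to convergence of $\EE{h(\cdot)}$ against all bounded continuous test functions $h$, it suffices to prove $\EE{h(W_{[k_n]})} \to \EE{h(W)\mid U=\tau(\theta)}$, the right-hand side being the regular conditional expectation, which is well defined under the continuous-density assumption.

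The first substantive step is to identify the conditional law of the concomitant given its order statistic. Using the standard formula for the joint density of the $k_n$-th order statistic and its concomitant,
\begin{equation*}
f_{U_{(k_n)},\,W_{[k_n]}}(u,w) = \frac{n!}{(k_n-1)!\,(n-k_n)!}\,F_U(u)^{k_n-1}\bigl(1-F_U(u)\bigr)^{\,n-k_n} f_{U,W}(u,w),
\end{equation*}
and dividing by the marginal density of $U_{(k_n)}$ (obtained by integrating out $w$), one finds $f_{W_{[k_n]}\mid U_{(k_n)}}(w\mid u) = f_{W\mid U}(w\mid u)$. Hence $\EE{h(W_{[k_n]})} = \EE{g(U_{(k_n)},\theta)}$, where $g(t,\theta) := \EE{h(W)\mid U=t}$ is exactly the map treated in Lemma~\ref{lemma:conditional-expectation-continuous}.

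Finally I would let $n\to\infty$. Since $k_n/n\to\alpha$ and $F_U$ is continuous, the sample quantile satisfies $U_{(k_n)}\overset{\mathrm{p}}{\longrightarrow}\tau(\theta)$ (in fact a.s.); by Lemma~\ref{lemma:conditional-expectation-continuous}, $g(\cdot,\theta)$ is continuous at $\tau(\theta)$, and trivially $|g(\cdot,\theta)|\le\|h\|_\infty$. Therefore $g(U_{(k_n)},\theta)\to g(\tau(\theta),\theta)$ in probability, and bounded convergence yields $\EE{h(W_{[k_n]})} = \EE{g(U_{(k_n)},\theta)} \to g(\tau(\theta),\theta) = \EE{h(W)\mid U=\tau(\theta)}$, establishing the claimed weak convergence. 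The main obstacle is the middle step: carefully justifying that the concomitant's conditional distribution given its order statistic is $W\mid U=u$ (i.e.\ writing the joint density rigorously and checking the regular conditional law is genuine here), together with the consistency $U_{(k_n)}\overset{\mathrm{p}}{\longrightarrow}\tau(\theta)$. Once those classical facts are in place, the remainder is a routine portmanteau argument combined with the continuity statement of Lemma~\ref{lemma:conditional-expectation-continuous}.
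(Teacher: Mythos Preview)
Your proof is correct and follows essentially the same architecture as the paper's: test against bounded continuous $h$, reduce $\EE{h(W_{[k_n]})}$ to $\EE{g(U_{(k_n)},\theta)}$ by identifying the conditional law of the concomitant given its order statistic as $W\mid U=u$, then invoke Lemma~\ref{lemma:conditional-expectation-continuous} together with the consistency of the sample quantile to pass to the limit via Portmanteau. The only cosmetic difference is in the middle step: you appeal to the classical joint density formula for $(U_{(k_n)},W_{[k_n]})$ from the theory of concomitants, whereas the paper derives the same conditional law from scratch via an exchangeability-plus-independence argument; both routes yield the identity $\EE{h(W_{[k_n]})}=\EE{g(U_{(k_n)},\theta)}$ and are otherwise identical.
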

\begin{proof}

Let $Z_i = (X_i, Y_i)$ for $i\in\{1,\ldots,n\}$ and $Z = (X,Y)$. 
Let $h$ be a bounded, continuous function. Then,
\begin{align*}
    &\EE{h\left(\frac{\partial\hat{\tau}_n}{\partial\theta}(\theta)\right)} = \EE{h\left(\frac{\partial E}{\partial\theta}(\theta, Z_{\omega_{\ceil{\alpha n}}(\theta)})\right)}\\
    &= \mathbb{E}_{t\sim \mathrm{dist}(\hat{\tau}_n(\theta))}\left[\EE{ h\left(\frac{\partial E}{\partial\theta}(\theta, Z_{\omega_{\ceil{\alpha n}}(\theta)})\right) \,\big|\,\hat{\tau}_n(\theta) = t}\right].
\end{align*}
Now, focusing on the inner expectation, we can see that $\hat{\tau}_n(\theta)=t$ occurs almost surely (assuming no ties in the scores $E_\theta(Z_1),\ldots,E_\theta(Z_n)$) if and only if exactly one of the scores is equal to $t$ and exactly $\ceil{\alpha n} - 1$ of the remaining being strictly smaller than $t$. Due to exchangeability of the scores, every ordering is equally likely. Therefore,
\begin{equation*}
\begin{split}
    &\EE{ h\left(\frac{\partial E}{\partial\theta}(\theta, Z_{\omega_{\ceil{\alpha n}}(\theta)})\right) \,\big|\,\hat{\tau}_n(\theta) = t}\\
    &= \mathbb{E}\Big[ h\left(\frac{\partial E}{\partial\theta}(\theta, Z_1)\right) \,\big|\, E_\theta(Z_1) = t,\\
    &\qquad \sum_{i=2}^n 1_{E_\theta(Z_i) < t} = \ceil{\alpha n} - 1\Bigg].
\end{split}
\end{equation*}
Now, due to the assumed independence of $Z_1,\ldots,Z_n$, we can drop $\{\sum_{i=2}^n 1_{E(\theta,Z_i) < t} = \ceil{\alpha n} - 1\}$ from the conditional, because $h\left(\frac{\partial E}{\partial\theta}(\theta, Z_1)\right)$ does not depend on $Z_2,\ldots,Z_n$. Subsequently,
\begin{align*}
    &\EE{h\left(\frac{\partial\hat{\tau}_n}{\partial\theta}(\theta)\right)}\\
    &= \mathbb{E}_{t\sim \mathrm{dist}(\hat{\tau}_n(\theta))}\Bigg[\EE{h\left(\frac{\partial E}{\partial\theta}(\theta, Z_1)\right) \,\big|\, E_\theta(Z_1) = t}\Bigg]\\
    &= \mathbb{E}_{t\sim \mathrm{dist}(\hat{\tau}_n(\theta))}\Bigg[\EE{h\left(\frac{\partial E}{\partial\theta}(\theta, Z)\right) \,\big|\, E_\theta(Z) = t}\Bigg]\\
    &= \mathbb{E}_{t\sim \mathrm{dist}(\hat{\tau}_n(\theta))}[g(t,\theta)],
\end{align*}
where $g(t,\theta) := \EE{h\left(\frac{\partial E}{\partial\theta}(\theta, Z)\right) \,\big|\, E_\theta(Z) = t}$. Therefore, 
\begin{equation*}
    \EE{h\left(\frac{\partial\hat{\tau}_n}{\partial\theta}(\theta)\right)} = \EE{g(\hat{\tau}_n(\theta), \theta)}.
\end{equation*}
Noting that $\hat{\tau}_n(\theta) \to \tau(\theta)$ in distribution, it follows from Lemma~\ref{lemma:conditional-expectation-continuous} and by the continuous mapping theorem that $g(\hat{\tau}_n(\theta), \theta) \overset{\mathrm{dist}}{\longrightarrow} g(\tau(\theta),\theta)$. We can rewrite this as
\begin{equation*}
\begin{split}
    &\lim_{n\to\infty}\EE{h\left(\frac{\partial E}{\partial\theta}(\theta, Z_{\omega_{\ceil{\alpha n}}})\right)}\\
    &= \EE{h\left(\frac{\partial E}{\partial\theta}(\theta, Z)\right)\,\big\vert\, E_\theta(Z) = \tau(\theta)}.
\end{split}
\end{equation*}
Since $h$ was arbitrary, it follows from the Portmanteau lemma that
\begin{equation}
\begin{split}
    \frac{\partial E}{\partial\theta}(\theta, Z_{\omega_{\ceil{\alpha n}}}) \overset{\mathrm{dist}}{\longrightarrow} \frac{\partial E}{\partial\theta}(\theta, Z)\,\big\vert_{E_\theta(Z) = \tau(\theta)}
\end{split}
\end{equation}
as $n\to\infty$. 

\end{proof}

\begin{corollary}\label{app:corVar}
Under the same conditions as Proposition~\ref{thm:conftr-weakconvergence}, we have
\begin{equation}
    \EE{\frac{\partial \hat{\tau}_n}{\partial\theta}(\theta)} \to \EE{\frac{\partial E}{\partial\theta}(\theta,Z) \,\Big|\, E_\theta(Z) = \tau(\theta)}
\end{equation}
and
\begin{equation}
    \cov{\frac{\partial \hat{\tau}_n}{\partial\theta}(\theta)} \to \cov{\frac{\partial E}{\partial\theta}(\theta,Z) \,\Big|\, E_\theta(Z) = \tau(\theta)}
\end{equation}
as $n\to\infty$.
\end{corollary}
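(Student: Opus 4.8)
\textbf{Proof proposal for Corollary~\ref{app:corVar}.}

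The plan is to upgrade the weak convergence established in Proposition~\ref{thm:conftr-weakconvergence} to convergence of the first two moments. Convergence in distribution alone does not imply convergence of expectations or covariances; the standard remedy is to establish \emph{uniform integrability} of the relevant families of random variables, and then invoke the Vitali convergence theorem (or, equivalently, the fact that $X_n \overset{\mathrm{dist}}{\to} X$ together with uniform integrability of $\{X_n\}$ implies $\EE{X_n} \to \EE{X}$).

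First I would recall from the proof of Proposition~\ref{thm:conftr-weakconvergence} that $\frac{\partial\hat\tau_n}{\partial\theta}(\theta) = \frac{\partial E}{\partial\theta}(\theta, Z_{\omega_{\ceil{\alpha n}}(\theta)})$ almost surely, i.e., the quantile-gradient is simply one of the i.i.d. conformity-score gradients $\frac{\partial E}{\partial\theta}(\theta, Z_i)$, selected by the (random) ordering index. By Assumption~\ref{ass:M-Lipschitz}, $E(\theta, x, y)$ is $M$-Lipschitz in $\theta$, and therefore $\bigl\|\frac{\partial E}{\partial\theta}(\theta, Z)\bigr\| \le M$ almost surely, uniformly in $\theta$. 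Consequently $\bigl\|\frac{\partial\hat\tau_n}{\partial\theta}(\theta)\bigr\| \le M$ for every $n$, so the family $\{\frac{\partial\hat\tau_n}{\partial\theta}(\theta)\}_{n\ge 1}$ is bounded, hence trivially uniformly integrable; the same bound applies to each entrywise product, so the family of outer products $\{\frac{\partial\hat\tau_n}{\partial\theta}(\theta)\,\frac{\partial\hat\tau_n}{\partial\theta}(\theta)^\mathsf{T}\}_{n\ge 1}$ is bounded by $M^2$ entrywise and is likewise uniformly integrable.

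Next, combining this boundedness with the weak convergence $\frac{\partial\hat\tau_n}{\partial\theta}(\theta) \overset{\mathrm{dist}}{\to} \frac{\partial E}{\partial\theta}(\theta,Z)\,\big|_{E_\theta(Z)=\tau(\theta)}$ from Proposition~\ref{thm:conftr-weakconvergence}, and applying the continuous mapping theorem to the (continuous but unbounded, yet here irrelevant since everything is bounded) maps $v\mapsto v$ and $v \mapsto vv^\mathsf{T}$, I obtain that the outer products also converge weakly to $\frac{\partial E}{\partial\theta}(\theta,Z)\frac{\partial E}{\partial\theta}(\theta,Z)^\mathsf{T}\,\big|_{E_\theta(Z)=\tau(\theta)}$. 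Uniform integrability then yields
\begin{equation*}
\EE{\frac{\partial\hat\tau_n}{\partial\theta}(\theta)} \to \EE{\frac{\partial E}{\partial\theta}(\theta,Z)\,\Big|\,E_\theta(Z)=\tau(\theta)}, \qquad
\EE{\frac{\partial\hat\tau_n}{\partial\theta}(\theta)\frac{\partial\hat\tau_n}{\partial\theta}(\theta)^\mathsf{T}} \to \EE{\frac{\partial E}{\partial\theta}(\theta,Z)\frac{\partial E}{\partial\theta}(\theta,Z)^\mathsf{T}\,\Big|\,E_\theta(Z)=\tau(\theta)}.
\end{equation*}
Finally, writing $\cov{V} = \EE{VV^\mathsf{T}} - \EE{V}\EE{V}^\mathsf{T}$ and using continuity of $(a,B)\mapsto B - aa^\mathsf{T}$ together with the two limits just established gives the stated convergence of covariance matrices. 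The only mild subtlety — and the main thing to get right — is making sure that the conditional expectation on the limiting side is well-defined and that the continuous-mapping step is legitimate; both are handled by Lemma~\ref{lemma:conditional-expectation-continuous} (continuity of $t\mapsto g(t,\theta)$, which is exactly what was used to prove Proposition~\ref{thm:conftr-weakconvergence}) and by the almost-sure uniform bound $M$, so no genuine obstacle remains beyond careful bookkeeping.
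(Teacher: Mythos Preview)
Your proposal is correct and rests on the same key observation as the paper: Assumption~\ref{ass:M-Lipschitz} gives the uniform bound $\bigl\|\frac{\partial E}{\partial\theta}(\theta,Z)\bigr\| \le M$, which is what allows the first two moments to pass to the limit. The paper's proof packages this slightly differently: rather than invoking uniform integrability and Vitali, it applies Portmanteau directly by choosing continuous \emph{bounded} test functions $h$ that coincide with $v \mapsto v_l$ and $v \mapsto v_l v_p$ on the ball $\{\|v\| \le M\}$; since the random vectors a.s.\ live in that ball, the expectations against $h$ are exactly the desired moments. Your route through uniform integrability is equally valid and arguably more transparent about \emph{why} the argument works, while the paper's truncation trick is a touch more self-contained (no appeal to Vitali needed). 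Either way, the substance is the same.
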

\begin{proof}
For the corollary, simply note that we can apply the Portmanteau lemma again, choosing some continuous, bounded $h$ such that $h(\theta) = \theta_l$ and $h(\theta) = \theta_l\theta_p$ (with $l,p$ indexing components of $\theta$) over $\{\|\theta\| \leq M\}$. This way, noting that $\|\frac{\partial E}{\partial\theta}(\theta,z)\| \leq M$, we can see that
\begin{equation*}
    \EE{\frac{\partial \hat{\tau}_n}{\partial\theta_l}(\theta)} \to \EE{\frac{\partial E}{\partial\theta_l}(\theta,Z) \,\Big|\, E_\theta(Z) = \tau(\theta)}
\end{equation*}
and
\begin{equation*}
\begin{split}
    &\EE{\frac{\partial \hat{\tau}_n}{\partial\theta_l}(\theta)\frac{\partial \hat{\tau}_n}{\partial\theta_p}(\theta)}\\
    &\to \EE{\frac{\partial E}{\partial\theta_l}(\theta,Z) \frac{\partial E}{\partial\theta_p}(\theta,Z) \,\Big|\, E_\theta(Z) = \tau(\theta)}.
\end{split}
\end{equation*}
Gathering all the indices $l,p$ completes the proof.
\end{proof}

\subsection*{Proof of Proposition~\ref{th:fundLemma}}

Let $H(s) = \begin{cases} 1, & s \geq 0 \\ 0, & s < 0 \end{cases}$ denote the Heaviside step function, and let $H_n(s) = \Phi\left( \frac{s}{\sigma_n} \right)$ denote a smooth approximation, where $\Phi$ denotes the cumulative distribution function (CDF) of the standard Gaussian distribution, and $\sigma_n > 0$ is a sequence such that $\sigma_n \to 0$ as $n \to \infty$. Note that $H_n(s) \to H(s)$ pointwise as $n \to \infty$, and that each $H_n$ is smooth.

By definition, we have
\begin{equation}
    \mathbb{P}\left[ E(\theta, X, Y) \leq \tau(\theta) \right] = \alpha.
\end{equation}

which we can rewrite as
\begin{equation}
    \mathbb{E}\left[ H\big( \tau(\theta) - E(\theta, X, Y) \big) \right] = \alpha.
    \label{eq:heaviside_expectation}
\end{equation}

Since $0 \leq H(s) \leq 1$ for all $s$, and $H_n(s) \to H(s)$ pointwise, by the Dominated Convergence Theorem, we have
\begin{align}
    \alpha &= \mathbb{E}\left[ H\big( \tau(\theta) - E(\theta, X, Y) \big) \right] \nonumber \\
    &= \mathbb{E}\left[ \lim_{n \to \infty} H_n\big( \tau(\theta) - E(\theta, X, Y) \big) \right] \nonumber \\
    &= \lim_{n \to \infty} \mathbb{E}\left[ H_n\big( \tau(\theta) - E(\theta, X, Y) \big) \right].
    \label{eq:limit_expectation}
\end{align}
Differentiating both sides with respect to $\theta$, we obtain
\begin{equation}
    0 = \frac{\partial}{\partial \theta} \lim_{n \to \infty} \mathbb{E}\left[ H_n\left( \tau(\theta) - E(\theta, X, Y) \right) \right], \label{eq:differentiation_step} 
\end{equation}
where $\alpha$ is a constant independent of $\theta$.
By interchanging the limit and differentiation, this becomes
\begin{equation}
    0 = \lim_{n \to \infty} \frac{\partial}{\partial \theta} \mathbb{E}\left[ H_n\left( \tau(\theta) - E(\theta, X, Y) \right) \right]. \label{eq:interchange_limit_derivative}
\end{equation}
To justify the interchange, we note that $f_n(\theta) = \mathbb{E}\left[ H_n\left( \tau(\theta) - E(\theta, X, Y) \right) \right]$ converges pointwise to $\alpha$, a constant. By uniform convergence of $f_n(\theta)$ and its derivative$\frac{\partial}{\partial \theta}$, we can exchange the limit and differentiation.
Using the Leibniz Integral Rule, we interchange differentiation and expectation:
\begin{equation}
    0 = \lim_{n \to \infty} \mathbb{E}\left[ \frac{\partial}{\partial \theta} H_n\left( \tau(\theta) - E(\theta, X, Y) \right) \right]. \\
    \label{eq:interchange_derivative_expectation}
\end{equation}
The interchange is valid because $H_n$ is infinitely differentiable, $\tau(\theta)$ and $E(\theta, X, Y)$ are continuously differentibale with respect to $\theta$, and the derivative $\frac{\partial}{\partial \theta} H_n(\tau(\theta) - E(\theta, X, Y))$ is continuous in $\theta$ and integrable.
Finally applying the chain rule to differentiate $H_n\left( \tau(\theta) - E(\theta, X, Y) \right)$ with respect to $\theta$, we get:
\begin{equation}
    0 = \lim_{n \to \infty} \mathbb{E}\left[ H_n'\left( \tau(\theta) - E(\theta, X, Y) \right) \left( \frac{\partial \tau}{\partial \theta}(\theta) - \frac{\partial E}{\partial \theta}(\theta, X, Y) \right) \right].
    \label{eq:chain_rule}
\end{equation}
Define
\begin{align*}
    \delta_n(s) &= H_n'(s) = \frac{1}{\sqrt{2\pi} \sigma_n} e^{-\frac{s^2}{2 \sigma_n^2}}, \\
    \gamma_n(\theta, x, y) &= \delta_n\left( \tau(\theta) - E(\theta, x, y) \right), \\
    \Delta(\theta, x, y) &= \frac{\partial \tau}{\partial \theta}(\theta) - \frac{\partial E}{\partial \theta}(\theta, x, y).
\end{align*}
Now, we can see that
\begin{equation}
    \frac{\partial}{\partial \theta} \mathbb{E}\left[ H_n\left( \tau(\theta) - E(\theta, X, Y) \right) \right] = \mathbb{E}\left[ \gamma_n(\theta, X, Y) \Delta(\theta, X, Y) \right].
\end{equation}
Let $a_n \asymp b_n$ denote asymptotic equivalence, meaning that $\lim_{n\to\infty} \frac{a_n}{b_n} = 1$, assuming $b_n \neq 0$ for finite $n$. From equation \eqref{eq:chain_rule}, it follows that
\begin{equation}
    \mathbb{E}\left[ \gamma_n(\theta, X, Y) \Delta(\theta, X, Y) \right] \asymp 0.
    \label{eq:pre-limit}
\end{equation}

\textbf{Analyzing the Expectation:}

Let $\varepsilon_n > 0$ be any sequence such that $\varepsilon_n = o(\sigma_n)$, meaning $\varepsilon_n / \sigma_n \to 0$ as $n \to \infty$. Define the set
\begin{equation}
    A_{\varepsilon_n}(\theta) = \left\{ \omega \in \Omega : -\varepsilon_n < E(\theta, X(\omega), Y(\omega)) - \tau(\theta) < \varepsilon_n \right\}.
\end{equation}

We can decompose the expectation in \eqref{eq:pre-limit} as
\begin{align}
    \mathbb{E}\left[ \gamma_n \Delta \right] &= \mathbb{P}\left( A_{\varepsilon_n}(\theta) \right) \mathbb{E}\left[ \gamma_n \Delta \,|\, A_{\varepsilon_n}(\theta) \right] + \mathbb{P}\left( A_{\varepsilon_n}^c(\theta) \right) \mathbb{E}\left[ \gamma_n \Delta \,|\, A_{\varepsilon_n}^c(\theta) \right].
    \label{eq:expectation_decomposition}
\end{align}

\textbf{Negligibility of the Second Term}: On the complement \( A_{\varepsilon_n}^c(\theta) \), the value \( \tau(\theta) - E(\theta, X, Y) \) is either greater than \( \varepsilon_n \) or less than \( -\varepsilon_n \). Therefore, for \(\ s \geq \varepsilon_n \) or \( s \leq -\varepsilon_n \), \( \delta_n(s) = H_n'(s) \) becomes very small. Particularly
\[
\delta_n(s) = \dfrac{1}{\sqrt{2\pi} \, \sigma_n} \exp\left( -\dfrac{s^2}{2\sigma_n^2} \right).
\] Since \( \varepsilon_n = o(\sigma_n) \) and \( \sigma_n \to 0 \), it follows that \( \dfrac{\varepsilon_n}{\sigma_n} \to \infty \). For \( |s| \geq \varepsilon_n \), we have:
$
\delta_n(s) \leq \dfrac{1}{\sqrt{2\pi} \, \sigma_n} \exp\left( -\dfrac{\varepsilon_n^2}{2\sigma_n^2} \right)
$. from \( \dfrac{\varepsilon_n}{\sigma_n} \to \infty \), it also follows that \( \exp\left( -\dfrac{\varepsilon_n^2}{2\sigma_n^2} \right) \to 0 \) and \[
\gamma_n(\theta, X, Y) \leq \dfrac{1}{\sqrt{2\pi} \, \sigma_n} \exp\left( -\dfrac{\varepsilon_n^2}{2\sigma_n^2} \right) \to 0.
\]
Therefore, \eqref{eq:expectation_decomposition} becomes
\begin{equation}
    \mathbb{E}\left[ \gamma_n \Delta \right] \asymp \mathbb{P}\left( A_{\varepsilon_n}(\theta) \right) \mathbb{E}\left[ \gamma_n \Delta \,|\, A_{\varepsilon_n}(\theta) \right].
    \label{eq:approximate_expectation}
\end{equation}
Plugging in \eqref{eq:approximate_expectation} into \eqref{eq:pre-limit} we get:
\begin{equation*}
    \mathbb{E}[\gamma_n(\theta,X,Y)\Delta(\theta,X,Y)\,|\,A_{\varepsilon_n}(\theta)] \asymp 0,
    \label{eq:simplified-expectation}
\end{equation*}

By noting that $\mathbb{P}(A_{\varepsilon_n}(\theta)) > 0$ for all $n$ due to continuity of $x\mapsto E(\theta,x,y)$. We can then rewrite as
\begin{equation}
    \mathbb{E}[\gamma_n(\theta,X,Y),|\,A_{\varepsilon_n}(\theta)]\frac{\partial\tau}{\partial\theta}(\theta) \asymp \mathbb{E}\Bigg[\gamma_n(\theta,X,Y)\frac{\partial E}{\partial\theta}(\theta,X,Y)\,|\,A_{\varepsilon_n}(\theta)\Bigg].
    \label{eq:pre-limit3}
\end{equation}
When conditioned on $A_{\varepsilon_n}(\theta)$, we have $\gamma_n(\theta,X,Y) \asymp \delta_n(0)$. Indeed, note that $\delta_n(\varepsilon_n) \leq \gamma_n(\theta,X,Y) \leq \delta_n(0)$, which we can rewrite as $\frac{\delta_n(\varepsilon_n)}{\delta_n(0)} \leq \frac{\gamma_n(\theta,X,Y)}{\delta_n(0)} \leq 1$. Noting that $\delta_n(\varepsilon_n) \asymp 1$, it indeed follows that $\gamma_n(\theta,X,Y) \asymp \delta_n(0)$. Therefore, \eqref{eq:pre-limit3} can be further simplified to
\begin{equation}
    \frac{\partial\tau}{\partial\theta}(\theta) \asymp \mathbb{E}\Bigg[\frac{\partial E}{\partial\theta}(\theta,X,Y) \,\big|\, A_{\varepsilon_n}(\theta) \Bigg],
    \label{eq:pre-limit4}
\end{equation}
which readily leads to
\begin{align*}
    \frac{\partial\tau}{\partial\theta}(\theta) &= \lim_{n\to\infty} \mathbb{E}\Bigg[\frac{\partial E}{\partial\theta}(\theta,X,Y) \,\big|\, A_{\varepsilon_n}(\theta) \Bigg]\\
    &= \lim_{\varepsilon\to 0} \mathbb{E}\Bigg[\frac{\partial E}{\partial\theta}(\theta,X,Y) \,\big|\, A_{\varepsilon}(\theta) \Bigg]\\
    &= \lim_{\varepsilon\to 0} \mathbb{E}\Bigg[\frac{\partial E}{\partial\theta}(\theta,X,Y) \,\big|\, -\varepsilon < E_\theta(X,Y) - \tau(\theta) < \varepsilon \Bigg]\\
    &= \mathbb{E}\Bigg[\frac{\partial E}{\partial\theta}(\theta,X,Y) \,\big|\, E_\theta(X,Y) = \tau(\theta)\Bigg].
\end{align*}
\subsection*{Proof of Theorem~\ref{th:VR_theorem}}
We start with two preliminary results that we will use in the proof.

\textbf{Some preliminaries.} First, we recall a well-known result. Let $k\sim \text{Binomial}(n, p)$ be a random variable sampled from a Binomial distribution with $n$ trials and with probability $p$ of success. The following holds:
\begin{equation}\label{eq:expBinom}
\Exp{\frac{1}{1+k}} = \frac{(1-(1-p)^{n+1})}{(n+1)p}.
\end{equation}
Note that this follows from the following simple steps:
\begin{equation*}
\begin{aligned}
\Exp{\frac{1}{1+k}} &= \sum_{k = 0}^n\frac{1}{1+k}\cdot\binom{n}{k}p^k\pars{1-p}^{n-k} \\&= \frac{1}{p(n+1)}\sum_{k = 0}^n\binom{n+1}{k+1}p^{k+1}(1-p)^{n-k}\\&
= \frac{1}{p(n+1)}\sum_{j = 1}^{n+1}\binom{n+1}{j}p^{j}(1-p)^{n+1-j}
\\&
= \frac{\pars{1-(1-p)^{n+1}}}{p(n+1)}.
\end{aligned}
\end{equation*}
Next, we state another well-known identity. Let us consider the following recursion:
\begin{equation*}
a_{n+1} = \rho\,a_n + b,
\end{equation*}
where $\rho>0$. Simply unrolling the recursion, we can obtain
\begin{equation}\label{eq:recursion}
a_n = \rho^na_0 + b\pars{\frac{1-\rho^n}{1-\rho}}.
\end{equation}
\textbf{Proof of the Theorem. }

Before we proceed, we introduce some notation. For simplicity, since $\hat{\tau}(\theta)$ converges \textit{almost surely} to ${\tau}(\theta)$, we assume $\hat{\tau}(\theta) = \tau(\theta)$ in the analysis. Let us then denote
\begin{equation}
\begin{aligned}
G_i(\theta)&=\frac{\partial E}{\partial \theta}(\theta, X_i,Y_i),\\
A_{\varepsilon, i}(\theta) &= \{\varepsilon\leq E(\theta, X_i,Y_i)-\tau(\theta)\leq \varepsilon\},
\\
R_{\varepsilon, n}(\theta)&=\sum_{i=1}^n\chi_{A_{\varepsilon, i}(\theta)},
\\
\Sn &= \{i\in[n]:  \chi_{\Ae} = 1\},
\label{eq:defEqs}
\end{aligned}
\end{equation}
where $\chi_{A_{\varepsilon, i}(\theta)}$ is an indicator function for the event $A_{\varepsilon, i}(\theta)$, i.e.,
\begin{equation}
\chi_{A_{\varepsilon, i}(\theta)}  = 
\begin{cases} 
    1, &\textnormal{if } |E_\theta(X_i,Y_i)-\tau(\theta)|\leq \varepsilon\\
    0, &\textnormal{if } |E_\theta(X_i,Y_i)-\tau(\theta)| > \varepsilon
\end{cases}.
\end{equation}
We are now ready to analyze the estimator $\hat{\eta}_{\varepsilon, n}(\theta)$ for ${\eta}_{\varepsilon}(\theta)$:
\begin{equation}\label{eq:estimator}
    \hat{\eta}_{\varepsilon, n}(\theta) =\begin{cases} 
        \frac{1}{R_{\varepsilon, n}(\theta)}\sum_{i = 1}^n\chi_{A_{\varepsilon, i}(\theta)}G_i(\theta), & \textrm{if}\hspace{0.1cm} R_{\varepsilon, n}(\theta)>0\\
        0& \textnormal{if } R_{\varepsilon, n}(\theta) = 0
    \end{cases},
\end{equation}
where $\varepsilon$ and $n$ are denoted explicitly to remove ambiguity.

Equipped with the basic results established earlier in this subsection, we can proceed first with proving assertion \emph{(i)}. Note that, by definition~\eqref{eq:estimator}, and because $\{X_i, Y_i\}_{i = 1}^n$ are sampled independently, we have
\begin{equation}
\Exp{\frac{1}{|\Sn|}\sum_{i\in \Sn}G_i|\bigcap_{i\in \Sn}\Ae} = \frac{1}{|\Sn|}\sum_{i\in \Sn}\Exp{\chi_{\Ae}G_i(\theta)|\Ae} = \eta_{\epsilon}.
\end{equation}
Also note that $\Sn = \emptyset$ is equivalent to $\Rn = 0$, and that 
\begin{equation}
\Prob{\Sn = \emptyset} = \Prob{\Rn = 0} = q_{\epsilon}(\theta)^n,    
\end{equation}
with $q_{\epsilon}(\theta) = 1-p_{\epsilon}(\theta)$ and $p_{\epsilon}(\theta) = \mathbb{P}(A_{\varepsilon, i}(\theta))$. For simplicity, we denote $p = p_{\varepsilon}(\theta)$ and $q = 1 - p$ for the remainder of the proof. 
Hence, we can get (i) as follows:
\begin{equation}
\Exp{\heta} = q^n\Exp{\heta|\Rn = 0} + (1-q^n)\eta_{\epsilon},
\end{equation}
where we used the fact that $\sum_{S\neq\emptyset}\Prob{\Sn = S} = 1-\Prob{\Sn = \emptyset} = 1-q^n$.

Now, we prove (ii). We start by analyzing $\Exp{\heta\heta^\top}$:
\begin{equation}
\begin{aligned}
\Exp{\heta\heta^\top} &= \sum_{S\subseteq[n]}\Prob{\Sn = S}\Exp{\heta\heta^\top|\Sn = S}
\\& = \Prob{\Sn = \emptyset}\Exp{\heta\heta^\top|\Sn = \Sn}
\\& +\sum_{S\neq\emptyset}\Prob{\Sn = S}\Exp{\pars{\frac{1}{|S|}\sum_{i\in S}G_i(\theta)}\pars{\frac{1}{|S|}\sum_{i\in S}G_i(\theta)}^\top|\bigcap_{i\in S}\Ae}
\\&
= \Prob{\Sn = \emptyset}\Exp{\heta\heta^\top|\Rn = 0}
\\&+\sum_{S\neq\emptyset}\Prob{\Sn = S}\frac{1}{|S|^2}\sum_{i\in S}\sum_{j\in S}\Exp{G_i(\theta)G_j(\theta)^\top|\Ae, \Aej}.
\end{aligned}
\end{equation}
Now note that
\begin{equation}
\begin{aligned}
\Exp{G_i(\theta)G_j(\theta)^\top|\Ae, \Aej} &= \delta_{i,j}\Exp{G_i(\theta)G_i(\theta)^\top|\Ae} \\&+(1-\delta_{ij})\Exp{G_i(\theta)|\Ae}\Exp{G_j(\theta)^\top|\Aej}
\\&=\Exp{G_i(\theta)|\Ae}\Exp{G_j(\theta)^\top|\Ae}
\\
&+ \delta_{ij}\left( \Exp{G_i(\theta)G_i(\theta)^\top|\Ae} 
    - \Exp{G_i(\theta)|\Ae}\Exp{G_i(\theta)^\top|\Ae} \right) \\
&=
\eta_{\epsilon}\eta_{\epsilon}^\top + \delta_{ij}\Sigma_{\epsilon},
\end{aligned}
\end{equation}
where we used the fact that $\{X_i, Y_i\}_{i = 1}^n$ are sampled i.i.d. and the definitions in~\eqref{eq:defEqs}. Now, we can proceed as follows:
\begin{equation}\label{eq:boundStat}
\begin{aligned}
\Exp{\heta\heta^\top} &= q^n\Exp{\heta\heta^\top|\Rn = 0}
\\& 
+ \sum_{S\neq\emptyset}\frac{\Prob{\Sn = S}}{|S|}\pars{|S|\eta_{\epsilon}\eta_{\epsilon}^\top +\Sigma_{\epsilon}}\\&
= q^n\Exp{\heta\heta^\top|\Rn = 0}\\&
+ (1-q^n)\eta_{\epsilon}\eta_{\epsilon}^\top + f_n\Sigma_{\epsilon},
\end{aligned}
\end{equation}
where we write
\begin{equation}
    f_n = \sum_{S\neq\emptyset}\frac{\Prob{\Sn = S}}{|S|}.
\end{equation}
Now, we will show that 
\begin{equation}
f_n \leq \frac{2-p}{p n}.
\end{equation}
First, let us define the following function
\begin{equation}
f(k) = \begin{cases}
    0, \ &\text{if } k = 0\\
    \frac{1}{k} &\text{if } k\geq1
\end{cases},
\end{equation}
and note that 
\begin{equation}
\begin{aligned}
f_n = \Exp{f(|\Sn|)} = \Exp{f(\Rn)}.
\end{aligned}
\end{equation}
Now note that
\begin{equation}
\begin{aligned}
f_{n+1} &= \Exp{f(R_{\epsilon, n+1}(\theta))}\\&
 = \Prob{\Ae^c}\Exp{f(\Rn)} + \Prob{\Ae}\Exp{f(1+\Rn)}
 \\&
 = q\Exp{f(\Rn)} + p\Exp{f(1+\Rn)}
 \\&
 = qf_n + p\Exp{\frac{1}{1+\Rn}}
 \\&
 = qf_n + \frac{1-q^{n+1}}{n+1}, 
\end{aligned}
\end{equation}
where, in the last equation, we used the fact shown in the preliminaries (see~\eqref{eq:expBinom}):
\begin{equation}
\Exp{\frac{1}{1+\Rn}} = \frac{1-q^{n+1}}{p(n+1)}.
\end{equation}
Now let $a_n = nf_n$. We can write
\begin{equation}
\begin{aligned}
(n+1)f_{n+1} = (n+1)\pars{qf_n + \frac{1-q^n}{n+1}},
\end{aligned}
\end{equation}
from which we obtain the following recursion:
\begin{equation}
\begin{aligned}
a_{n+1} &= qnf_n + qf_n + (1-q^{n+1})
\\&
= qa_n + qf_n + (1-q^{n+1})
\\&
\leq qa_n + 1+q,
\end{aligned}
\end{equation}
where we used the fact that $f_n \leq 1$ and that $1-q^n\leq 1$. With this recursion, we can now use the result illustrated in the preliminaries in \eqref{eq:recursion} and get, using $q = 1-p$,
\begin{equation}
    a_n\leq q^na_0 + \frac{1-q^n}{1-q}(1+q)\leq \frac{2-p}{p}.
\end{equation}
From the above inequality, we can conclude that 
\begin{equation}
0\leq f_n= \frac{a_n}{n}\leq \frac{2-p}{p n}.
\end{equation}
Plugging this last result in~\eqref{eq:boundStat}, we can get
\begin{equation}
\Exp{\heta\heta^\top}\preceq q^n\Exp{\heta\heta^\top|\Rn = 0} + (1-q^n)\eta_{\epsilon}\eta_{\epsilon}^\top + \frac{2-p}{p n}\Sigma_{\epsilon}.
\end{equation}
We are now in the position to write and bound $\text{cov}(\heta)$:
\begin{equation}
\begin{aligned}
\text{cov}\left({\hat{\eta}_{\epsilon,n}(\theta)}\right) &= \mathbb{E}\left[{\hat{\eta}_{\epsilon,n}(\theta)\hat{\eta}_{\epsilon,n}(\theta)^\top}\right] -
\left[{\hat{\eta}_{\epsilon,n}(\theta)}\right]\left[{\hat{\eta}_{\epsilon,n}(\theta)^\top}\right]
\\&
\preceq q^n\mathbb{E}\left[{\hat{\eta}_{\epsilon,n}(\theta)\hat{\eta}_{\epsilon,n}(\theta)^\top|R_{\epsilon, n}(\theta) = 0}\right] + (1-q^n)\eta_{\epsilon}\eta_{\epsilon}^\top + \frac{2-p}{p n}\Sigma_{\epsilon}
\\&-\left({q^n\mathbb{E}\left[{\hat{\eta}_{\epsilon,n}(\theta)|R_{\epsilon, n}(\theta) = 0}\right]+ (1-q^n)\eta_{\epsilon}}\right)
\\&\cdot\left({q^n\mathbb{E}\left[{\hat{\eta}_{\epsilon,n}(\theta)|R_{\epsilon, n}(\theta) = 0}\right] + (1-q^n)\eta_{\epsilon}}\right)^\top
\\&=\frac{2-p}{p n}\Sigma_{\epsilon} + (1-q^n)\eta_{\epsilon}\eta_{\epsilon}^\top - (1-q^n)^2\eta_{\epsilon}\eta_{\epsilon}^\top
\\&
= \frac{2-p}{p n}\Sigma_{\epsilon} + (1-q^n)(1  - (1 - q^n))\eta_{\epsilon}\eta_{\epsilon}^\top
\\&
= \frac{2-p}{p n}\Sigma_{\epsilon} + (1-q^n)q^n\eta_{\epsilon}\eta_{\epsilon}^\top
\\&\preceq\frac{2-p}{p n}\Sigma_{\epsilon} + q^n\eta_{\epsilon}\eta_{\epsilon}^\top,
\end{aligned}
\end{equation}
where we used (i), the fact that $1-q^n\leq1$ and the fact that $\Exp{\heta|\Rn = 0} = 0$, which follows by~\eqref{eq:estimator}.
\section{Useful Facts and Derivations}\label{app:useful}
In this appendix, we provide, for completeness, some useful facts and explicit derivations of properties that we use in the paper. In particular, we explicitly derive equation~\eqref{eq:eq8} using the generalize chain rule (GCR).

\subsection{Explicit derivation of equation~\eqref{eq:eq8}}
\label{app:GCRb2}
Please note that equation~\eqref{eq:eq8} follows from taking the derivative of a function of multiple variables and the chain rule. This is also called the \emph{generalized chain rule} in some textbooks \citep{herman2013calculus}(see Theorem 4.10). In the paper, when writing 
\begin{equation}
\frac{\partial}{\partial \theta}\ell(\theta, \hat{\tau}(\theta), X, Y),
\end{equation}
we mean the \textit{total} derivative of the function $\theta \mapsto l(\theta, \hat{\tau}(\theta), X, Y)$, evaluated at a dummy $\theta$.
On the other hand, when writing 
\begin{equation}
\frac{\partial \ell}{\partial \theta}(\theta, \hat{\tau}(\theta), x, y),
\end{equation}
we mean the \textit{partial} derivative of $\ell(\theta, q, x, y)$ with respect to $\theta$, evaluated at $(\theta, q, x, y) = (\theta, \hat{\tau}(\theta), X, Y)$. The difference is that, in the partial derivative, $\hat{\tau}(\theta)$ is treated as a constant, whereas for the total derivative we do not treat $\hat{\tau}(\theta)$ as a constant.
Now, the generalized chain rule (in vector form) can be written as follows: let $u(\theta)\in\mathbb{R}^n$ and $v(\theta)\in\mathbb{R}^m$ be two differentiable functions of $\theta$, and $f(u,v)$ a differentiable function of two vector variables $u$ and $v$. Then
\begin{equation}\label{eq:fuv}
\frac{\partial }{\partial \theta}f(u(\theta), v(\theta)) = \left(\frac{\partial u}{\partial \theta}(\theta)\right)^\top \frac{\partial f}{\partial u}(u(\theta), v(\theta)) + \left(\frac{\partial v}{\partial \theta}(\theta)\right)^\top \frac{\partial f}{\partial v}(u(\theta), v(\theta)),
\end{equation}
where $\frac{\partial u}{\partial \theta}(\theta)$ is the Jacobian of $u(\theta)$, i.e., the matrix with $\frac{\partial u_i}{\partial \theta_j}(\theta)$ in the $i$-th row and $j$-th column (equivalently, $\frac{\partial v}{\partial \theta}(\theta)$ is the Jacobian of $v(\theta)$). Note that in the case of $\ell(\theta, \hat{\tau}(\theta), x, y)$, $x$ and $y$ do not depend on $\theta$ so we can focus on $\ell$ as a function of the two functions $u(\theta) = \theta$ and $v(\theta) = \hat{\tau}(\theta)$. Replacing these $u(\theta)$ and $v(\theta)$ in equation~\eqref{eq:fuv}, and replacing $f(u(\theta), v(\theta))$ with $\ell(\theta, \hat{\tau}(\theta), x, y)$ we see that then
\begin{equation}
\frac{\partial}{\partial \theta}\ell(\theta, \hat{\tau}(\theta), x, y) = \frac{\partial \ell}{\partial \theta}(\theta, \hat{\tau}(\theta), x, y) + \frac{\partial \ell}{\partial \hat{\tau}}(\theta, \hat{\tau}(\theta), x, y)\frac{\partial \hat{\tau}}{\partial \theta}(\theta),
\end{equation}
which is precisely equation \eqref{eq:eq8} in the main paper, where we used the fact that $\left(\frac{\partial \theta}{\partial \theta}\right) = I_d$, where $I_d$ is a $d\times d$ identity matrix, with $d$ the dimension of $\theta$.
\newline
Given that usually in textbooks the generalized chain rule (GCR) is only shown for scalar multi-variable functions, we now report the derivation of equation \eqref{eq:eq8} using the scalar GCR as reported and proved in the statement of Theorem 4.10 in \citep{herman2013calculus}. Hence, we will now provide the derivation of \eqref{eq:eq8} at a more granular level. Consider a differentiable function $\ell$ of $k$ variables, $\ell:\mathbb{R}^k\rightarrow \mathbb{R}$. Now let $f_1, ..., f_k$ be differentiable functions, with $f_i:\mathbb{R}^d\rightarrow\mathbb{R}$, for $i = 1, ..., k$ and some $d\geq1$. Then, denoting a vector $[t_1, ..., t_d] \in \mathbb{R}^d$ and $w = \ell(f_1(t_1, ..., t_d), ..., f_k(t_1, ..., t_d))$ we have (GCR):
\begin{equation}\label{eq:genChainRule}
\frac{\partial w}{\partial t_j} = \sum_{i =1}^k \frac{\partial w}{\partial f_i}\frac{\partial f_i}{\partial t_j}.
\end{equation}
Now note that in the case of our paper, we have $w = \ell(\theta, \hat{\tau}(\theta), x, y)$. Note that $x$ and $y$ have no dependency on parameters in $\theta$ and hence their derivatives will be zero. We can then focus on $\theta$ and $\hat{\tau}(\theta)$. For convenience, note that we can write $\theta = \left[\theta_1, ..., \theta_d\right]$. Now note that the gradient of $w$ is
\begin{equation}
\frac{\partial}{\partial \theta}\left[w\right] = \left[\frac{\partial w}{\partial \theta_1}, ..., \frac{\partial w}{\partial \theta_d}\right]^\top.
\end{equation}
Now note that, for some $j \in \{1, ...,  d\}$, using the chain rule~\eqref{eq:genChainRule} above, 
\begin{equation}
\begin{aligned}
\frac{\partial w}{\partial \theta_j} &= \sum_{i =1}^d \frac{\partial w}{\partial \theta_i}\frac{\partial \theta_i}{\partial \theta_j} + \frac{\partial \ell}{\partial \hat{\tau}}(\theta, \hat{\tau}(\theta), x, y)\frac{\partial \hat{\tau}}{\partial \theta_j}(\theta)
\\&+ \frac{\partial w}{\partial x}\frac{\partial x}{\partial \theta_j}+ \frac{\partial w}{\partial y}\frac{\partial y}{\partial \theta_j}
\\&=
\frac{\partial \ell}{\partial \theta_j}(\theta, \hat{\tau}(\theta), x, y) + \frac{\partial \ell}{\partial \hat{\tau}}(\theta, \hat{\tau}(\theta), x, y)\frac{\partial \hat{\tau}}{\partial \theta_j}(\theta),  
\end{aligned}
\end{equation}
where we used the fact that $\frac{\partial \theta_i}{\partial \theta_j} = 0$ if $i\neq j$ and $\frac{\partial \theta_i}{\partial \theta_i} = 1$. We also explicitly used the fact that $\frac{\partial x}{\partial \theta_j} = 0$ and $\frac{\partial y}{\partial \theta_j} = 0$ because the samples do not depend on the parameter $\theta$. Stacking together $\frac{\partial w}{\partial \theta_j}$ we can see that we obtain precisely equation \eqref{eq:eq8} of the paper:
\begin{equation}
\begin{aligned}
\frac{\partial}{\partial \theta}\left[w\right] &= \frac{\partial}{\partial \theta}\left[\ell(\theta, \hat{\tau}(\theta), X, Y)\right]
\\& = \frac{\partial \ell}{\partial \theta}(\theta, \hat{\tau}(\theta), X, Y) + \frac{\partial \ell}{\partial \hat{\tau}}(\theta, \hat{\tau}(\theta), X, Y)\frac{\partial \hat{\tau}}{\partial \theta}(\theta).
\end{aligned}
\end{equation}
\newpage
\section{Additional experiments}
\label{appendix:m_tuning_experiments}
Here, we provide additional experimental results to complement the findings in the main paper.
\subsection{GMM}
As a warm-up, we validate the results of Theorem~\ref{th:VR_theorem} on a synthetic Gaussian Mixture Model (GMM) dataset. To tune the $\varepsilon$-estimator in~\eqref{eq:eta_hat_thresh}, we employ the $m$-ranking method described in section~\ref{sec:estimation}.
%
%
%
%
%
%
The results, as shown in Figure~\ref{fig:GMM}, illustrate that our estimator (\texttt{VR-ConfTr}) reduces variance effectively, while the naive one (\texttt{ConfTr}) is sample inefficient. 

%
%

\begin{figure}[!h]
    \centering    
\includegraphics[width=0.49\linewidth]{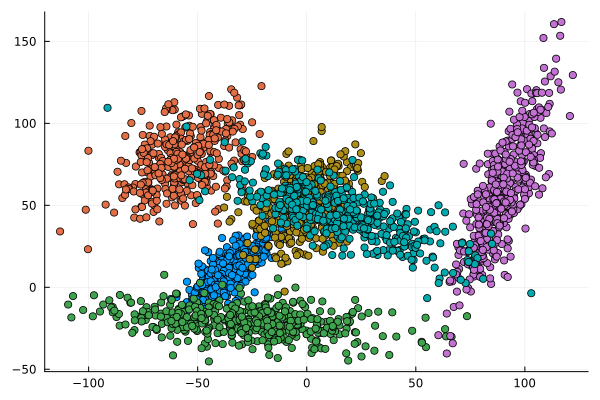}
\includegraphics[width=0.49\linewidth]{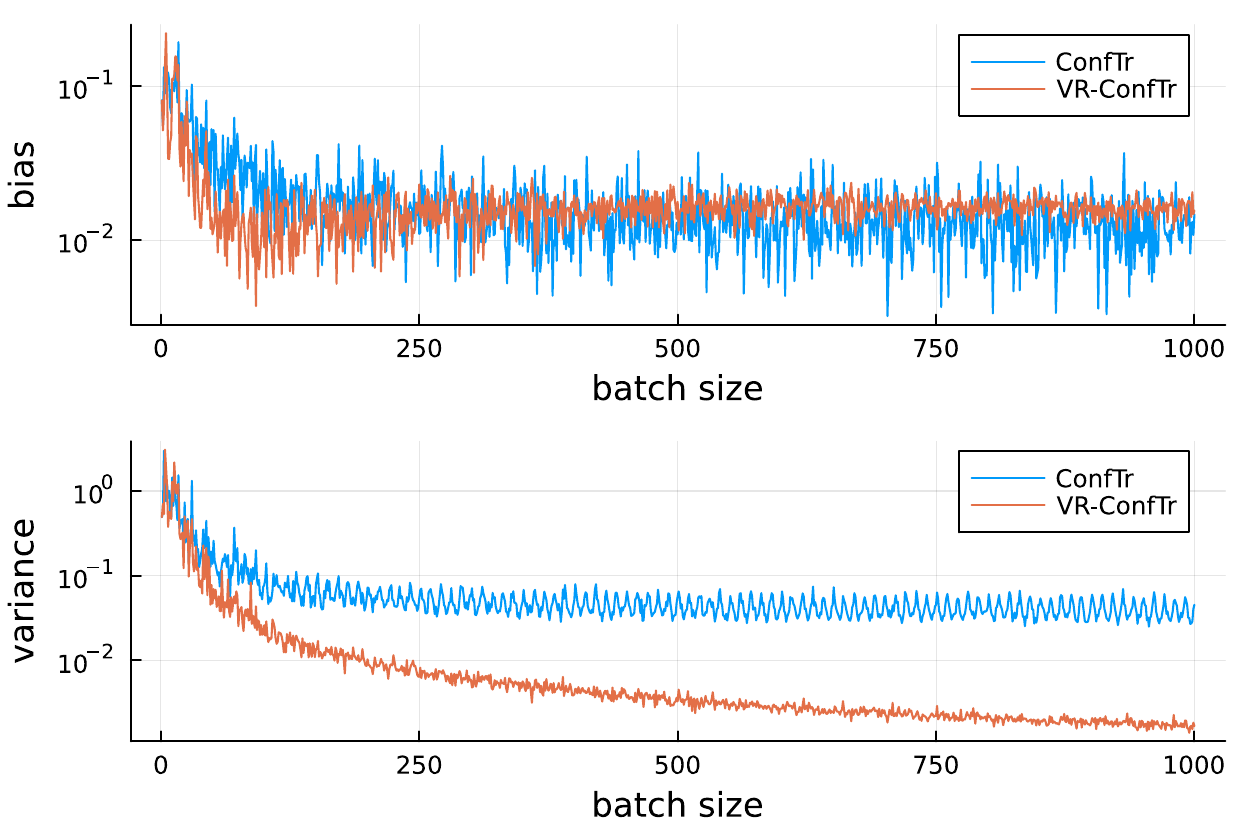}
    \caption{Sample batch from the GMM distribution (left); bias and variance for the quantile gradient estimates, comparing \texttt{ConfTr} and \texttt{VR-ConfTr} on the GMM dataset (right).}
    \label{fig:GMM}
\end{figure}

\subsection{Additional Training Curves}
We first present additional training curves, specifically the test loss and accuracy per epoch for MNIST, Fashion-MNIST, KMNIST, and OrganAMNIST. These plots highlights the performance throughout the training process, providing further insights into convergence behavior and generalization performance. It can be seen that the test loss exhibits a pattern similar to the training loss in \ref{fig:train_trajectories}. In terms of accuracy, \texttt{VR-ConfTr} achieves higher accuracy than \texttt{ConfTr}.
\begin{figure}[!htp]
\centering
    \includegraphics[width=0.24\linewidth]{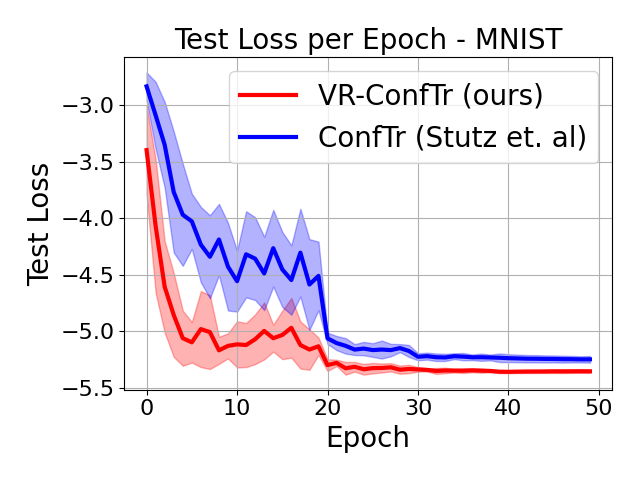}
    \includegraphics[width=0.24\linewidth]{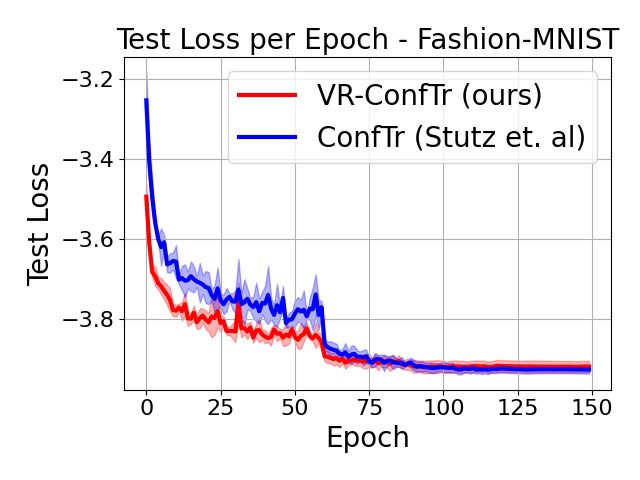}
    \includegraphics[width=0.24\linewidth]{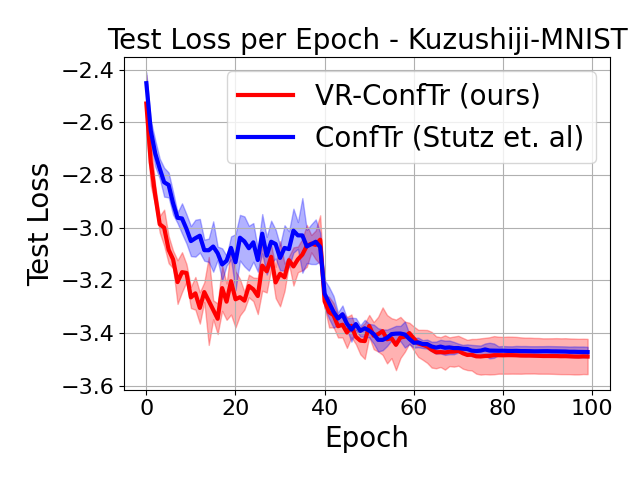}
    \includegraphics[width=0.24\linewidth]{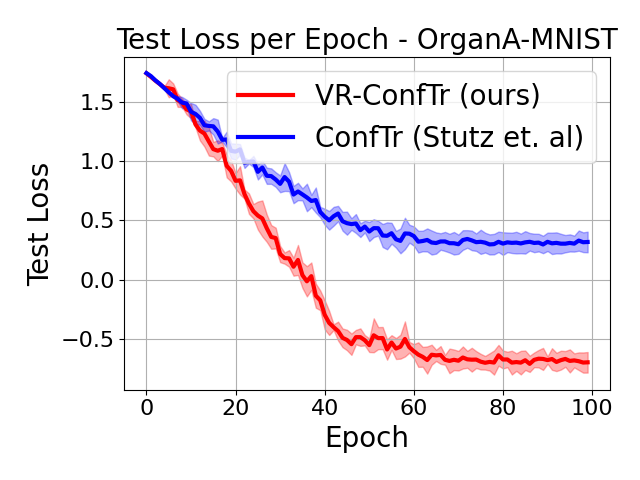}
    \includegraphics[width=0.24\linewidth]{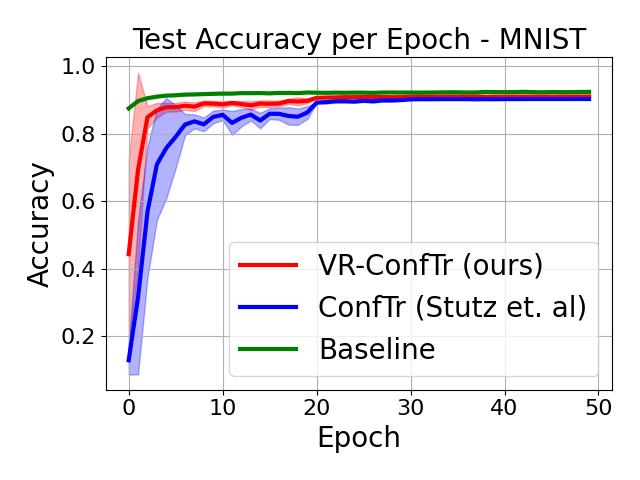}
    \includegraphics[width=0.24\linewidth]{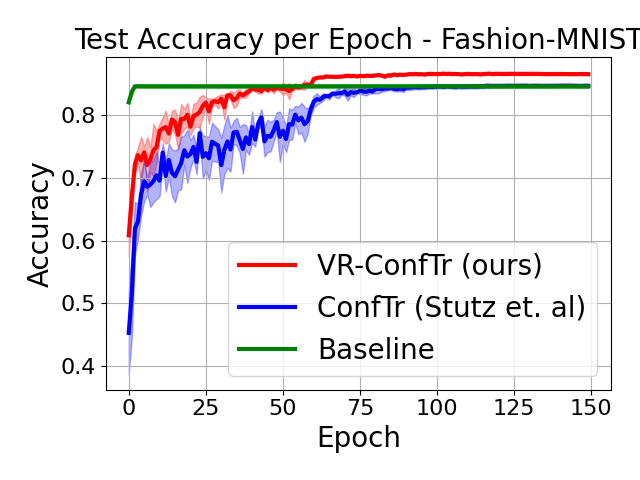}
    \includegraphics[width=0.24\linewidth]{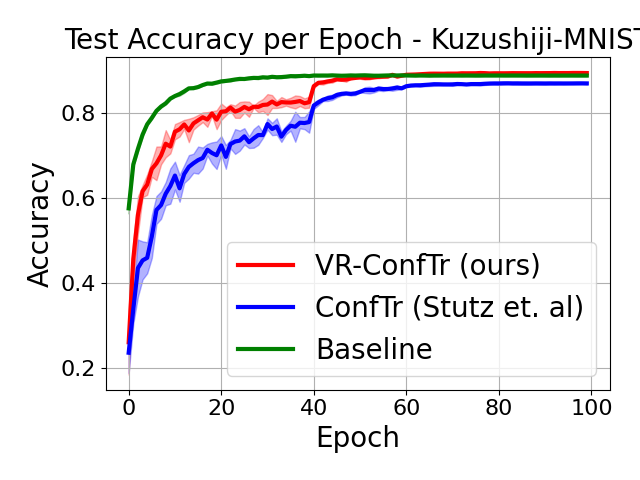}
    \includegraphics[width=0.24\linewidth]{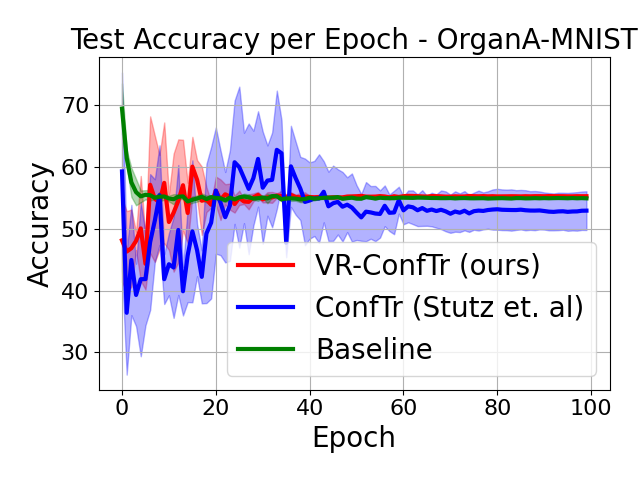}
    \caption{Learning curves for MNIST, Fashion-MNIST, Kuzushiji-MNIST, and OrganAMNIST. For each dataset, we show the test loss on the first row and tets accuracy on the bottom row at the end of each epoch.}
    \label{fig:additional_trajectories}
\end{figure}

\subsubsection{Variance of the gradients over the course of training}
\label{appendix:gradient-analysis}
In this section, we present visualization for the variance of the estimated quantile gradients during training for our proposed method \texttt{Vr-ConfTr}, compared to \texttt{ConfTr} in figure~\ref{fig:bias-variance-grad-fmnist}. We conduct this experiment on the MNIST dataset, using the $m$-ranking estimator with \texttt{Vr-ConfTr}, and evaluate performance across different batch sizes. This analysis aims to empirically substantiate our claim that \texttt{Vr-ConfTr} reduces variance of the estimated quantile gradients over the epochs, leading to more stable gradient updates and improved final performance. Furthermore, we demonstrate that with an appropriate choice of the hyperparameter $m$ for the $m$-ranking estimator, \texttt{Vr-ConfTr} not only reduces variance but also shows improvements in terms of the bias of the estimated quantile gradients during training. In order to compute the variance and bias for the estimated quantile gradient $\widehat{\frac{\partial \tau}{\partial \theta}}$, we estimate the population quantile $\tau(\theta)$ and its gradient $\frac{\partial \tau}{\partial \theta}$ at each model update utilizing the full training, calibration, and test datasets.

\begin{figure}[!htp]
    \centering
    \includegraphics[width=0.49\linewidth]{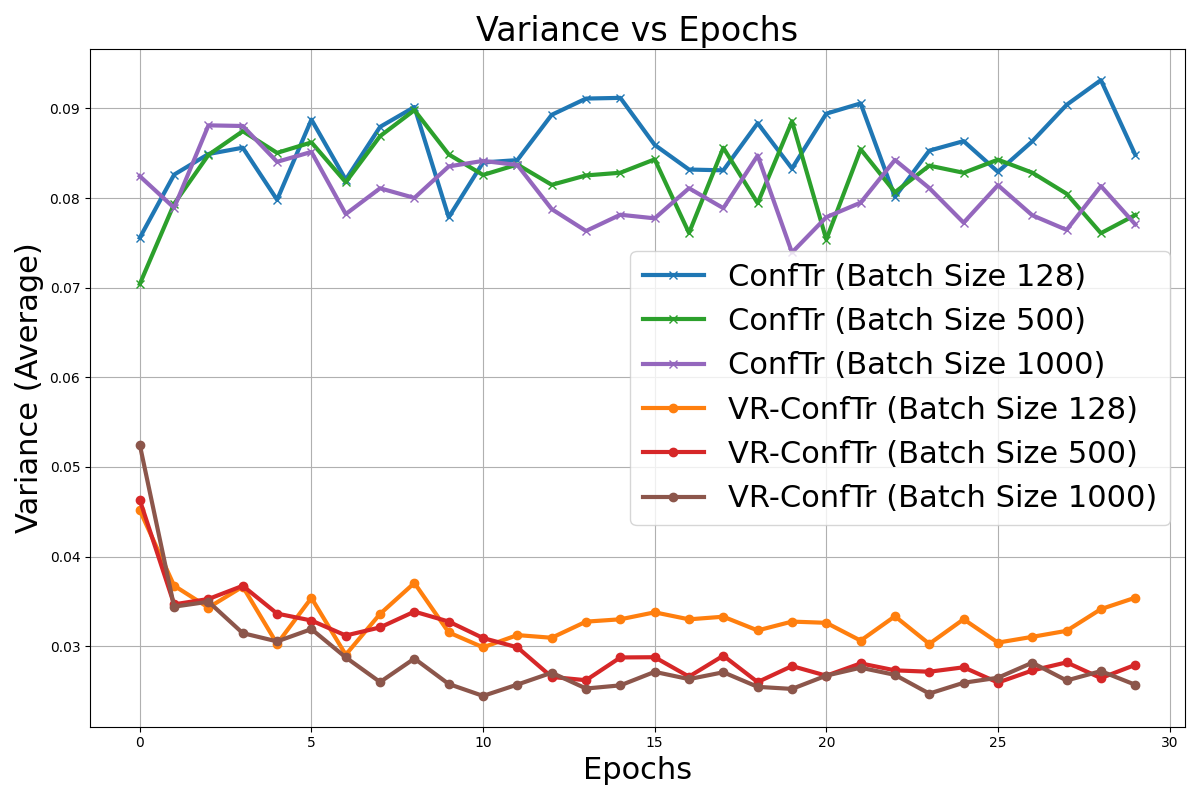}
    \includegraphics[width=0.49\linewidth]{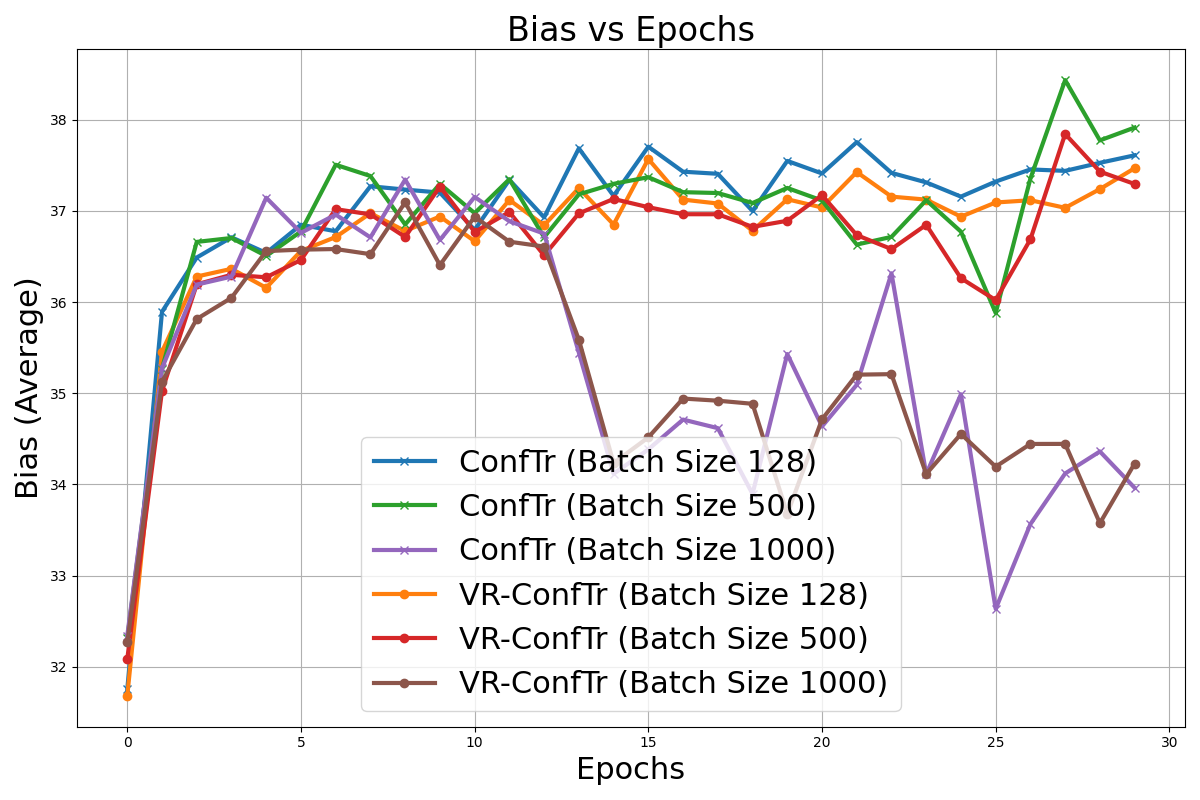}
    \caption{Variance and bias of the estimated quantile gradients during training for \texttt{ConfTr} and \texttt{Vr-ConfTr}, evaluated on the MNIST dataset across different batch sizes. The left figure shows the variance of the gradients over epochs. The right panel illustrates the bias of the estimated gradients, demonstrating that \texttt{Vr-ConfTr} maintains low bias while effectively reducing variance.}
    \label{fig:bias-variance-grad-fmnist}
\end{figure}

\subsection{Ablation study for $m$ and $\varepsilon$}
\label{appendix:ablation}
\subsubsection{$\varepsilon$-threshold tuning ablation study}
\label{appendix:epsilon-ablation-gmm}
This study evaluates the bias and variance of  the $\widehat{\frac{\partial{{\tau}}}{{\partial \theta}}}$ using the $\varepsilon$-threshold estimator and tuning the threshold $\varepsilon$, with \texttt{VR-ConfTr} for the GMM dataset depicted in figure~\ref{fig:GMM}. Figure~\ref{fig:eps-gmm-ablation} shows how varying $\varepsilon$ impacts the estimator's performance, highlighting the trade-offs between bias and variance of $\widehat{\frac{\partial{{\tau}}}{{\partial \theta}}}$ as  $\varepsilon$ changes.

\begin{figure}[!htp]
    \centering
\includegraphics[width=0.49\linewidth]{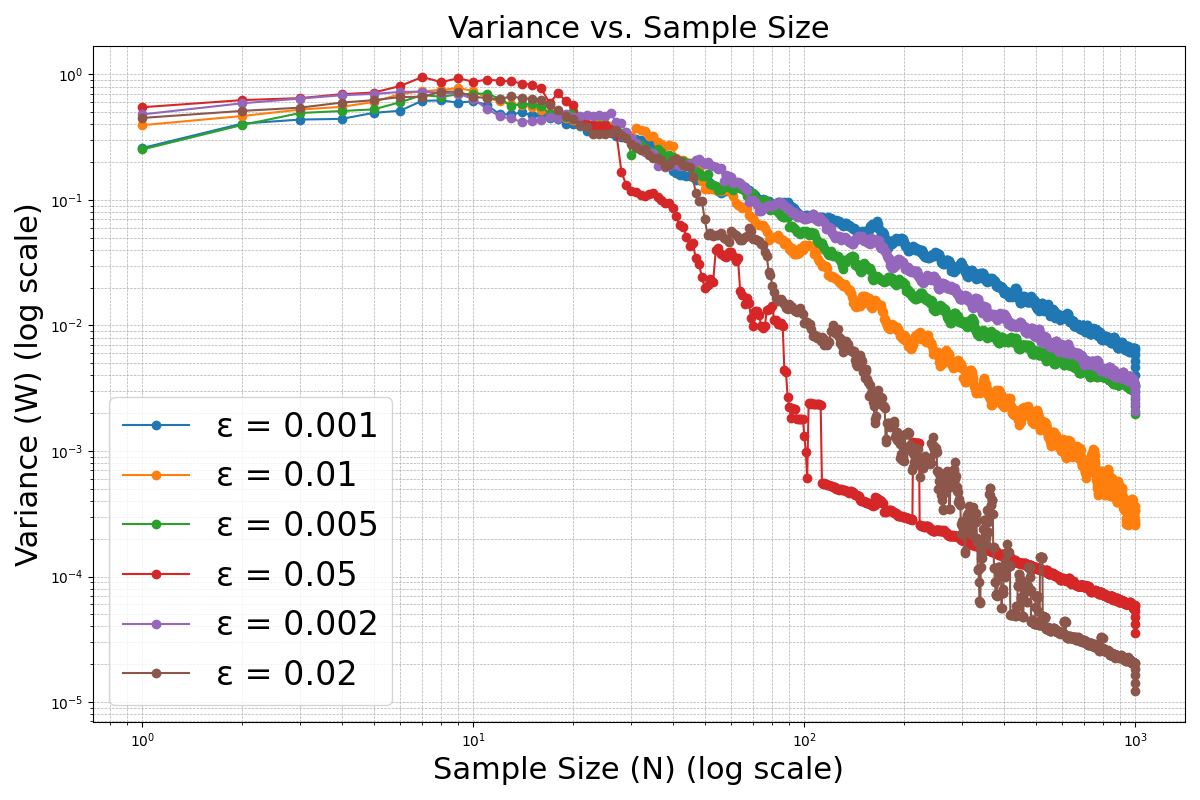}
\includegraphics[width=0.49\linewidth]{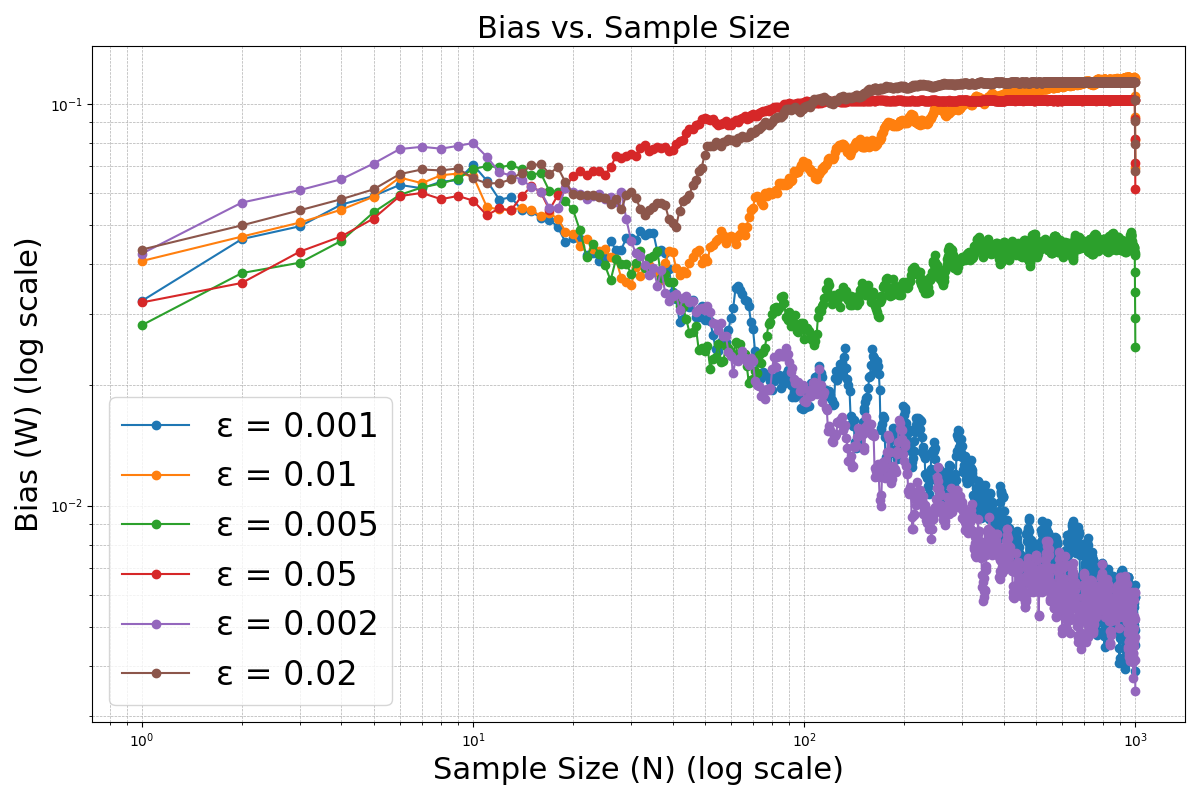}
    \caption{Bias and variance for the quantile gradient estimates tuning the $\varepsilon$ threshold in \texttt{VR-ConfTr} on the GMM dataset. The left panel shows the variance, and the right panel shows the bias for different $\varepsilon$ values.}
    \label{fig:eps-gmm-ablation}
\end{figure}

\subsubsection{$\varepsilon$-threshold tuning with $m$-ranking ablation study}
We evaluate the bias and variance of $\widehat{\frac{\partial{{\tau}}}{{\partial \theta}}}$ this time using the $m$-ranking strategy to finetune $\varepsilon$ with \texttt{VR-ConfTr} for the GMM dataset. 
Figure~\ref{fig:m-gmm-ablation} shows how varying $m$ impacts the estimator's performance, highlighting the trade-offs between bias and variance of $\widehat{\frac{\partial{{\tau}}}{{\partial \theta}}}$ as $m$ changes. Here $m$ explicitly depends on the desired miscoverage rate $\alpha$ and the sample size $n$.
\label{appendix:m-ablation-gmm}
\begin{figure}[!htp]
    \centering
    \includegraphics[width=0.49\linewidth]{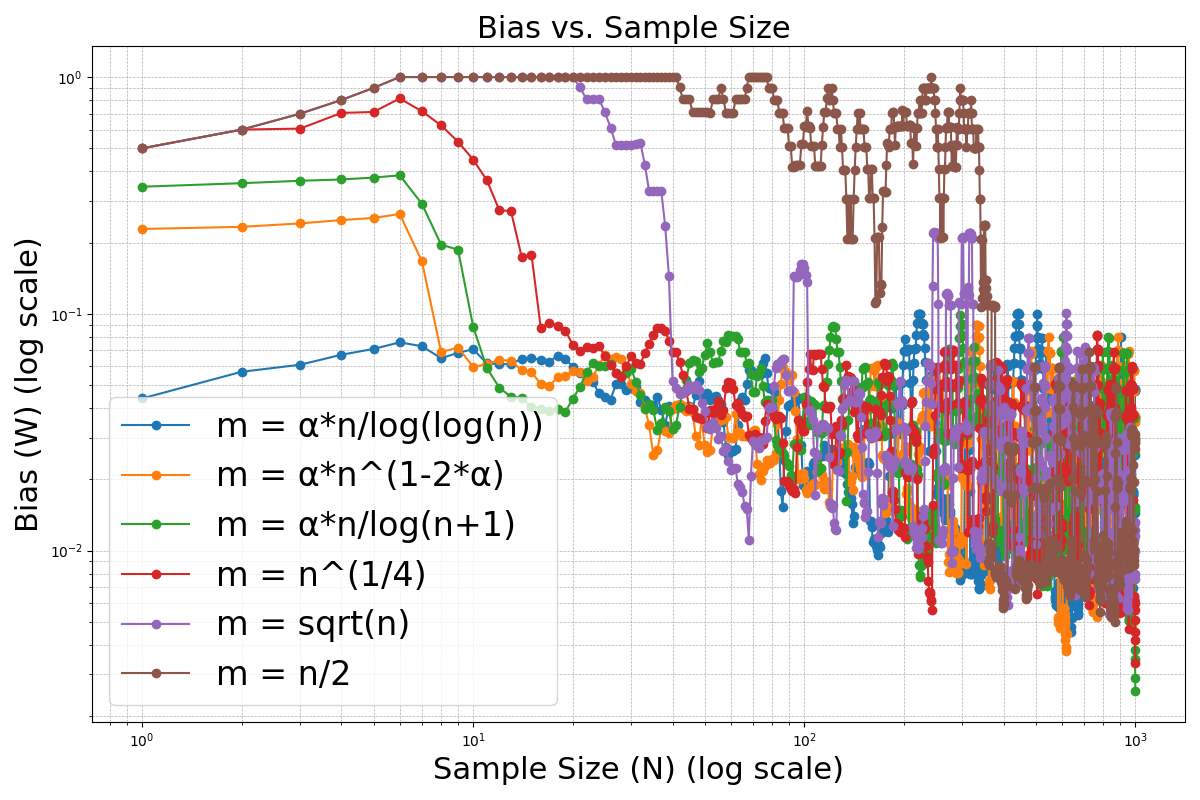}
    \includegraphics[width=0.49\linewidth]{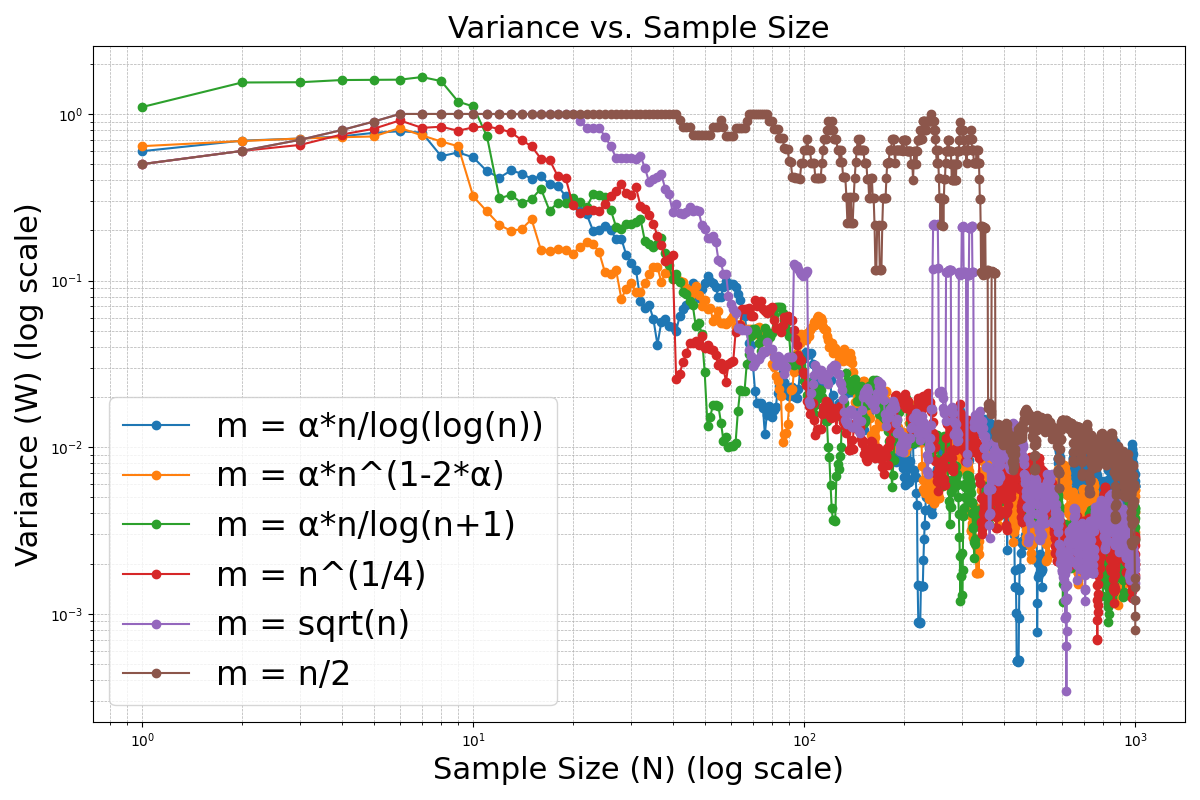}
    \caption{Bias and variance for the quantile gradient estimate using the $m$-ranking to adaptively tune $\varepsilon$ with \texttt{VR-ConfTr} on the GMM dataset. The left panel shows the bias, and the right panel shows the variance for different $m$ values}
    \label{fig:m-gmm-ablation}
\end{figure}
\subsubsection{On tuning the $\epsilon$-threshold with $m$-ranking}
\label{appendix:epsilon-tuning-fmnist}
%
In practice, using the $\varepsilon$-estimator, when training the models, we noticed that a ``good" value of $\varepsilon$ \textit{varies significantly} across iterations. Note that a good value of the threshold $\varepsilon$ not only depends on the specific batch $B_{\text{cal}}$ at a given iteration, but also on the model parameters $\theta$ at that iteration. Hence, hyper-parameter tuning with the $\varepsilon$-threshold estimator requires some heuristic to adapt the threshold to specific iterations. In this sense, the $m$-ranking estimator is a natural heuristic for a batch and parameter-dependent choice of the threshold $\varepsilon$. We noticed indeed that performing hyper-parameter tuning of the $m$-ranking estimator we were able to provide a good value of $m$ to be used \textit{across all iterations}, which from the point of view of hyper-parameter tuning is a great advantage. 
\\
To empirically illustrate this connection and validate the importance of dynamically tuning the $\varepsilon$-threshold estimator, figure~\ref{fig:optimal-eps-tuning-fmnist} presents the adaptive choice of the $\varepsilon$-threshold estimator on the Fashion-MNIST dataset when using $m$-ranking tuning strategy with $m=6$. 
\begin{figure}[!htp]
    \centering
    \includegraphics[width=0.5\linewidth]{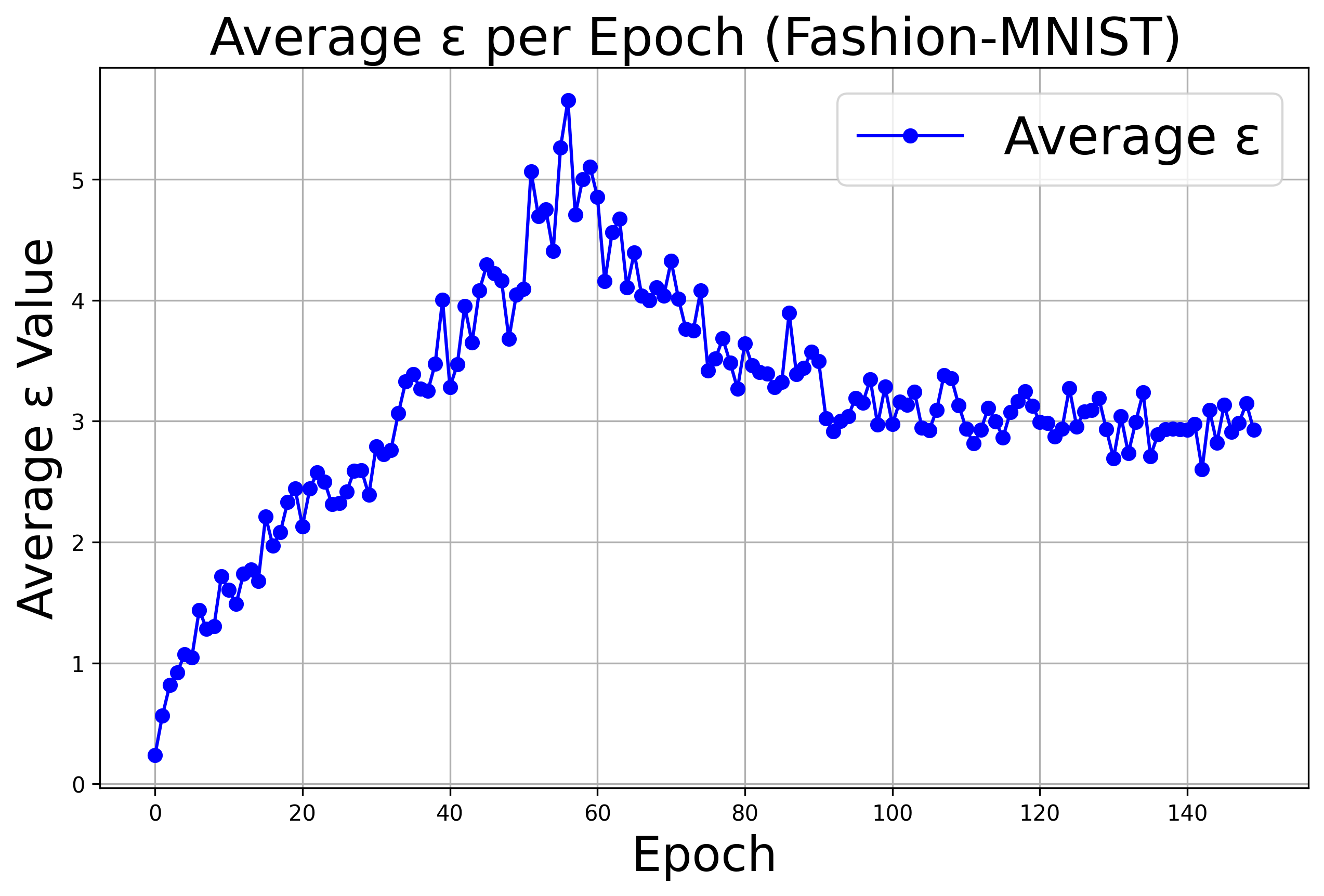}
    \caption{Adaptive tuning of the $\varepsilon$-threshold estimator on Fashion-MNIST using the $m$-ranking strategy. The variability in $\varepsilon$ underscores the necessity of dynamic adjustment in threshold-based approaches.}
    \label{fig:optimal-eps-tuning-fmnist}
\end{figure}
\subsection{Class-conditional coverage and set size}
\label{appendix:class-conditional}
We evaluated the trained models in terms of class-conditional coverage and set size, using the same CP-procedure applied post-training with the standard \texttt{THR} method and $\alpha = 0.01$. Figure~\ref{fig:combined_results} displays the class-conditional coverage and set sizes for each dataset. The results show the effectiveness of \texttt{Vr-ConfTr} in achieving reliable class-conditional coverage while outperforming \texttt{ConfTr} in terms of producing smaller class-conditional prediction set sizes. The results are taken as the average over all the training and testing trials.

\begin{figure*}[!htp]
    \centering
    
    \begin{minipage}{0.48\textwidth}
        \centering
        \includegraphics[width=\linewidth]{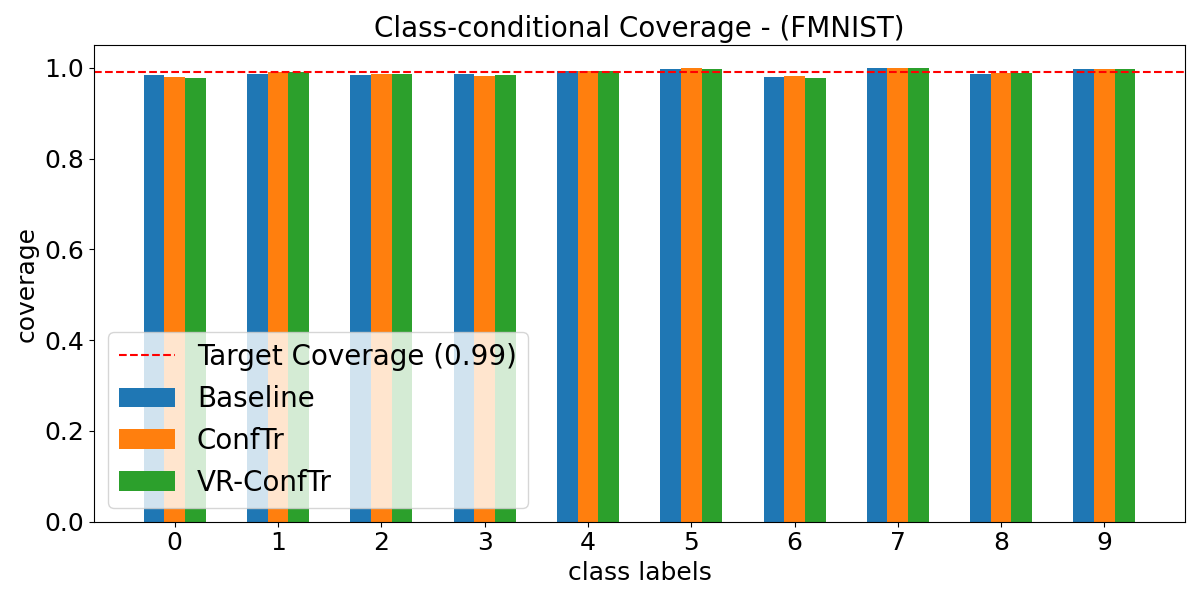}
        \caption{Class-Conditional Coverage (Fashion-MNIST)}
        \label{fig:fmnist_cov}
    \end{minipage}
    \hfill
    \begin{minipage}{0.48\textwidth}
        \centering
        \includegraphics[width=\linewidth]{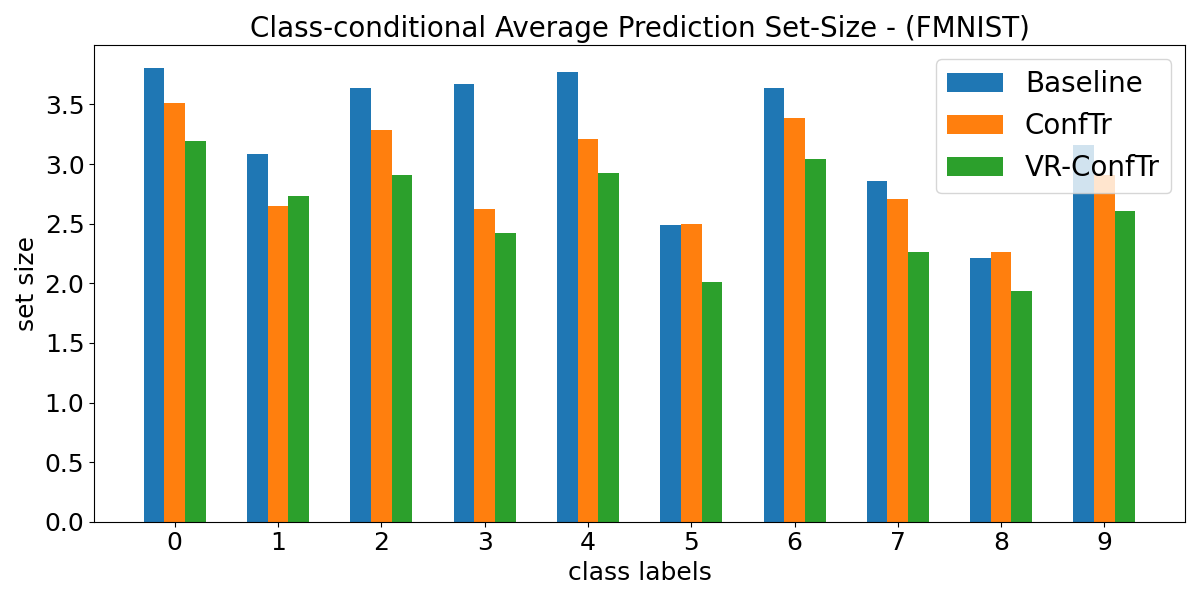}
        \caption{Class-Conditional Set Sizes (Fashion-MNIST)}
        \label{fig:fmnist_setsize}
    \end{minipage}
    
    \vspace{1em} 
    
    \begin{minipage}{0.48\textwidth}
        \centering
        \includegraphics[width=\linewidth]{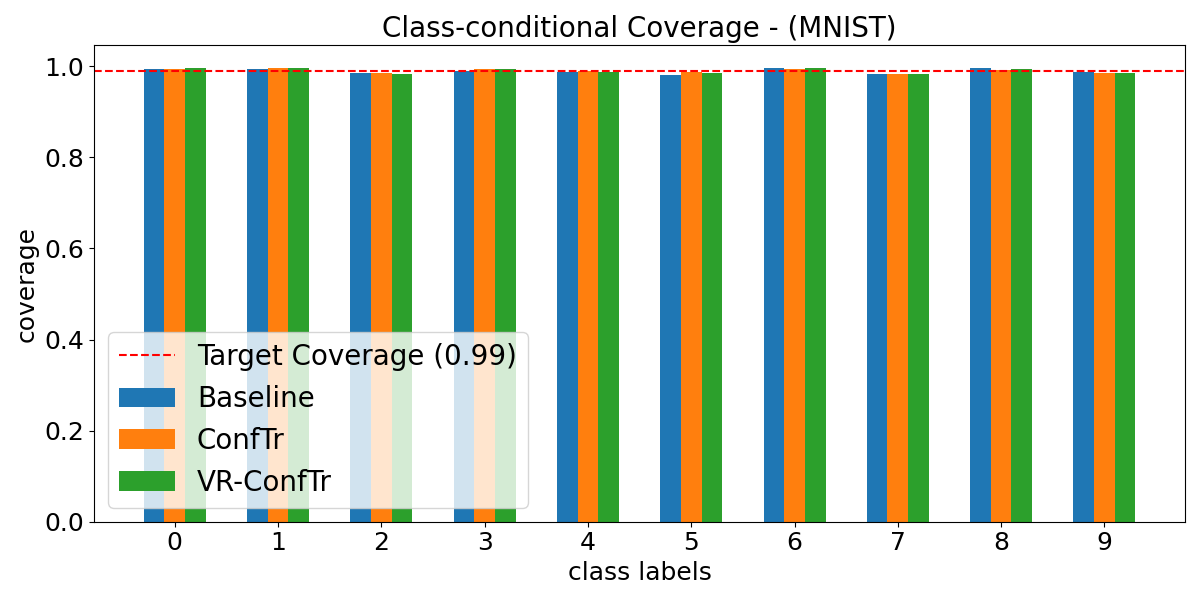}
        \caption{Class-Conditional Coverage (MNIST)}
        \label{fig:mnist_cov}
    \end{minipage}
    \hfill
    \begin{minipage}{0.48\textwidth}
        \centering
        \includegraphics[width=\linewidth]{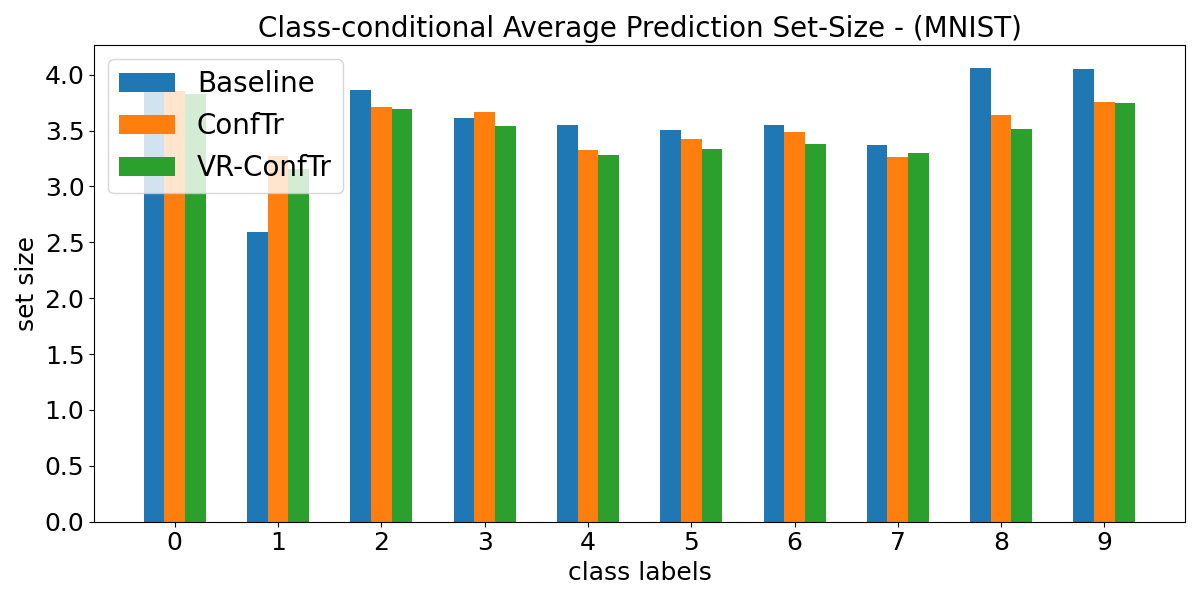}
        \caption{Class-Conditional Set Sizes (MNIST)}
        \label{fig:mnist_setsize}
    \end{minipage}
    
    \vspace{1em} 
    
    \begin{minipage}{0.48\textwidth}
        \centering
        \includegraphics[width=\linewidth]{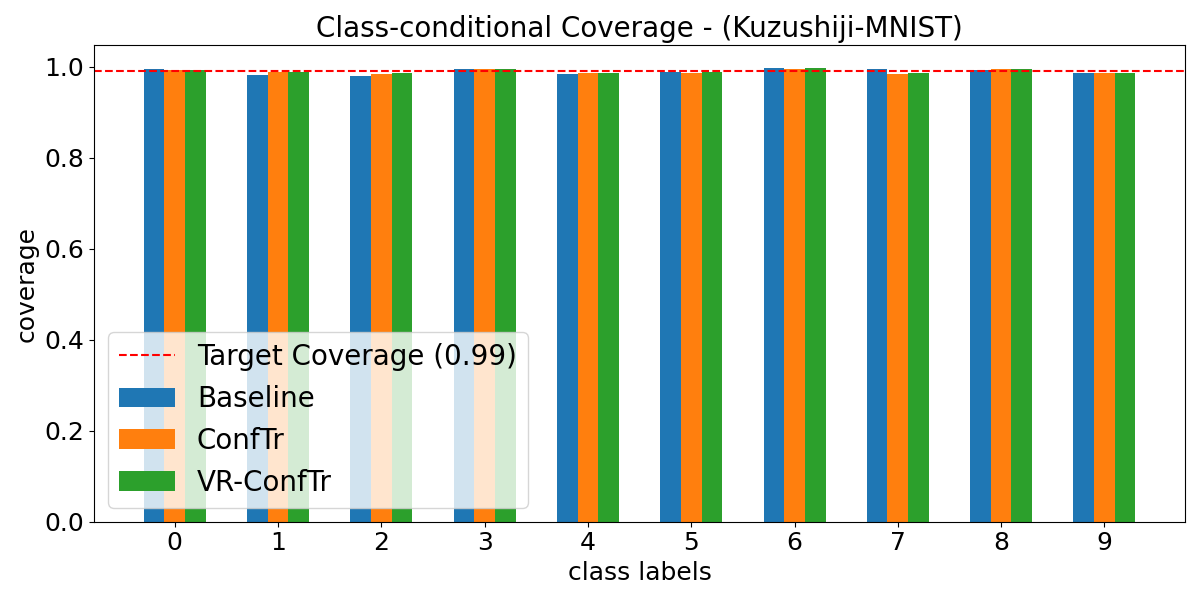}
        \caption{Class-Conditional Coverage (Kuzushiji-MNIST)}
        \label{fig:kmnist_cov}
    \end{minipage}
    \hfill
    \begin{minipage}{0.48\textwidth}
        \centering
        \includegraphics[width=\linewidth]{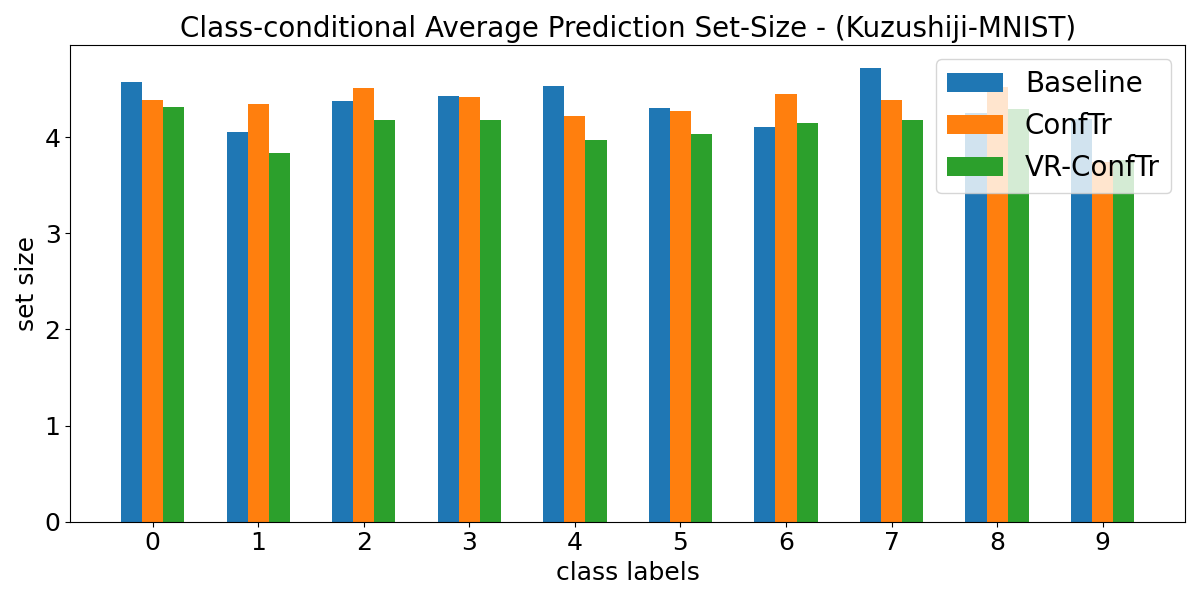}
        \caption{Class-Conditional Set Sizes (Kuzushiji-MNIST)}
        \label{fig:kmnist_setsize}
    \end{minipage}
    
    \vspace{1em} 
    
    \begin{minipage}{0.48\textwidth}
        \centering
        \includegraphics[width=\linewidth]{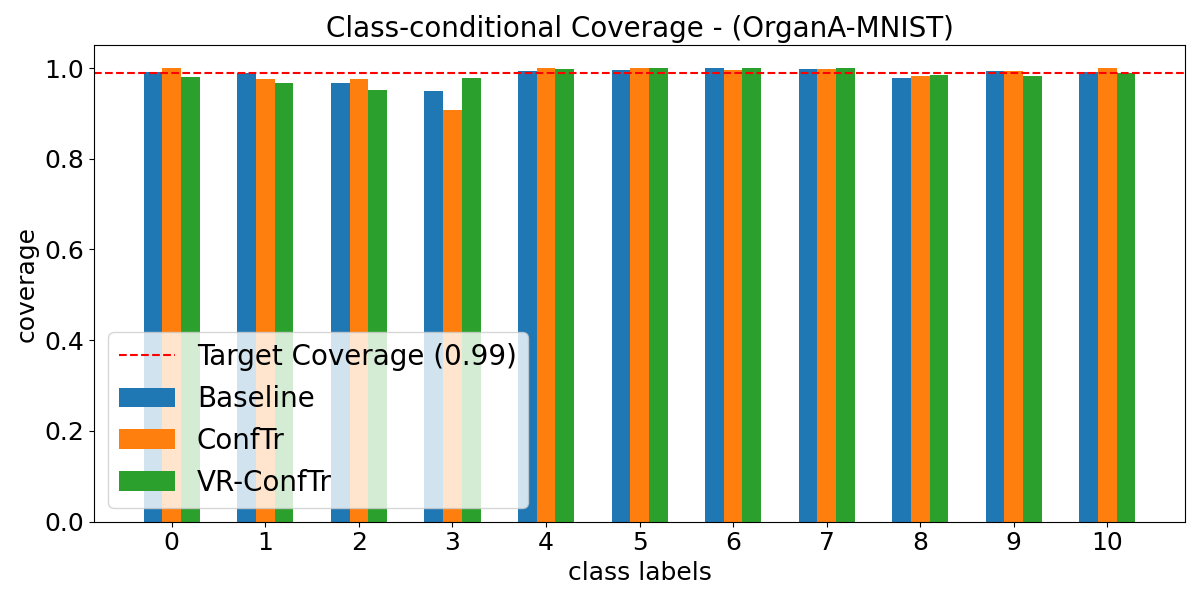}
        \caption{Class-Conditional Coverage (OrganA-MNIST)}
        \label{fig:organamnist_cov}
    \end{minipage}
    \hfill
    \begin{minipage}{0.48\textwidth}
        \centering
        \includegraphics[width=\linewidth]{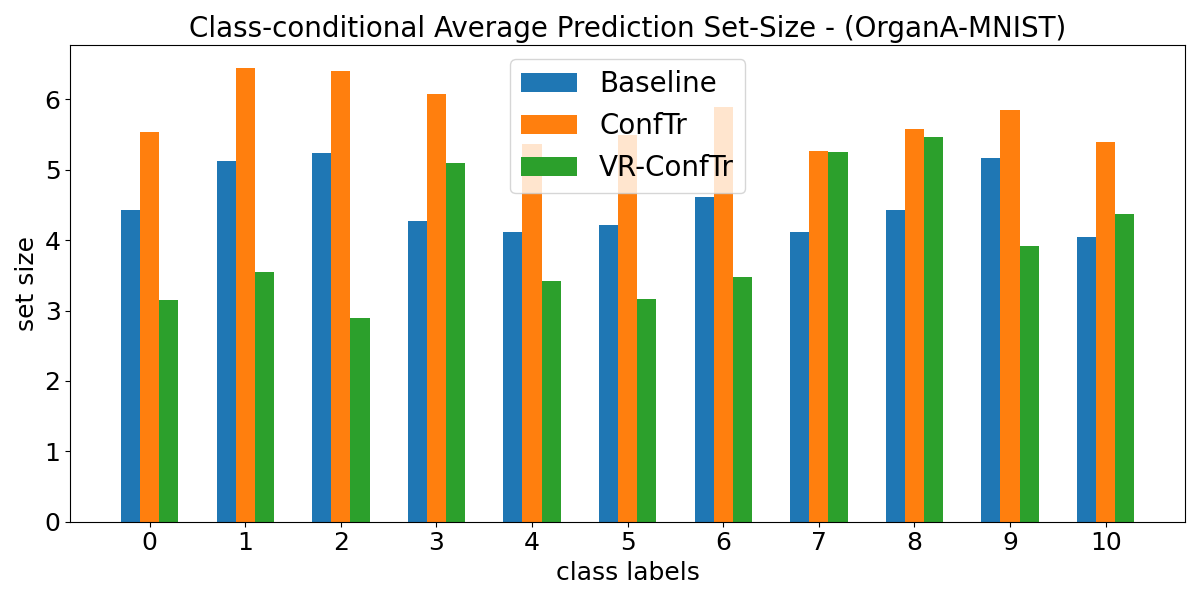}
        \caption{Class-Conditional Set Sizes (OrganA-MNIST)}
        \label{fig:organamnist_setsize}
    \end{minipage}
    
    \caption{Class-conditional coverage rates and average prediction set sizes for each dataset, reported over 10 randomized test trials. For each dataset, the left plot shows the class-conditional coverage rates with the target coverage level of $1 - \alpha = 0.99$ indicated by the horizontal red dashed line. The right plot shows the class-conditional average prediction set sizes.}
    \label{fig:combined_results}
\end{figure*}

\subsection{Tuning \texttt{VR-ConfTr}: Number of Points for Gradient Estimation (\texttt{m})}
In \texttt{VR-ConfTr}, the number of points ($m$) used in the $m$-ranking strategy plays a crucial role in the bias-variance trade-off. Consistent with the theory, increasing $m$ (which translates to increasing the threshold $\varepsilon$) reduces the variance but potentially increases the bias of the gradient estimate. We conduct a grid search over the values $[4,6,8,10,16,20]$ for $m$ and report the results of tuning $m$ for MNIST and Fashion MNIST. We select the value of $m$ that experimentally provides the best trade-off between bias and variance.
\textbf{MNIST Results:} We show in Fig~\ref{fig:mnist_m_tuning}, plots corresponding to the loss on the training loss, test loss, and the test accuracy per epoch. The results illustrate a consistent reduction in the variance of the gradient estimates as $m$ increases. However, once $m$ deviates from its best value, the bias of the gradient estimates increases, which results in higher values of the training loss as well as increase in the size of the prediction sets.

\begin{figure}[htp]
    \centering
    \includegraphics[width=0.33\linewidth]{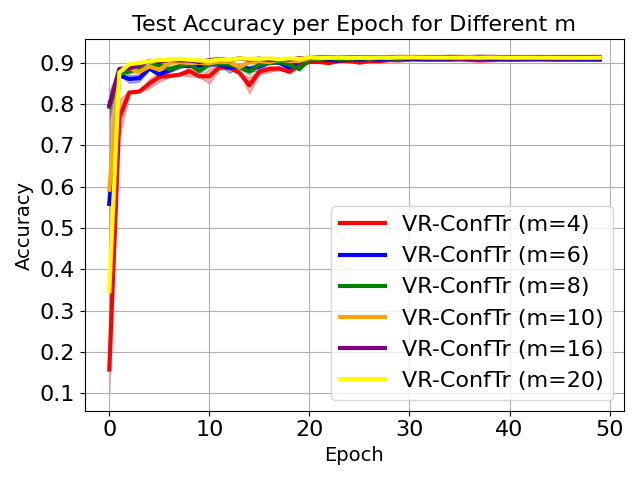}
    \includegraphics[width=0.33\linewidth]{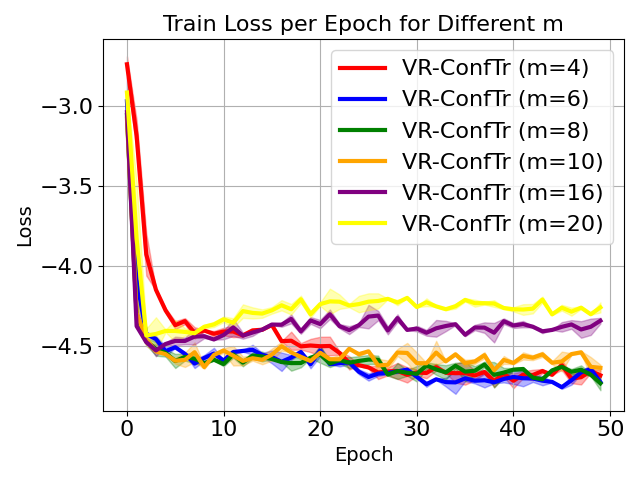}
    \includegraphics[width=0.33\linewidth]{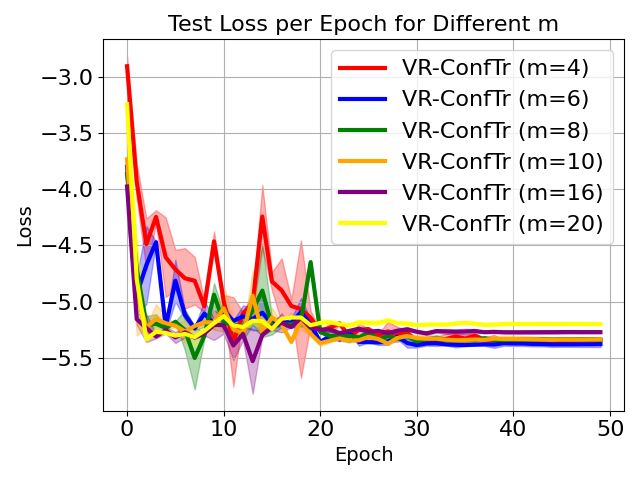}
    \includegraphics[width=0.33\linewidth]{figures/tuning_mnist_test_accuracy.png}
    \includegraphics[width=0.33\linewidth]{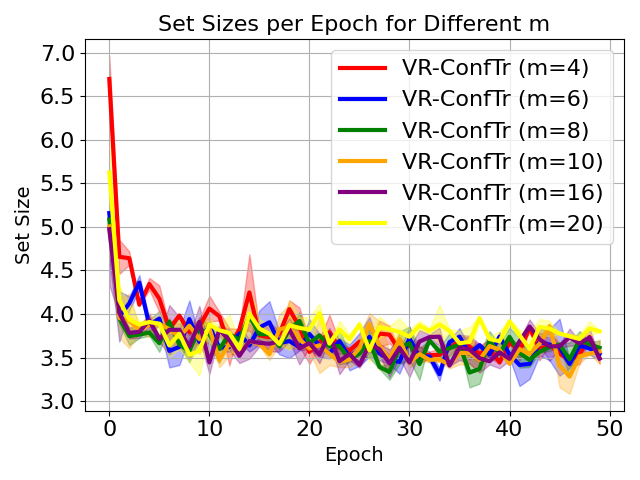}
    \caption{Training curves for different values of $m$ on MNIST}
    \label{fig:mnist_m_tuning}
\end{figure}
\textbf{Fashion-MNIST:} Similarly, tuning $m$ on Fashion-MNIST shows that a value of $m=6$ provides the best results, as depicted in Fig~\ref{fig:fmnist_m_tuning}
\begin{figure}[htp]
    \centering
    \includegraphics[width=0.33\linewidth]{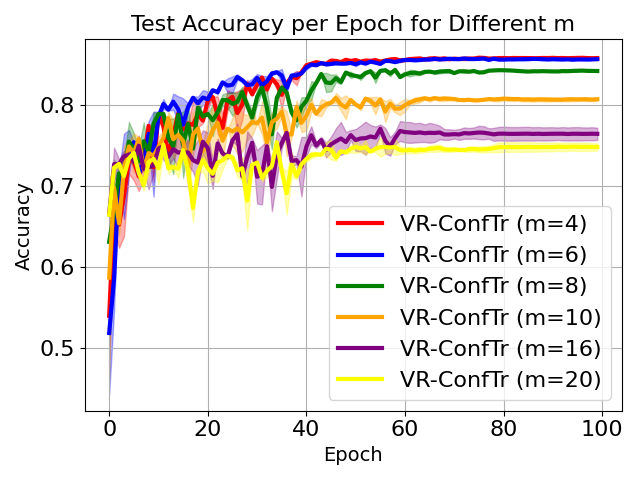}
    \includegraphics[width=0.33\linewidth]{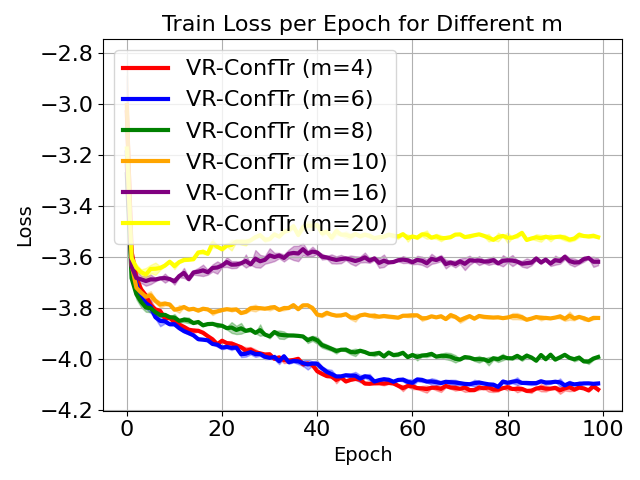}
    \includegraphics[width=0.33\linewidth]{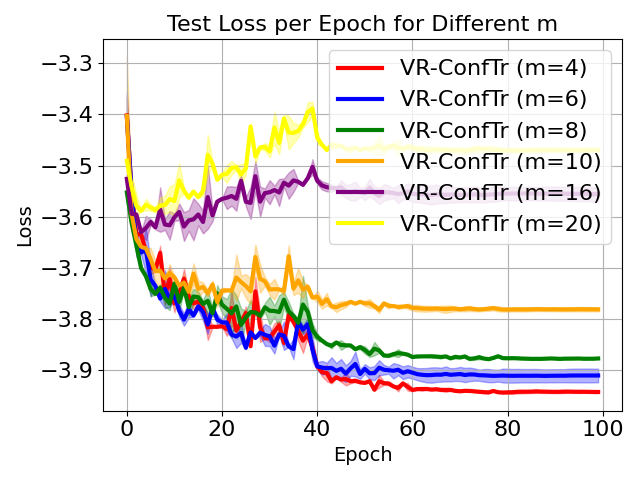}
    \includegraphics[width=0.33\linewidth]{figures/tuning_fmnist_test_accuracy.png}
    \includegraphics[width=0.33\linewidth]{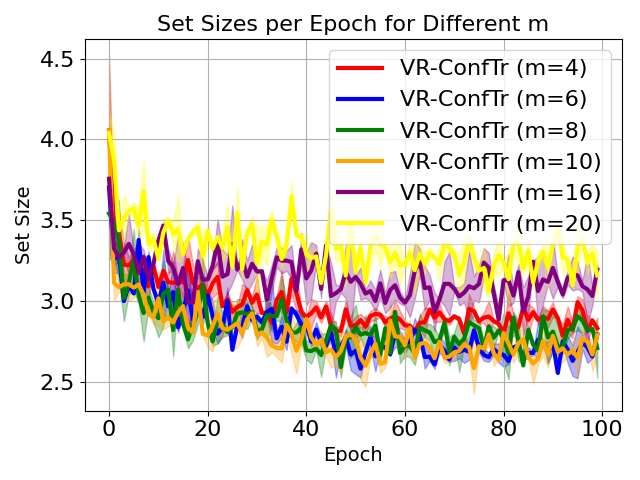}
    \caption{Learning curves for different values of $m$ on Fashion-MNIST}
    \label{fig:fmnist_m_tuning}
\end{figure}
\subsection{Alternative Architecture}
In this section, we compare the performance of \texttt{VR-ConfTr} on Kushuniji-MNIST using a simpler linear model architecture, different than the MLP used in the main paper. The results further reinforce that regardless of the model architecture, the trends observed in terms of convergence speed and prediction set efficiency remain consistent across datasets and architectures. Table \ref{tab:kmnist_eval_linear} shows the average accuracy and set sizes for the two different model architectures trained on K-MNIST.
\begin{table*}[h]
\centering
\begin{tabular}{@{}llccc@{}}
\toprule
\textbf{Dataset} & \textbf{Model Name} & \textbf{Accuracy (Avg ± Std)} & \textbf{Set Size (Avg ± Std)} \\ 
\midrule
\multirow{3}{*}{K-MNIST (Linear)} & Baseline & $0.695 \pm 0.007$ & $6.799 \pm 0.117$ \\ 
& ConfTr   & $0.582 \pm 0.047$ & $6.646 \pm 0.226$ \\ 
& VR-ConfTr  & $0.612 \pm 0.033$ & $6.488 \pm 0.148$ \\ 
\cmidrule{2-4}
\multirow{3}{*}{K-MNIST (MLP)} & Baseline & $0.872 \pm 0.046$ & $4.982 \pm 0.530$ \\ 
& ConfTr   & $0.783 \pm 0.125$ & $4.762 \pm 0.226$ \\ 
& VR-ConfTr  & $0.835 \pm 0.098$ & $4.657 \pm 0.680$ \\ 
\bottomrule
\end{tabular}
\caption{Evaluation results of the KMNIST dataset trained with different model architectures. Columns present average accuracy and set size with their standard deviations (\textbf{Avg ± Std}).}
\label{tab:kmnist_eval_linear}
\end{table*}

\newpage
\section{Experimental Details}\label{appendix:experiments}
In this section we describe the experimental setup, including model architectures, dataset configurations, training protocol, testing procedure, and the corresponding hyper-parameters. The focus of the experiments is on evaluating the CP set sizes during training, convergence speed, and accuracy while ensuring a fair comparison between \texttt{ConfTr} and our proposed \texttt{VR-ConfTr}. 
\subsection{Dataset Configurations}
We consider the benchmark datasets MNIST \cite{lecun1998gradient}, Fashion-MNIST \cite{xiao2017fashion}, Kuzushiji-MNIST \cite{clanuwat2018deep} and OrganAMNIST \cite{medmnistv2}  and CIFAR-10 \cite{krizhevsky2009learning}. MNIST is a dataset of handwritten digits with 10 classes, and Fashion-MNIST consists of 10 fashion product categories. Kuzushiji-MNIST extends the MNIST paradigm by incorporating 10 classes of cursive Japanese characters. OrganAMNIST, derived from medical images, contains 11 classes of abdominal organ slices.  CIFAR-10 is a dataset of natural images with 10 object categories. The training, calibration, and testing splits for each dataset are summarized in Table \ref{tab:dataset_config}. MNIST and Fashion-MNIST are provided by the torchvision library, while Kuzushiji-MNIST and OrganAMNIST, and CIFAR-10 are available from their respective repositories. For MNIST, Fashion-MNIST, Kuzushiji-MNIST, and CIFAR-10 10\% of the training set is reserved as calibration data. For OrganAMNIST, the validation set is used as the calibration data. During evaluation, we combine the calibration and test data and perform evaluations over 10 random splits of the combined dataset into calibration/test partitions. Model parameters are learned exclusively on the training data, while calibration and test data are used to evaluate the model as a black-box at the end of each epoch. The transformations applied to the dataset are as follows: for MNIST, Fashion-MNIST, and Kuzushiji-MNIST, images are normalized to have zero mean and unit variance, using a mean of 0.5 and a standard deviation of 0.5. For OrganAMNIST, images undergo random horizontal flips, random rotations of up to 15 degrees, and are normalized similarly. for CIFAR-10, we use random resizing, horizontal flips, and normalization as data augmentations. 
\begin{table*}[!h]
\centering
\begin{tabularx}{\textwidth}{@{}lXXXXXX@{}}
\toprule
\textbf{Dataset} & \textbf{Classes} & \textbf{Image Size} & \textbf{Training Set} & \textbf{Calibration Set} & \textbf{Test Set}  \\ \midrule
MNIST            & 10               & $28 \times 28$      & 55,000                & 5,000                    & 10,000                 \\
Fashion-MNIST    & 10               & $28 \times 28$      & 55,000                & 5,000                    & 10,000                 \\
OrganMNIST       & 11               & $28 \times 28$      & 34,561                & 6,491                    & 17,778                 \\
Kuzushiji-MNIST  & 10               & $28 \times 28$      & 55,000                & 5,000                    & 10,000                 \\ 
CIFAR-10         & 10               & $32 \times 32$      & 45,000                & 5,000                    & 10,000                 \\ \bottomrule
\end{tabularx}
\caption{Dataset Splits}
\label{tab:dataset_config}
\end{table*}
\subsection{Model Architectures}
In our experiments, we implemented all models using JAX \cite{jax2018github}. We utilize a range of architectures including linear models, multi-layer perceptrons (MLPs), and modified ResNet architectures tailored for specific datasets. For the \textbf{MNIST} dataset, we employ a simple linear model, which consists of a single dense layer. The input images, reshaped from $28 \times 28$ into a flattened vector of size $784$, are passed through a fully connected layer mapping the inputs directly to the $10$ output classes. For \textbf{Fashion-MNIST}, we use a multi-layer perceptron (MLP), with two hidden layers. We use 64 units per hidden layer, with ReLU activations \cite{nair2010relu} , followed by a dense layer for the $10$ output classes. For \textbf{Kuzushiji-MNIST}, we utilize a similar MLP architecture. The model contains two hidden layers with 256 and 128 units, respectively. The input data is flattened and passed through these fully connected layers with ReLU activations. For \textbf{OrganAMNIST}, we used a residual network, inspired by the ResNet architecture from \cite{he2016resnet}
, with modifications. The model consists of an initial convolutional layer followed by four stages of residual blocks, each with two layers. Each residual block uses $ 3 \times 3$ convolutions with ReLU activations. The number of output channels doubles after each state $(64, 128, 256, 512).$ Global average pooling is applied before the final fully connected layer, which maps the pooled feature representations to the 11 output classes. For \textbf{CIFAR-10}, we use a ResNet20 architecture, a lightweight version of ResNet~\cite{he2016resnet} with 20 layers. Particularly, we use a pretrained ResNet20 model trained with cross-entropy loss. The last linear layer of the model is reinitialized and then fine-tuned using the \texttt{ConfTr} and \texttt{VR-ConfTr} algorithms. We do not attempt to optimize the model architectures in order to solve the datasets with high accuracy. Instead, we focus on the conformal prediction results, and ensure that the architecture used across different algorithms are identical for a fair comparison.

\subsection{Training Details}
Similar to \cite{stutz2022learning}, we trained all models using Stochastic Gradient Descent (SGD) with Nesterov momentum \cite{sutskever2013momentum}. The learning rate follows a multi-step schedule where the initial learning rate was decreased by a factor of 0.1 after 2/5, 3/5, and 4/5 of the total number of epochs. The models were trained using cross-entropy-loss for \texttt{Baseline} training, and for \texttt{ConfTr} and \texttt{VR-ConfTr} based on the size-loss as described by \cite{stutz2022learning}. During training, for MNIST, FMNIST, KMNIST, and OrganAMNIST we set the conformal prediction threshold parameter $\alpha = 0.01$. For finetuning CIFAR-10, we use $\alpha = 0.1$, and a weight decay term of 0.0005 for the optimizer. To ensure statistical robustness, we conducted multiple randomized training trials for each dataset, using a different random seed for each trial. Specifically, we performed 10 training trials for MNIST and 5 training trials each for FMNIST, KMNIST, OrganAMNIST, and CIFAR-10. The corresponding learning curves, i.e the training loss, testing loss, accuracy and CP set sizes evaluated on the test data at the end of every epoch, were averaged over these randomized trials to provide a smooth and general view of the model's performance.
The key hyper-parameters used for training are listed in Table~\ref{tab:hyperparams}. These hyper-parameters include \textbf{size weight} which scales the loss term associated with the size of the CP sets during training, \textbf{alpha $\alpha$} corresponding to the miscoverage rate, \textbf{batch size} for SGD, \textbf{learning rate } for the optimizer, and the number of \textbf{epochs} for which the model is trained for. 
\begin{table}[htbp]
    \centering
    \begin{tabularx}{\textwidth}{lXXXXX}
        \toprule
        \textbf{Hyper-parameter} & \textbf{MNIST} & \textbf{Fashion-MNIST} & \textbf{Kuzushiji-MNIST} & \textbf{OrganA-MNIST} & \textbf{CIFAR-10} \\
        \midrule
        Batch Size               & 500            & 500                    & 500                      & 500                   & 500              \\
        Training Epochs          & 50             & 150                    & 100                      & 100                   & 50               \\
        Learning Rate            & 0.05           & 0.01                   & 0.01                     & 0.01                  & 0.01              \\
        Optimizer                & SGD            & SGD                    & SGD                      & SGD                   & SGD \\
        Temperature              & 0.5            & 0.1                    & 0.1                      & 0.5                   & 1              \\
        Target Set Size          & 1              & 0                      & 1                        & 1                     & 0                \\
        Regularizer Weight       & 0.0005         & 0.0005                 & 0.0005                   & 0.0005                & 0.0005           \\
        Size Weight              & 0.01           & 0.01                   & 0.01                     & 0.1                   & 0.05              \\
        $\alpha$                 & 0.01           & 0.01                   & 0.01                     & 0.01                  & 0.1              \\
        $m$-rank                 & 6              & 6                      & 4                        & 4                     & 6                \\
        \bottomrule
    \end{tabularx}
    \caption{Training and evaluation hyper-parameters for each dataset.}
    \label{tab:hyperparams}
\end{table}

\subsection{Evaluation Details}
The evaluation of our models was conducted in two stages: (1) computing the test accuracy for each model after training, and (2) evaluating the conformal prediction (CP) set sizes and coverage over multiple test and calibration splits.
\textbf{Test Accuracy:} For each dataset, the test accuracy of the trained models was evaluated on the test data, and the results were averaged over the randomized training trials.
\textbf{CP set sizes} We first combine the holdout calibration and test data. We then randomly split this combined data into calibration and test portions, repeating the process 10 times. For each split, we apply the CP \texttt{THR} algorithm with $\alpha$ consistent with the value during training, and compute the CP set sizes on the test portion. The results are averaged across the 10 random splits. The cardinality of each split is consistent with the dataset configurations outlined in Table~\ref{tab:dataset_config}. This procedure is performed for each trained model, and the final reported results are averaged across both the training trials and testing splits.

\subsection{Differences from ConfTr reports}
We report the performance of \texttt{Conftr} with a batch size of 100 for Fashion-MNIST, as originally reported by \cite{stutz2022learning}, selected for optimal performance. While a batch size of 500 yields smaller set sizes, it results in a slight (~1\%) decrease in accuracy. For completeness, we include the results for both configurations. 
\begin{table*}[htp]
    \centering
    \begin{tabular}{lrrcc}
        \toprule
        Model & Batch Size & Accuracy (Avg ± Std) & Set Size (Avg ± Std) \\
        \midrule
        ConfTr & 100 & $0.809 \pm 0.051$ & $3.125 \pm 0.197$ \\
        ConfTr & 500 & $0.799 \pm 0.065$ & $3.048 \pm 0.201$ \\
        VR-ConfTr & 500 & $0.839 \pm 0.043$ & $2.795 \pm 0.154$ \\
        \bottomrule
    \end{tabular}
    \caption{Final evaluation results for Fashion-MNIST, showing average accuracy and set size with their standard deviations (\textbf{Avg ± Std}).}
\end{table*}

\textbf{Retrieving exact reported set sizes as \cite{stutz2022learning}}:
Our experimental results and trends align with those reported in \cite{stutz2022learning}. However, the smaller set sizes for \texttt{Conftr} on MNIST and FMNIST in their paper are likely due to their use of different architectures. Despite this, the overall trends— \texttt{Conftr} outperforming \texttt{Baseline}, and \texttt{VR-Conftr} outperforming \texttt{Conftr}—remain consistent regardless of the model. Our focus is on a fair comparison across algorithms by using the same architecture, rather than reproducing the exact figures or architectures from \cite{stutz2022learning}.

\newpage
\section{On the Computational Complexity of~\texttt{VR-ConfTr}.}
We will now discuss the computational complexity of~\texttt{VR-ConfTr} when compared to~\texttt{ConfTr}. We will argue that the computational complexity of the two algorithms is essentially the same. We start by breaking down the computational cost of \texttt{ConfTr} and then illustrate the difference with \texttt{VR-ConfTr}. 
\newline
\textbf{Per-step computational complexity of \texttt{ConfTr}.} Given a batch and partition $B = \{B_{\text{cal}}, B_{\text{pred}}\}$, with $|B_{\text{cal}}| = |B_{\text{pred}}| = n$, the first step of \texttt{ConfTr} is to compute a sample $\alpha$ quantile $\hat{\tau}(\theta)$ based on the calibration batch $B_{\text{cal}} = \{X_i^{\text{cal}}, Y_i^{\text{cal}}\}_{i=1}^n$, which requires the computation of the calibration batch conformity scores $\{E_{\theta}(X_i^{\text{cal}}, Y_i^{\text{cal}})\}_{i = 1}^n$ and of their $\alpha$-quantile. At this point, the computation of the \texttt{ConfTr} gradient is performed computing the gradient of the loss 
\begin{equation}\label{eq:conftrCost}
\frac{1}{|B_{\text{pred}}|}\sum_{(x, y)\in B_{\text{pred}}}\ell(\theta,\hat{\tau}(\theta), x, y).
\end{equation}
Note that for each sample $(x, y)$, computing the \texttt{ConfTr} gradient implies computing the following (equation~\eqref{eq:eq8} in the main paper):
\begin{equation}\label{eq:conftrgrad}
    \begin{aligned}
        \frac{\partial}{\partial \theta} \left[\ell(\theta,\hat{\tau}(\theta), x, y)\right] 
        = \frac{\partial \ell}{\partial \theta}(\theta, \hat{\tau}(\theta), x, y)
        + \frac{\partial \ell}{\partial\hat{\tau}}(\theta, \hat{\tau}(\theta), x, y) \cdot \frac{\partial\hat{\tau}}{\partial \theta}(\theta)
    \end{aligned}
\end{equation}

Note that computing this gradient requires computing (i) the gradients $\frac{\partial \ell}{\partial \theta}(\theta, \hat{\tau}(\theta), x, y)$ and $\frac{\partial \ell}{\partial\tau}(\theta, \hat{\tau}(\theta), x, y)$ for all samples $(x,y)\in B_{\text{cal}}$, and (ii) the gradient $\frac{\partial\hat{\tau}}{\partial \theta}(\theta)$. The difference in terms of computational complexity between \texttt{ConfTr} and our proposed \texttt{VR-ConfTr} lies in the computation of estimates of $\frac{\partial{\tau}}{\partial \theta}(\theta)$, which in \texttt{ConfTr} is done via computing the gradient of $\hat{\tau}(\theta)$, while in our algorithm is done plugging an improved estimate $\widehat{\frac{\partial{\tau}}{\partial \theta}}(\theta)$. We describe the computational difference between these two approaches in the next paragraph. 
\newline
\newline
\textbf{Per-step computational complexity of \texttt{VR-ConfTr}.} Note that in our proposed algorithm \texttt{VR-ConfTr}, given a batch $B$ defined as above, we consider the same per-step loss function of \texttt{ConfTr} of equation~\eqref{eq:conftrCost}. However, instead of computing directly the gradient of~\eqref{eq:conftrCost}, we compute separately an estimate $\widehat{\frac{\partial \tau}{\partial \theta}}(\theta)$ of $\frac{\partial \tau}{\partial \theta}(\theta)$ using our novel estimation technique and then plug this estimate in equation~\eqref{eq:conftrgrad} in place of $\frac{\partial\hat{\tau}}{\partial \theta}(\theta)$. In the proposed estimator, computing $\widehat{\frac{\partial \tau}{\partial \theta}}(\theta)$ equals computing gradients $\{\frac{\partial E}{\partial \theta}(\theta, x,y)\}_{(x,y)\in \Bar{B}}$, where $\Bar{B}$ is the set containing the $m$ samples whose conformity scores fall within $\epsilon$ distance from the sample quantile $\hat{\tau}(\theta)$.
Note that, computationally, our algorithm requires computing $\frac{\partial \ell}{\partial \theta}(\theta, \hat{\tau}(\theta), x, y)$ and $\frac{\partial \ell}{\partial\tau}(\theta, \hat{\tau}(\theta), x, y)$, which is the same as \texttt{ConfTr}, while we do not need to compute the gradient $\frac{\partial \hat{\tau}}{\partial \theta}(\theta)$. Instead, we replace the computation of the gradient of $\hat{\tau}(\theta)$ with the computation of an average of $m$ gradients of conformity scores. 
%
In conclusion, the main computational difference between \texttt{ConfTr} and \texttt{VR-ConfTr} is in the computation of the estimate of $\frac{\partial \tau}{\partial \theta}(\theta)$, which for both of the techniques boils down to computing and averaging a certain set of conformity scores. This is why we can safely conclude that the computational complexity of the two algorithms is essentially the same.

\end{document}